\begin{document}
\title{Privacy in Multi-armed Bandits:\\ Fundamental Definitions and Lower Bounds}
\author{Debabrota Basu, Christos Dimitrakakis, \and Aristide Y. Tossou\\
\addr{Chalmers University of Technology, Sweden}}

%\editor{}
\maketitle

\begin{abstract}
Based on differential privacy (DP) framework, we introduce and unify privacy definitions for the multi-armed bandit algorithms. We represent the framework with a unified graphical model and use it to connect privacy definitions. We derive and contrast lower bounds on the regret of bandit algorithms satisfying these definitions. We leverage a unified proving technique to achieve all the lower bounds. We show that for all of them, the learner's regret is increased by a multiplicative factor dependent on the privacy level $\epsilon$. We observe that the dependency is weaker when we do not require local differential privacy for the rewards. 
\end{abstract}

%\begin{keywords}
%	ABC
%\end{keywords}

\section{Introduction}
\emph{Differential Privacy} (DP) is a rigorous and highly successful definition of algorithmic privacy introduced by~\citet{dwork2006calibrating}. Given a definition of neighbourhood for inputs to an algorithm, it defines a notion of privacy that ensures that the algorithm's output renders neighbouring inputs indistinguishable.
Informally, if an algorithm is $\epsilon$-DP, then the amount of information inferred by an adversary about the algorithm's input is bounded by $\epsilon$. 
\begin{definition}[$\epsilon$-Differential Privacy]\label{def:GDP}
	A privacy-preserving mechanism $\mech$ composed with a randomised algorithm $\pol : \mathcal{D} \to \mathcal{A}$ is $\epsilon$-differentially private if for all
	inputs $x, x' \in \mathcal{D}$ with $\lp{x}{x'}{H}= 1$:
	$$\pol(A \mid x) \leq \pol(A \mid x') e^\epsilon, \qquad A \subset \mathcal{A}.$$
\end{definition}
Our notation suggests a view of the algorithm $\pol$  as a conditional distribution. Typically, the inputs to the algorithm are structured so that $x = (x_1, \ldots, x_n)$ corresponds to the data of $n$ individuals, with $x_i$ being the data of the $i$-th individual. $\lp{x}{x'}{H} = 1$ if $x_i \neq x'_i$ for an $i \in [n]$.

However, in many online learning settings, the algorithm might not see all of the data before making a decision, and in partial monitoring problems, where the algorithm might be unable to see all of the data before making a decision. In these cases, Definition~\ref{def:GDP} is not trivial to apply. Thus, different works analysing the privacy of algorithms in these domains adopt subtly different privacy definitions. Our aim is to fix precise definitions and to provide lower bounds on learning while satisfying privacy under each definition.

We consider the \emph{multi-armed bandit} setting~\citep{bellman1956problem,lattimore2018bandit} in particular. This involves an algorithm $\pol$ sequentially choosing among $K$ different arms to maximise expected cumulative reward. At time $t$, the algorithm  \emph{draws an arm} $a_t \in \Act$ and the arms \emph{generate outcomes} $\bx_t = [x_{t,j}]_{j=1}^K \in \real^K$. Then the learner \emph{obtains a reward} $r_t = x_{t,a_t}$ which is equal to the outcome of the selected arm\footnote{More generally, we can consider a reward function $r_t = u(\bx_t, a_t)$.} without observing the other arms. As a motivating example, consider the case of patients arriving sequentially and the doctor assigning different treatments $a_t$. The doctor only sees the effect of the selected treatment, and there is a need to maintain patient's confidentiality.

The natural way to conceive differential privacy is to perceive the algorithm as a Turing machine with input tape $\bx = \bx_1, \bx_2, \ldots$, output tape $a = a_1, a_2, \ldots$, and a random input tape $\omega = \omega_1, \omega_2$ in order to admit stochastic decision making\footnote{We often denote a sequence $[s_1, \ldots, s_t, \ldots, s_T]$ as $s^T$ and $[s_1, \ldots, s'_t, \ldots, s_T]$ as $s'^T$.}. A bandit algorithm $\pol$ will satisfy differential privacy if, for any decision horizon $T$ and privacy level $\epsilon > 0$
\begin{align*}
  \pol(a_1, \ldots, a_t \mid \bx_1, \ldots, \bx_k, \ldots, \bx_T)
  %\\
 \leq &\pol(a_1, \ldots, a_t \mid \bx_1, \ldots, \bx'_k, \ldots,  \bx_T) e^\epsilon.
\end{align*}
One may reasonably ask why should we not consider a bandit algorithm differentially private only with respect to the observed rewards, i.e.
\begin{align*}
  \pol(a_1, \ldots, a_t \mid r_1, \ldots, r_k, \ldots, r_T)
	\leq &\pol(a_1, \ldots, a_t \mid r_1, \ldots, r'_k, \ldots,  r_T) e^\epsilon.
\end{align*}
As we show later, the two definitions are equivalent in the bandit setting. However, the input to the algorithm is the outcomes generated by the arms, rather than the observed rewards, and since we cannot know a priori whether an arm will be chosen at a particular time. Similarly, we cannot know which sections of the input tape a Turing machine will see. Thus, the generated outcomes are a more suitable definition for differential privacy. 

This brings us to the question: what are the mechanisms to use to guarantee privacy? One popular approach for achieving DP is the \emph{local privacy}~\cite{duchi2013local}. This creates a differentially private version $\by$ of the input $\bx$, such that $\prob(\by \mid \bx) \leq \prob(\by \mid \bx') e^\epsilon$.
The private rewards $\by$ is then used by the algorithm $\pi$ to make decisions. 
In our paper, we examine the efficiency loss i.e. regret relative to algorithms both for the local and non-local mechanisms.

\paragraph{Our Contribution.} %The rest of this paper is organised as follows. Section~\ref{sec:multi-armed-bandit} introduces the multi-armed bandit problem. In Section~\ref{sec:dp_mab}, 
We define, discuss, and unify different notions of differential privacy (Section~\ref{sec:dp_mab}).  In
particular, we examine the effect of considering different notions of private input, observable output and neighbourhoods on the regret of multi-armed bandits that are operating under a differential privacy
constraint. 
We also discuss mechanisms to achieve differential privacy: both local and non-local (Section~\ref{sec:dp_mech}). We illustrate the differences between those definitions using graphical models. 
%This graphical model definition also invokes a new notion of privacy in bandits.

We provide a unified framework to prove minimax lower bounds on the regret of both differentially private multi-armed bandits (Section~\ref{sec:theory}).
This is based on a generalised KL-divergence decomposition lemma adapted for local and standard differential privacy definitions. Though the literature consists of problem-dependent regret bounds, these are the first minimax and Bayesian regret bounds for both differentially private bandits. We show that both in general and when differential privacy is achieved using a local mechanism, the regret scales as a multiplicative factor of $\epsilon$. As expected, local privacy mechanisms have a slightly worse performance. 
The regret lower bounds that we prove and the corresponding privacy definitions are summarised in Table~\ref{tab:contributions}.
%Note that our results do not contradict the upper bound of \citet{tossou2016algorithms}, since they achieve an additive regret loss with respect to instantaneous privacy, rather than the stricter sequential privacy definition that a constant privacy loss can be achieved with only an additive term on the regret cannot be true.

In Section~\ref{sec:multi-armed-bandit}, we provide the required technicalities of multi-armed bandit problems. In Section~\ref{sec:lbounds}, we elaborate that the proposed lower bounds pose several open problems of designing optimal bandit algorithms that satisfy different notions of privacy. 

Appendix~\ref{sec:proofs_def} and ~\ref{sec:proofs_lb} contain detailed proofs.

%\cdcomment{Why is there a ? in the DP case? Isn't IDP equivalent to DP?}
\begin{table*}[t!]
\centering
\scalebox{0.7}{\begin{tabular}{c|c|c|c|c}
		\hline 
		& Definition & Minimax  & Bayesian Minimax & Problem-dependent \\
		& for Bandits & Regret &  Regret &  Regret \\  
		\hline 
		& & & & \\
		DP & $\ln \abs{\dfrac{\prob_\pol(a^T \mid \bx^T)}{\prob_\pol(a^T \mid \bx'^T)}} \leq \epsilon$, given $T$ & $\sqrt{\dfrac{(K-1)T \ln(\epsilon^2+1) }{e^{6\epsilon} \epsilon^{(1+\frac{1}{\epsilon})}(\epsilon+B)^{\frac{1}{\epsilon}}}}$ & $\sqrt{\dfrac{(K-1)T \ln(\epsilon^2+1) }{e^{6\epsilon} \epsilon^{(1+\frac{1}{\epsilon})}(\epsilon+B)^{\frac{1}{\epsilon}}}}$ & Open Problem \\ 
		& & & & \\
%		& for a given $T > 0$ &  &  &  \\ 
		%\hline 
		Instantaneous DP & $\ln \abs{\dfrac{\prob_\pol(a_t \mid a^{t-1}, \bx^{t-1})}{\prob_\pol(a_t \mid a^{t-1}, \bx'^{t-1})}} \leq \epsilon, \forall t \leq T$ & $\sqrt{\dfrac{(K-1)T}{2\epsilon(e^{2\epsilon}-1)}}$ & $\sqrt{\dfrac{(K-1)T}{2\epsilon(e^{2\epsilon}-1)}}$ & Open Problem  \\ 
		%& for any $t \leq T$ &  &  &  \\ 
		& & & & \\
		%\hline 
		Local DP & $\ln \abs{\dfrac{\prob(\bz^{t}\mid r^{t})}{\prob(\bz^{t} \mid r'^{t})}} \leq \epsilon, \forall t \leq T$  & $\dfrac{\sqrt{(K-1)T}}{\min\lbrace 2,e^{\epsilon}\rbrace(e^{\epsilon}-1)}$ & $\dfrac{\sqrt{(K-1)T}}{\min\lbrace 2,e^{\epsilon}\rbrace(e^{\epsilon}-1)}$ & $\dfrac{c(\model) \log T}{2\min\lbrace4,e^{2\epsilon}\rbrace(e^{\epsilon}-1)^2}$   \\ 
		\hline 
%		EDP & $\ln\left({\left |\frac{\prob_{\pol\model}\left(\lbrace (a_i, \br_i) \rbrace_{i=1}^T \mid \model\right)}{\prob_{\pol\model'}\left(\lbrace (a_i, \br_i)\rbrace_{i=1}^T \mid \model'\right)}\right|}\right) \leq \epsilon\rho(\model,\model')$ & ? & ?   \\ 
%		\hline 
%		ADP & $\prob\{\ln \abs{\frac{\prob_\pol(a^T \mid \bx^T)}{\prob_\pol(a^T \mid \bx'^T)}} \leq \epsilon\} \geq 1- \delta$, given $T$ &  &  &  \\ 
%		\hline 
	\end{tabular} }
	\caption{Privacy definitions for multi-armed bandits and corresponding regret lower bounds}\label{tab:contributions}
\end{table*}

\section{Multi-armed Bandits}\label{sec:multi-armed-bandit}
The stochastic $K$-armed bandit problem~\citep{bellman1956problem,lattimore2018bandit} involves a learner sequentially choosing among $K$ different arms. At time $t$, the algorithm  \emph{draws an arm} $a_t  \in \Act$ and the arms \emph{generate outcomes} $\bx_t = [x_{t,j}]_{j=1}^K \in \real^K$. Then the learner \emph{obtains reward} $r_t = x_{t,a_t}$, without observing the other arms.

In the \emph{stochastic} setting, the learner is acting within an environment $\model$, with each action generating outcomes with distribution $f_a$ and means $\mu_a \defn \expect(f_a)$, and optimal expected reward $\mu^* \defn \max_a \mu_a$ so that the reward distribution for each arm is $\prob_\model(X_t = r_{t,a}) = f_a$ for all $t$. 

The learner's policy $\pol$ for selecting actions is generally a stochastic mapping $\pol : \Hist \to \dist(\Act)$. Here $\Hist$ is the \emph{observed history}, i.e. the sequence of actions taken and rewards obtained by the learner. 
\ifdefined \workshop
\
\else
The objective is to \emph{maximise the expected cumulative reward}, 
\begin{align}
S(\pol, \model, T) \defn 
\sum_{t=1}^T \mathbb{E}_{\model \pol} [r_t]
= \sum_{a=1}^K \mathbb{E}_{\model \pol} \left[N_a(T)\right]\mu_a,
\end{align}
where $N_a(T)$ denotes the number of time arm $a$ is pulled till time step $T$ and $\mu_a$ is the expected reward of the arm $a$.
\fi

The quality of a learning algorithm $\pol$ is best summarised by its \emph{(expected cumulative) regret}, i.e. its loss in total reward relative to an oracle that knows $\model$:
\begin{equation}
\reg(\pol, \model, T) \defn 
\sum_{a=1}^K \mathbb{E}_{\model \pol}\left[N_a(T)\right](\mu^* - \mu_a).\vspace*{-.5em}
\end{equation}
$N_a(T)$ denotes the number of time arm $a$ is pulled till time step $T$.
This is the cost incurred by the algorithm due to the incomplete information, as it has to play the suboptimal arms to gain information about the suboptimal arms. This process decreases the uncertainty in decision making and facilitates maximisation of the expected cumulative reward.

Lower bounds illustrate the inherent hardness of bandit problems.
\citet{lai1985asymptotically} proved that any consistent bandit policy $\pol$ must incur at least logarithmic growth ($\Omega(\log T)$) in expected cumulative regret. This lower bound is problem-dependent as it has a multiplicative factor dependent on the given environment $\model$.
\citet{vogel1960asymptotic} proved an environment-independent lower bound of $\Omega(\sqrt{KT})$.
This also called a \emph{problem-independent minimax lower bound} as the minimax regret is  as $\reg_{\mathrm{minimax}}(T) \defn \min_{\pol}\max_{\model}\reg(\pol,\model,T)$. 
A similar lower bound of $\Omega(\sqrt{KT})$ is established for the \emph{Bayesian regret} under any prior \citep{lattimore2018bandit}.

A detailed description of the existing lower bounds is provided in Appendix~\ref{sec:lbounds}.

\iffalse
One specific way to implement differential privacy is to ensure that the input to the algorithm is already a differentially private version of the original data. This notion of privacy is called \emph{local differential privacy}~\citep{duchi2013local}, and it allows the algorithm to be agnostic about privacy. For this reason, this notion is presently adapted by Apple and Google for their large-scale systems.
\fi
%We also provide a rectification to one of the previous claims by~\citep{shariff2018differentially} that any  privacy preserving mechanism for contextual bandits with low privacy level would achieve at least linear regret.

\section{Differential Privacy in Bandits: Definitions}
\label{sec:dp_mab}
In the classical differential privacy setting, we would like the output distribution of the algorithm to be similar for two neighbouring inputs. However, while it is clear what we should take for the output of the bandit algorithm (the actions), it is unclear whether we should consider the generated outcomes or the observed rewards as the input to the algorithm. In this paper, we will focus on the former definition, because it also implies the latter, and it is also applicable to other settings, like partial monitoring.
\begin{definition}[Differentially Private Bandits] \label{def:DP_mab}
	An algorithm $\pol$ is $\epsilon$-DP if, $\forall t \leq T$:
    \begin{align*}
    &\prob_\pol(a^T \mid \bx_1, \ldots, \bx_t, \ldots, \bx_T) \leq
    \prob_\pol(a^T \mid \bx_1, \ldots, \bx'_t, \ldots, \bx_T) e^\epsilon,
    \end{align*}
	for every sequence of actions $a^T \triangleq [a_1, \ldots, a_T]$ and neighbouring reward sequences $\bx^T$, $\bx'^T$ with $\lp{\bx^T}{\bx'^T}{H}= 1$.
\end{definition}
Let us consider the example of patients contributing their data to a clinical trial or a medical diagnostic experiment. $x_{t,i}$ would correspond to what the physician would see if they were to treat patient $t$ with treatment $i$. Since we cannot know \emph{a priori} which treatment the doctor would choose, we must protect every possible outcome generated by the patients. More concretely, consider actions that are diagnostic tests and where $x_{t,i}$ denotes the absence or presence of a disease marker. Then  potentially revealing individual disease markers clearly violates privacy.

In bandits, we only ever see the $x_{t,i}$ chosen. Thus, we can get away with a weaker privacy definition, as shown in Lemma~\ref{rem:equivalence}. However, because of the conceptual difficulty in referring to the probability of actions given a sequence of rewards that would have only been obtained had those actions been taken, we use the outcomes to define differential privacy. This also allows us to apply this definition to more general sequential decision making problems.
\begin{lemma}
	If and only if a bandit algorithm $\pol$ is $\epsilon$-DP with respect to the outcome sequence $\bx$ then it is $\epsilon$-DP with respect to the reward sequence $r$. However, in the general partial monitoring setting the equivalence does not hold. 
	\label{rem:equivalence}
\end{lemma}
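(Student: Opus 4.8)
The plan rests on a single structural observation about the bandit feedback model, which I would isolate as a small lemma first. A bandit policy chooses $a_t$ using only the observed history $(a^{t-1}, r^{t-1}) \in \Hist$, and in the bandit model the observed reward is exactly the outcome of the chosen arm, $r_s = x_{s,a_s}$. Hence, for a fixed action sequence $a^T$, the conditional factorises as $\prob_\pol(a^T \mid \bx^T) = \prod_{t=1}^T \pol\big(a_t \mid a^{t-1}, (x_{s,a_s})_{s<t}\big)$ and therefore depends on the outcome matrix $\bx^T$ only through the realised entries $(x_{s,a_s})_{s=1}^T$. Establishing this does double duty: it makes $\prob_\pol(a^T \mid r^T)$ well-defined (set it equal to $\prob_\pol(a^T \mid \bx^T)$ for any $\bx^T$ whose selected entries along $a^T$ equal $r^T$, which is unambiguous precisely by the factorisation), thereby dissolving the conceptual difficulty the authors flag, and it drives both directions of the equivalence.

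For outcome-DP $\Rightarrow$ reward-DP, I would take neighbouring reward sequences $r^T, r'^T$ with $\lp{r^T}{r'^T}{H} = 1$, differing only at index $k$, and fix an arbitrary action sequence $a^T$. I then lift them to outcome matrices $\bx^T, \bx'^T$ that agree everywhere except at the single entry $(k, a_k)$, setting $x_{k,a_k} = r_k$ and $x'_{k,a_k} = r'_k$; these satisfy $\lp{\bx^T}{\bx'^T}{H} = 1$. By the factorisation, $\prob_\pol(a^T \mid \bx^T) = \prob_\pol(a^T \mid r^T)$ and likewise for the primed versions, so the bound in Definition~\ref{def:DP_mab} transfers verbatim to the reward sequences.

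For reward-DP $\Rightarrow$ outcome-DP, I would take neighbouring outcome matrices differing only in row $k$, fix $a^T$, and project onto the observed rewards $r_s = x_{s,a_s}$, $r'_s = x'_{s,a_s}$, which agree for $s \neq k$. Either the changed entries of row $k$ avoid column $a_k$, so $r^T = r'^T$ and the two conditionals coincide by the factorisation; or $r_k \neq r'_k$, giving $\lp{r^T}{r'^T}{H} = 1$, so the reward-DP hypothesis applies. In both cases the outcome-DP inequality follows. Once the factorisation lemma is in hand, both directions are essentially bookkeeping, and the only mild subtlety is splitting these two cases in the reverse direction.

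The genuinely separate piece of work is the partial-monitoring claim, and this is where I expect the main obstacle to lie, since it requires constructing an explicit counterexample rather than manipulating inequalities. The structural lemma breaks there because the policy conditions on a feedback signal $\sigma(\bx_t, a_t)$ that need neither determine nor be determined by the reward $u(\bx_t, a_t)$. I would choose a game admitting two outcomes with identical reward but distinct feedback, together with a policy that reacts to that feedback. Changing one timestep's outcome between these two values leaves the reward sequence unchanged, so reward-DP holds trivially for every $\epsilon$, yet it alters the action distribution and thus violates outcome-DP for any finite $\epsilon$. This severs the lifting used above and confirms that the equivalence is a special feature of the bandit setting, where feedback coincides with reward.
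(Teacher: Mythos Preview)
Your proposal is correct and rests on the same key identity as the paper's proof, namely that in the bandit setting $\prob_\pol(a^T \mid \bx^T) = \prod_{t} \pol(a_t \mid a^{t-1}, (x_{s,a_s})_{s<t}) = \prob_\pol(a^T \mid r^T)$. Your treatment is in fact more thorough than the paper's: you explicitly handle the lifting and projection between the two neighbourhood structures (including the case split when the perturbed row avoids the chosen arm) and construct a concrete partial-monitoring counterexample, whereas the paper simply records the identity and a one-line remark that in the general setting the algorithm may observe the full outcome $\bx_t$ even though the reward depends only on part of it.
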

%\begin{toappendix}
%\begin{proof}[Proof of Remark~\ref{rem:equivalence}]
%  The first claim is direct:
%  \begin{align*}
%    &\quad\pol(a_1, \ldots, a_T \mid \bx_1, \ldots, \bx_T)\\
%    &= \prod_{t=1}^T \pol(a_t \mid a_1, \ldots, a_{t-1}, \bx_1, \ldots, \bx_{t-1})\\
%    &= \prod_{t=1}^T \pol(a_t \mid a_1, \ldots, a_{t-1}, x_{1,a_1}, \ldots, x_{t-1,a_{t-1}})\\
%    &= \prod_{t=1}^T \pol(a_t \mid a_1, \ldots, a_{t-1}, r_1, \ldots, r_{t-1})\\
%     &=  \pol(a_1, \ldots, a_T \mid r_1, \ldots, r_T)
%  \end{align*}
%  The second follows from the fact that in the general setting the algorithm may observe the complete outcome $\bx_t$ even if the reward it obtains only depends on part of $r_t$.
%\end{proof}
%\end{toappendix}

However, a number of authors~\citep{mishra2015nearly,tossou2016algorithms,shariff2018differentially} have suggested differential privacy definitions for bandits of the following form, which we will refer to as \emph{instantaneous} differential privacy.
\begin{definition}[Instantaneous DP Bandits]\label{def:IDP_mab}
	An algorithm $\pol$ is $\epsilon$-Instantaneous DP if, $\forall t \leq T$ and $\{a_t\}, \{r_t\}, \{r_t'\}$,
    \begin{align*}
    &\prob_\pol(a_T \mid a^{T-1}, r_1, \ldots, r_t, \ldots, r_{T-1})
    \leq
    \prob_\pol(a_T \mid a^{T-1}, r_1, \ldots, r'_t, \ldots, r_{T-1})
    e^\epsilon.
    \end{align*}
\end{definition}
Through Remark~\ref{rem:equivalence}, this definition is equivalent to
 \begin{align*}
&\prob_\pol(a_T \mid a_1, \ldots, a_{T-1}, \bx_1, \ldots, \bx_t, \ldots, \bx_{T-1})
\leq
\prob_\pol(a_T \mid a_1, \ldots, a_{T-1}, \bx_1, \ldots, \bx'_t, \ldots, \bx_{T-1})e^\epsilon
\end{align*}
Consequently, it is easy to show that instantaneous differential privacy implies differential privacy and vice-versa. 
\begin{lemma}
	If a policy $\pol$ satisfies $\epsilon$-DP (Definition~\ref{def:DP_mab}) with privacy level $\epsilon$, $\pol$ will also satisfy $\epsilon$-instantaneous DP (Definition~\ref{def:IDP_mab}) with privacy level $2\epsilon$. Conversely, if a policy satisfies $\epsilon$-instantaneous DP, it achieves $t \epsilon$ DP after $t$ steps.
    \label{lem:seq-ins-priv}
\end{lemma}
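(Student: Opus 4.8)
The plan is to exploit the causal (sequential) structure of the bandit policy: because $\pol$ selects $a_s$ using only the information available before step $s$, the joint law factorises as $\prob_\pol(a^T \mid \bx^T) = \prod_{s=1}^{T} \prob_\pol(a_s \mid a^{s-1}, \bx^{s-1})$, and in particular $\prob_\pol(a^T \mid \bx^T)$ does not depend on outcomes that have not yet been observed when the relevant actions are chosen. I would record this causal factorisation first, and work throughout with the $\bx$-form of instantaneous DP (available through Lemma~\ref{rem:equivalence}), so that both definitions are expressed over the same conditioning variables.

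For the forward direction (DP $\Rightarrow$ $2\epsilon$-instantaneous DP) I would write the instantaneous conditional as a ratio of two joint action-sequence probabilities,
\[ \prob_\pol(a_T \mid a^{T-1}, \bx^{T-1}) = \frac{\prob_\pol(a^T \mid \bx^{T-1})}{\prob_\pol(a^{T-1} \mid \bx^{T-1})}, \]
and identify the numerator with $\prob_\pol(a^T \mid \bx^T)$ for an arbitrary completion $\bx_T$, since by causality the value of $\bx_T$ is irrelevant. Fixing a neighbour $\bx'^T$ that differs from $\bx^T$ only at some $t \leq T-1$, Definition~\ref{def:DP_mab} bounds the numerator above by $e^{\epsilon}\,\prob_\pol(a^T \mid \bx'^T)$. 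For the denominator I would marginalise the DP inequality over $a_T$, using $\prob_\pol(a^{T-1} \mid \bx^{T-1}) = \sum_{a_T} \prob_\pol(a^T \mid \bx^T)$ together with the reverse inequality (valid because the neighbour relation is symmetric) to obtain the lower bound $\prob_\pol(a^{T-1} \mid \bx^{T-1}) \geq e^{-\epsilon}\,\prob_\pol(a^{T-1} \mid \bx'^{T-1})$. Dividing the two bounds produces the factor $e^{\epsilon}/e^{-\epsilon} = e^{2\epsilon}$, which is exactly $\epsilon$-instantaneous DP at level $2\epsilon$.

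For the converse (instantaneous DP $\Rightarrow$ $t\epsilon$-DP after $t$ steps) I would use the causal factorisation directly: for neighbours $\bx^t, \bx'^t$ differing only at a single coordinate $k$,
\[ \frac{\prob_\pol(a^t \mid \bx^t)}{\prob_\pol(a^t \mid \bx'^t)} = \prod_{s=1}^{t} \frac{\prob_\pol(a_s \mid a^{s-1}, \bx^{s-1})}{\prob_\pol(a_s \mid a^{s-1}, \bx'^{s-1})}. \]
Every factor with $s \leq k$ equals one, because $\bx^{s-1}$ and $\bx'^{s-1}$ coincide on coordinates $1, \ldots, s-1$; each of the remaining $t-k$ factors is at most $e^{\epsilon}$ by instantaneous DP applied at step $s$. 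Multiplying yields $e^{(t-k)\epsilon} \leq e^{t\epsilon}$, which is the claimed composition bound.

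The main obstacle — or at least the step demanding the most care — is not the algebra but making the causality facts rigorous: that $\prob_\pol(a^T \mid \bx^T)$ is independent of any outcome not yet observed when the corresponding action is chosen, so that $\prob_\pol(a^T \mid \bx^{T-1})$ and $\prob_\pol(a^T \mid \bx^T)$ may be identified, and that each per-step conditional $\prob_\pol(a_s \mid a^{s-1}, \bx^{s-1})$ depends on the outcomes only through the prefix $\bx^{s-1}$. I would establish these from the definition of $\pol$ as a causal stochastic map $\Hist \to \dist(\Act)$ before invoking the chain rule. Two smaller points also need explicit statement: that Definition~\ref{def:DP_mab} is symmetric in the neighbour pair (licensing the reverse lower bound on the denominator), and that instantaneous DP in its $\bx$-form holds at each intermediate step $s$, which is what the composition in the converse direction relies upon.
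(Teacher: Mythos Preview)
Your proposal is correct and follows essentially the same approach as the paper: write the instantaneous conditional as a ratio of two joint action-sequence probabilities, invoke causality to align the conditioning sets, and then apply the $\epsilon$-DP bound once to the numerator and once (in the reverse direction) to the denominator to obtain $e^{2\epsilon}$. For the converse the paper simply cites the composition theorem for differential privacy, which is exactly what your chain-rule product argument spells out; your explicit observation that the factors with $s\le k$ equal one (yielding $(t-k)\epsilon \le t\epsilon$) is a harmless sharpening of the same idea.
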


\section{Mechanisms for Achieving Differential Privacy}\label{sec:dp_mech}
One method to achieve differential privacy is to rely on differentially private inputs to the algorithm. This is called the \emph{Local Differential Privacy} model~\citep[c.f.][]{duchi2013local}. Local DP allows the algorithm to be agnostic about privacy. For this reason, this notion is presently adapted by Apple and Google for their large-scale systems.

If the bandit algorithm only observes a private version of the reward sequence, the algorithm's output is differentially private with respect to the \emph{generated outcome sequence} $\lbrace \bx_1, \ldots, \bx_T \rbrace = \lbrace \bx_i\rbrace_{i=1}^T$. In particular, let the input to the algorithm at time $t$ be $\bz_t$, so that the complete input sequence is $\bz_1, \ldots, \bz_t$. Then if a pre-processing mechanism $\mech$ generates differentially private inputs, i.e.
\begin{align*}
&\prob_\mech(\bz_1, \ldots, \bz_T \mid r_1, \ldots, r_t, \ldots, r_T)
\leq
\prob_\mech(\bz_1, \ldots, \bz_T \mid r_1, \ldots, r_t', \ldots, r_T) e^\epsilon
\end{align*}
then the resulting algorithm $\pol(a_1, \ldots, a_T \mid \bz_1, \ldots \bz_T)$ is differentially private with respect to $\bx$ (and so $r$) through post-processing.

\begin{definition}[$\epsilon$-Local Differentially Private Bandits]\label{def:LDP_mab}
  A bandit algorithm $\pol$ is locally differentially private if its inputs are generated through an $\epsilon$-DP mechanism $\mech$.
\end{definition}
Here, we are not specifying how the input should be related to the rewards. If the input consists of a  private reward sequence $\hat{r}$, the setting of Definition~\ref{def:LDP_mab} is analogous to the local privacy definition in the corrupted bandit setting of~\citet{gajane2017corrupt}. 
If we only maintain differentially private statistics about each arm, i.e. we retain a differentially private mean for the reward of each arm, and use i to select new arms, then we end up with local differential privacy in the way it was used in~\cite{mishra2015nearly}. 

%\documentclass{standalone}
%\usepackage{global_settings}
%
%\begin{document}
%\beginpgfgraphicnamed{model-pca}
\begin{figure*}[t!]%\vspace*{-1em}
\centering
\scalebox{0.8}{	
\begin{tabular}{cccc}
\begin{tikzpicture}[x=1.7cm,y=1.8cm]
  % Define nodes
  \node[obs] (a1) {$a_t$}; %
  \node[obs, above=of a1] (x1) {$r_t$} ; %
  \node[latent, above=of x1] (r1) {$\bx_t$} ; %

  \node[obs, left=of a1] (a2) {$a_{t-1}$}; %
  \node[obs, left=of x1] (x2) {$r_{t-1}$} ; %
  \node[latent, left=of r1] (r2) {$\bx_{t-1}$} ; %
  
  \node[latent, above=of r1, xshift=-1.2cm] (t) {$\model$}; %
  \node[latent, below=of a1, xshift=-1.2cm] (p) {$\pol$}; %
  
  %%add edge
  \edge[] {t} {r1,r2} ;
  \edge[] {p} {a1,a2} ;
  \edge[] {a2} {a1,x2} ;
  \edge[] {a1} {x1} ;
  \edge[] {x2} {a1} ;
  \edge[] {r1} {x1} ;
  \edge[] {r2} {x2} ;
\end{tikzpicture}&
\begin{tikzpicture}[x=1.7cm,y=1.8cm]
  \node[obs] (a1) {$a_t$}; %
  \node[obs, above=of a1] (x1) {$r_t$} ; %
  \node[latent, above=0.5of x1] (z1) {$\bz_t$} ; %
  \node[latent, above=0.5 of z1] (r1) {$\bx_t$} ; %
  
  \node[obs, left=of a1] (a2) {$a_{t-1}$}; %
  \node[obs, left=of x1] (x2) {$r_{t-1}$} ; %
  \node[latent, left=of z1] (z2) {$\bz_{t-1}$} ; %
  \node[latent, left=of r1] (r2) {$\bx_{t-1}$} ; %
  
  \node[latent, above=of r1, xshift=-1.2cm] (t) {$\model$}; %
  \node[latent, below=of a1, xshift=-1.2cm] (p) {$\pol$}; %
  
  %%add edge
  \edge[] {t} {r1,r2} ;
  \edge[] {p} {a1,a2} ;
  \edge[] {a2} {a1,x2} ;
  \edge[] {a1} {x1} ;
  \edge[] {x2} {a1} ;
  \edge[] {r1} {z1} ;
  \edge[] {r2} {z2} ;
  \edge[] {z1} {x1} ;
  \edge[] {z2} {x2} ;
  
   %%add plate
  \privplate [inner sep=0.5em] {plate1} {(r1)(r2)} {$\epsilon$-Local Privacy};
\end{tikzpicture}&
%\endpgfgraphicnamed
\begin{tikzpicture}[x=1.7cm,y=1.8cm]
  \node[obs] (a1) {$a_t$}; %
  \node[obs, above=of a1] (x1) {$r_t$} ; %
  \node[latent, above=of x1] (r1) {$\bx_t$} ; %
  
  \node[obs, left=of a1] (a2) {$a_{t-1}$}; %
  \node[obs, left=of x1] (x2) {$r_{t-1}$} ; %
  \node[latent, left=of r1] (r2) {$\bx_{t-1}$} ; %
  
  \node[latent, above=of r1, xshift=-1.2cm] (t) {$\model$}; %
  \node[latent, below=of a1, xshift=-1.2cm] (p) {$\pol$}; %
  
  %%add edge
  \edge[] {t} {r1,r2} ;
  \edge[] {p} {a1,a2} ;
  \edge[] {a2} {a1,x2} ;
  \edge[] {a1} {x1} ;
  \edge[] {x2} {a1} ;
  \edge[] {r1} {x1} ;
  \edge[] {r2} {x2} ;
  
  %%add plate
  \privplate [inner sep=0.5em] {plate1} {(r1)(r2)} {$\epsilon$-Differential Privacy};
\end{tikzpicture}&%
\begin{tikzpicture}[x=1.7cm,y=1.8cm]
  \node[obs] (a1) {$a_t$}; %
  \node[obs, above=of a1] (x1) {$r_t$} ; %
  \node[latent, above=of x1] (r1) {$\bx_t$} ; %
  
  \node[obs, left=of a1] (a2) {$a_{t-1}$}; %
  \node[obs, left=of x1] (x2) {$r_{t-1}$} ; %
  \node[latent, left=of r1] (r2) {$\bx_{t-1}$} ; %
  
  \node[latent, above=of r1, xshift=-1.2cm] (t) {$\model$}; %
  \node[latent, below=of a1, xshift=-1.2cm] (p) {$\pol$}; %
  
  %%add edge
  \edge[] {t} {r1,r2} ;
  \edge[] {p} {a1,a2} ;
  \edge[] {a2} {a1,x2} ;
  \edge[] {a1} {x1} ;
  \edge[] {x2} {a1} ;
  \edge[] {r1} {x1} ;
  \edge[] {r2} {x2} ;
  
  %%add plate
  \privplate [inner sep=.5em] {plate1} {(t)} {$\epsilon$-Environment Privacy};
\end{tikzpicture}\\
(a)Non-privacy&(b)Local Privacy&(c) Differential Privacy&(d) Environment Privacy
\end{tabular}%
}%\vspace*{-1em}
\caption{Graphical models for non-private, local private, differentially private, and environmentally private multi-armed bandits.}\label{fig:pgm_dp_mab}%\vspace*{-1.5em}
\end{figure*}
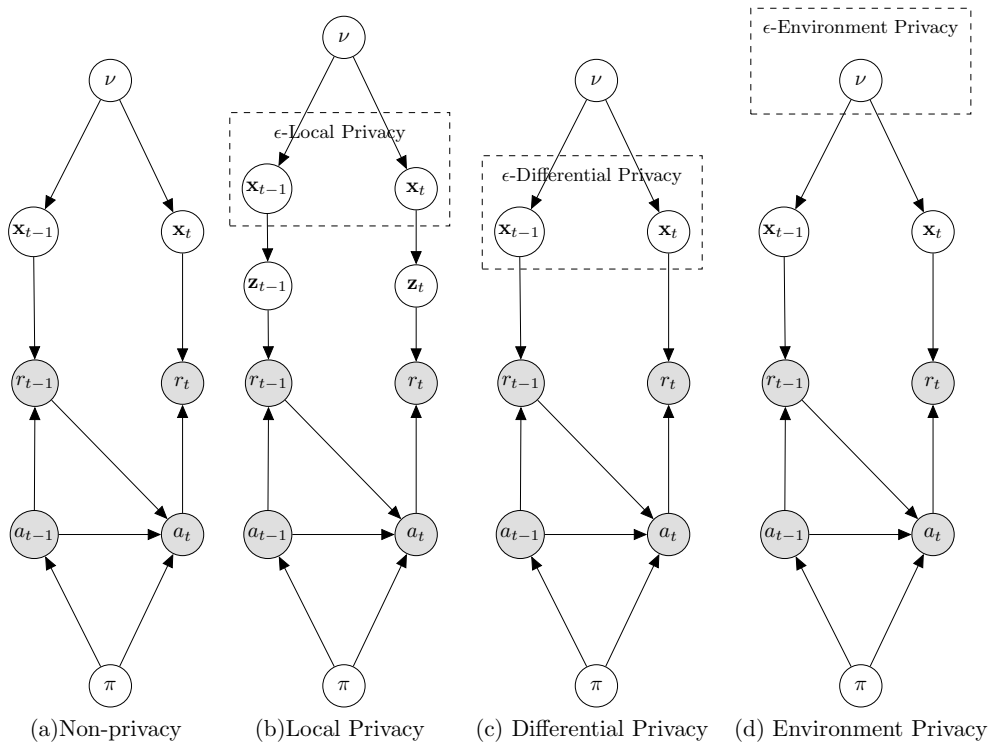
%\end{document}
	
\paragraph{Unified Graphical Model Representation of Privacy for Bandits.} Figure~\ref{fig:pgm_dp_mab} provides a unified graphical model perspective of non-private, private (locally and not) multi-armed bandits. The shaded nodes represent the observed variable. The clear nodes represent the hidden variables. The dashed-rectangle covers the input quantities with respect to which the privacy has to be maintained. All of these representations treat the generated rewards as input and all possible action sequences as output with local and  sequential privacy-mechanisms acting at two different levels ensuring local and  sequential privacies respectively. This representation allows us to define a new notion of privacy for bandits, namely \emph{environment privacy}.

\paragraph{Environment Privacy for Bandits.} For a bandit with a stationary environment $\model$, the reward generation mechanism can be represented as a distribution with an environment-dependent parameter $\model$. The user may  consider this \emph{environment parameter to be the input} and \emph{the generated histories} $\gen_T$, i.e. the sequence of generated rewards $\lbrace \br_i\rbrace_{i=1}^T \in \real^{KT}$ and actions taken $\lbrace a_i\rbrace_{i=1}^T \in [K]^T$ by environment parameter $\model$ and the policy $\pol$ as \emph{the output}. %We refer to this framework as the environment differentially privacy for bandits.
\begin{definition}[$\epsilon$-Environment Privacy for Bandits]	\label{def:EDP_mab}
	A privacy-preserving mechanism $\mathcal{M}$ preserves $\epsilon$-environment privacy if for all generated histories $\gen_T \defn \lbrace (a_i, \br_i)\rbrace_{i=1}^T \in ([K]\times \real^K)^T$, and environment parameters $\model,\model' \in \real^d$,\vspace{-.5em}
	$$\log\left({\left |\frac{\prob_{\pol\model}\left(\lbrace (a_i, \br_i) \rbrace_{i=1}^T \mid \model\right)}{\prob_{\pol\model'}\left(\lbrace (a_i, \br_i)\rbrace_{i=1}^T \mid \model'\right)}\right|}\right) \leq \epsilon\rho(\model,\model'),$$
	where $\rho$ is a distance metric defined in the space of $\model$.
\end{definition}

\paragraph{An Example of Data Privacy in Bandits.} In order to understand how each definition affects privacy, let us consider the example of web advertising for a specific individual. At time $t$, the individual is presented with some set of advertisements $a_t$. These advertisements generate potential responses $\bx_t$. Out of these generated responses of the user, we only see the clicked response $r_t$. We use a bandit algorithm $\pol$ in order to perform adaptive web advertising. If $\pol$ is $\epsilon$-differentially private (Definition~\ref{def:DP_mab}) with respect to $\{\br_t\}$, an adversary cannot distinguish similar responses between individuals. If we are locally $\epsilon$-DP (Definition~\ref{def:LDP_mab}), even the authority of the algorithm would not be able to distinguish between $\{\br_t\}$. Thus indistinguishability of individuals can be achieved with respect to both the adversary and the authority of the algorithm. 
Finally, if we are $\epsilon$-envrionment private with respect to $\model$ (Definition~\ref{def:EDP_mab}), an adversary cannot infer the inherent preferences of the individuals. 
For all of our definitions, the privacy loss is bounded by a constant privacy level $\epsilon$ independent of the length of interactions.

\section{Regret Lower Bounds for Private Bandits}\label{sec:theory}%\vspace{-1em}

\paragraph{Trade-off between Privacy and Performance.} Now, the main question is how much additional regret that we would suffer if we want to satisfy DP. Although a number of DP bandit algorithms with upper bounds on the regret have been proposed, a clear lower bound on the regret that \emph{any} differentially private policy must accrue is lacking. In literature, DP for bandits has been achieved in two ways: (a) through local DP on the (statistics of) rewards, and (b) through instantaneous DP. The first method has the drawback of potentially adding too much noise to the reward signal, while the second method requires that the instantaneous differential privacy level is carefully controlled to achieve a constant (or sublinear) amount of privacy loss.

\paragraph{Regret Lower bound and Intrinsic Hardness of a Bandit Problem.} \textit{Lower bounds on the performance measure of a problem provides us insight about the intrinsic hardness of the problem and sets a target for optimal algorithm design.} In this section, we prove minimax and Bayesian minimax lower bounds for local, instantaneous and general DP bandits. We also prove a problem-dependent lower bound for local DP. Results are listed in Table~\ref{tab:thm}.

\begin{table*}[t!]
	\centering
	\begin{tabular}{c|c|c|c|c}
		\hline 
		Privacy Type & Definition & Minimax  & Bayesian Minimax & Problem-dependent \\
		& for Bandits & Regret &  Regret &  Regret \\  
		\hline 
		DP & Definition~\ref{def:DP_mab} & Theorem~\ref{thm:lb_dp} & Corollary~\ref{cor:lb_bayes_dp} & -- \\ 
		%		& for a given $T > 0$ &  &  &  \\ 
		%\hline 
		Instantaneous DP & Definition~\ref{def:IDP_mab} & Theorem~\ref{thm:lb_global} & Corollary~\ref{cor:lb_bayes_global} & --  \\ 
		%& for any $t \leq T$ &  &  &  \\ 
		%\hline 
		Local DP & Definition~\ref{def:LDP_mab}  & Theorem~\ref{thm:lb_local} & Corollary~\ref{cor:lp_bayes} & Theorem~\ref{thm:lb_log_local} \\ 
		\hline
	\end{tabular} 
	\caption{Regret lower bound results different privacy definitions/mechanisms of bandits.}
\end{table*}

\paragraph{KL-divergence Decomposition in Sequential Setting.} In order to prove the lower bounds, we adopt the general canonical bandit model~\citep{lattimore2018bandit} that is general enough not to impose additional constraints on the bandit algorithm and the environment.
A privacy-preserving bandit algorithm $\pol$ and an environment $\model$ interacts up to a given time horizon $T$ to produce \emph{observed history} $\Hist_T \defn \lbrace (A_i, X_i)\rbrace_{i=1}^T$.
Thus, an observed history $\mathcal{H}_T$ is a random variable sampled from the measurable space $\left(([K]\times \real)^T, \mathcal{B}([K]\times \real)^T\right)$ and a probability measure $\prob_{\pol\model}$. Here, $\mathcal{B}([K]\times \real)^T$ is the Borel set on $([K]\times \real)^T$. $\prob_{\pol\model}$ is the probability measure induced by the algorithm $\pol$ and environment $\model$ such that,%\vspace*{-.5em}
\begin{enumerate}[leftmargin=*]
\item[1.] the probability of choosing an action $A_t = a$ at time $t$ is dictated only by the algorithm $\pol(a|\Hist_{t-1})$,
\item[2.] the distribution of reward $X_t$ is $f_{A_t}$ and is conditionally independent of the previous observed history $\Hist_{t-1}$.
\end{enumerate}%\vspace*{-.5em}
Hence, we get for any observed history $\Hist_T$, %\lbrace a_1, \br_1, \ldots, a_T, \br_T \rbrace$,
%\vspace{-1em}
\begin{align}\label{eqn:indep}
\begin{split}
\prob^T_{\pol{\model}} \defn \prob_{\pol\model}(\Hist_T) = \prod_{t=1}^T \pol(A_t|\Hist_{t-1})f_{A_t}(X_t).
\end{split}%\vspace{-1em}
\end{align}
This canonical bandit framework allows us to state Lemma~\ref{thm:kldecomp} on KL-divergence decomposition. This decomposition allows us to separately treat the effect of policy and reward generation. It further leads to Lemma~\ref{thm:kldecomp_local},~\ref{thm:kldecomp_global} and~\ref{thm:kldecomp_DP} for local DP, instantaneous DP, and DP respectively.
\begin{lemma}[KL-divergence Decomposition]\label{thm:kldecomp}
  %\cdcomment{I don't know what you mean by the KL divergence of $\pol(a \mid H)$. Is it the KL divergence of the corresponding Bernoulli distribution? If so, then this doesn't work out.}
	Given a bandit algorithm $\pol$, two environments $\model_1$ and $\model_2$, and a probability measure $\prob_{\pol\model}$ satisfying Equation~\ref{eqn:indep},\vspace{-.5em}
	\begin{align*}
	\kldiv{\prob^T_{\pol{\model_1}}}{\prob^T_{\pol{\model_2}}} &= \sum_{t=1}^{T} \kldiv{\pol(A_t|\mathcal{H}_t, \model_1)}{\pol(A_t|\mathcal{H}_t, \model_2)}\\ &+ \sum_{a=1}^K \mathbb{E}_{\model_1}\left[N_a(T)\right]\kldiv{f_a \in \model_1}{f_a \in \model_2}.
	\end{align*}%\vspace{-1em}
\end{lemma}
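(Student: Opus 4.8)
The plan is to express the KL divergence as an expected log-likelihood ratio under $\prob^T_{\pol\model_1}$ and then exploit the product factorisation in Equation~\ref{eqn:indep} to split it into a sum over time steps. First I would write
\[
\kldiv{\prob^T_{\pol{\model_1}}}{\prob^T_{\pol{\model_2}}} = \mathbb{E}_{\model_1}\!\left[\log \frac{\prob^T_{\pol{\model_1}}(\Hist_T)}{\prob^T_{\pol{\model_2}}(\Hist_T)}\right],
\]
assuming absolute continuity so that the Radon--Nikodym derivative exists. Substituting $\prob^T_{\pol\model} = \prod_{t=1}^T \pol(A_t \mid \Hist_{t-1}, \model)\, f_{A_t}(X_t)$ and taking logarithms makes the policy and reward contributions separate:
\[
\log \frac{\prob^T_{\pol{\model_1}}}{\prob^T_{\pol{\model_2}}} = \sum_{t=1}^T \log \frac{\pol(A_t \mid \Hist_{t-1}, \model_1)}{\pol(A_t \mid \Hist_{t-1}, \model_2)} + \sum_{t=1}^T \log \frac{f^{\model_1}_{A_t}(X_t)}{f^{\model_2}_{A_t}(X_t)},
\]
where $f^{\model_i}_a$ denotes the reward law of arm $a$ under environment $\model_i$.

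For the policy sum I would condition on $\Hist_{t-1}$ and apply the tower property: the inner conditional expectation of each log-ratio is, by definition, $\kldiv{\pol(\cdot \mid \Hist_{t-1}, \model_1)}{\pol(\cdot \mid \Hist_{t-1}, \model_2)}$, which after taking the outer expectation over histories yields the first term of the claim. For the reward sum I would instead condition on the pair $(\Hist_{t-1}, A_t)$; since under $\model_1$ the reward $X_t$ is drawn from $f^{\model_1}_{A_t}$ conditionally independently of the past, the inner expectation collapses to $\kldiv{f^{\model_1}_{A_t}}{f^{\model_2}_{A_t}}$. Re-grouping the resulting double sum $\sum_{t=1}^T \sum_{a=1}^K \prob_{\model_1}(A_t = a)\,\kldiv{f^{\model_1}_a}{f^{\model_2}_a}$ by arm and using $\sum_{t=1}^T \prob_{\model_1}(A_t = a) = \mathbb{E}_{\model_1}[N_a(T)]$ produces the second term, completing the decomposition.

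The computation is mechanical once the factorisation is in place, so I expect the main obstacle to be the measure-theoretic bookkeeping rather than any conceptual difficulty. One must assume $\prob^T_{\pol\model_1} \ll \prob^T_{\pol\model_2}$, which reduces to absolute continuity of the policy kernels together with $f^{\model_1}_a \ll f^{\model_2}_a$ for each arm, and then verify integrability of each conditional log-ratio so that the tower property may be applied term by term and the two sums separated. A secondary subtlety is the indexing convention for the history in the policy argument (whether $A_t$ is conditioned on $\Hist_{t-1}$ or on $\Hist_t$, the latter appearing in the statement); I would fix it consistently with Equation~\ref{eqn:indep} and emphasise that the policy KL divergences in the claim are themselves expectations over the randomly generated history under $\model_1$.
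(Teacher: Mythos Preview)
Your proposal is correct and follows essentially the same route as the paper: apply the chain rule for KL divergence to the factorisation in Equation~\ref{eqn:indep}, separate the policy and reward log-ratios, and then regroup the reward part by arm via $\sum_{t}\mathds{1}\{A_t=a\}=N_a(T)$. Your remarks about the history index ($\Hist_{t-1}$ versus $\Hist_t$) and about the policy KL terms carrying an outer expectation over histories under $\model_1$ are apt and in fact match the more explicit restatement the paper gives in its appendix.
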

For non-private and locally-private algorithms, the first term vanishes and the rest remains. The corresponding equality for non-private bandit algorithms was first proposed in~\citep{garivier2018explore}. The non-private decomposition of~\citep{garivier2018explore} is also used in~\citep{gajane2017corrupt} to derive a regret lower bound for locally private corrupt bandits.
In this paper, we derive two novel upper bounds of the KL-divergence for local DP and DP bandits.
Bounding KL-divergence is a classic technique in minimax risk analysis methods like Le Cam, Fano and Assaoud~\citep{Yu1997}.
Here, we propose three sequential and private versions of that for local, instantaneous, and general DP (Lemma~\ref{thm:kldecomp_local},~\ref{thm:kldecomp_global}, and~\ref{thm:kldecomp_DP}).

\begin{lemma}[Local Private KL-divergence Decomposition]\label{thm:kldecomp_local}
	If the reward generation process is $\epsilon$-local differentially private for both the environments $\model_1$ and $\model_2$,%\vspace{-.5em}
	\begin{align*}
	\kldiv{\prob^T_{{\model_1}\pol}}{\prob^T_{{\model_2}\pol}} \leq~&2\min\lbrace4,e^{2\epsilon}\rbrace (e^{\epsilon}-1)^2 \sum_{a=1}^K \mathbb{E}_{\model_1}\left[N_a(T)\right]\kldiv{f_a \in \model_1}{f_a \in \model_2}.
	\end{align*}
\end{lemma}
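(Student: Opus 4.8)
The plan is to specialise the general KL decomposition of Lemma~\ref{thm:kldecomp} to the locally private setting and then control the surviving per-arm reward terms with a local-privacy contraction inequality. First I would observe that, for a locally differentially private bandit, privacy is enforced by the pre-processing mechanism $\mech$ rather than by the policy: the learner chooses $A_t$ only as a function of the observed \emph{privatised} history, so its conditional action law $\pol(A_t \mid \mathcal{H}_t)$ does not depend on the environment. Consequently the first (policy) sum in Lemma~\ref{thm:kldecomp} vanishes, leaving only $\sum_{a} \mathbb{E}_{\model_1}[N_a(T)]\, \kldiv{M_a^1}{M_a^2}$, where (writing $f_a^i$ for the reward law $f_a\in\model_i$) $M_a^i \defn \mech(f_a^i)$ is the privatised reward law of arm $a$ under $\model_i$ that $\pol$ actually sees.

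The core step is a contraction bound showing that passing through an $\epsilon$-local mechanism shrinks the KL-divergence by a factor governed by $\epsilon$. Since $\mech$ is $\epsilon$-DP its output density satisfies $e^{-\epsilon}\le q(z\mid r)/q(z\mid r')\le e^{\epsilon}$, so the marginals $M_a^i(z)=\int q(z\mid r)\,f_a^i(dr)$ are themselves within a factor $e^{\epsilon}$ of one another. I would exploit $\int (f_a^1-f_a^2)(dr)=0$ to write $M_a^1(z)-M_a^2(z)=\int\big(q(z\mid r)-\bar q(z)\big)\,(f_a^1-f_a^2)(dr)$ for a suitable reference value $\bar q(z)$, bound $|q(z\mid r)-\bar q(z)|\le(e^{\epsilon}-1)\bar q(z)$ uniformly in $r$, and conclude $|M_a^1(z)-M_a^2(z)|\le(e^{\epsilon}-1)\bar q(z)\,\|f_a^1-f_a^2\|_{\mathrm{TV}}$. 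Feeding this into a second-order bound on the KL-divergence of the nearly-equal marginals $M_a^1,M_a^2$ yields the local-privacy contraction $\kldiv{M_a^1}{M_a^2}\le\min\{4,e^{2\epsilon}\}(e^{\epsilon}-1)^2\,\|f_a^1-f_a^2\|_{\mathrm{TV}}^2$, which is exactly the estimate of~\citet{duchi2013local} applied arm-by-arm; I would isolate it as a standalone sub-lemma.

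Finally I would convert the total-variation term back to KL-divergence via Pinsker's inequality, $\|f_a^1-f_a^2\|_{\mathrm{TV}}^2\le\tfrac12\kldiv{f_a\in\model_1}{f_a\in\model_2}$, and substitute into the surviving reward sum; collecting the constants (and the total-variation normalisation convention) produces the claimed prefactor $2\min\{4,e^{2\epsilon}\}(e^{\epsilon}-1)^2$ multiplying $\sum_a \mathbb{E}_{\model_1}[N_a(T)]\,\kldiv{f_a\in\model_1}{f_a\in\model_2}$.

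I expect the contraction bound of the second paragraph to be the main obstacle: the vanishing of the policy term and the final Pinsker step are routine, but correctly handling the likelihood-ratio bound uniformly in $z$ and choosing the reference $\bar q(z)$ so that the sharp $\min\{4,e^{2\epsilon}\}(e^{\epsilon}-1)^2$ factor emerges, rather than a cruder $e^{2\epsilon}$ estimate, is delicate. Care is also needed because the mechanism acts on the reward \emph{stream}: I would justify applying the single-round contraction inside the decomposition by invoking per-round local privacy (each $\bz_t$ privatised from its own $r_t$) or, failing that, appeal to composition.
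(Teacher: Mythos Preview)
Your proposal is correct and follows essentially the same route as the paper: specialise Lemma~\ref{thm:kldecomp}, observe the policy term vanishes because the learner's action law depends only on the privatised history, apply the local-privacy contraction of \citet{duchi2013local} to each per-arm term, and finish with Pinsker. The only cosmetic difference is that the paper first upper-bounds the one-sided KL by the symmetrised KL $\kldiv{g_a^1}{g_a^2}+\kldiv{g_a^2}{g_a^1}$ and then invokes Duchi et al.'s Theorem~1 as a black box, whereas you plan to re-derive the contraction directly on the one-sided KL via a second-order bound; either way the same $\min\{4,e^{2\epsilon}\}(e^{\epsilon}-1)^2$ factor emerges, and your concern about per-round versus stream-level privacy is not addressed in the paper either.
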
%\vspace{-1em}
\begin{proof}[Proof Sketch]
	The first term of Lemma~\ref{thm:kldecomp} vanishes for local privacy. The second term yields this upper bound by applying Equation 9 in~\citep{duchi2013local} and Pinsker's inequality consecutively.
\end{proof}

\begin{lemma}[Instantaneously Private KL-divergence Decomposition]\label{thm:kldecomp_global}
	For a sequentially private bandit algorithm $\pol$ satisfying $l(T) \leq \expect_{\pol\model}[N_a(T)]$ for any arm $a$,
	%$$l_a(T) \leq \expect_{\pol\model}[N_a(T)] \leq u_a(T),$$
	and two environments $\model_1$ and $\model_2$,%\vspace{-1em} 
	\begin{align*}
	\begin{split}
	\kldiv{\prob^T_{\pol{\model_1}}}{\prob^T_{\pol{\model_2}}} &\leq 2\epsilon(e^{2\epsilon}-1)\frac{1-2e^{-\frac{T}{l(T)}}}{1 - e^{-\frac{T}{l(T)}}} 1+ \sum_{a=1}^K \mathbb{E}_{\model_1}\left[N_a(T)\right](\kldiv{f_a \in \model_1}{f_a \in \model_2}).
	\end{split}
	\end{align*}
\end{lemma}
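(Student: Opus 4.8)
The plan is to start from the exact identity in Lemma~\ref{thm:kldecomp} and bound its two summands separately. The reward summand $\sum_{a} \mathbb{E}_{\model_1}[N_a(T)]\kldiv{f_a \in \model_1}{f_a \in \model_2}$ already appears verbatim on the right-hand side of the target inequality, so it needs no work; the whole task reduces to controlling the policy summand $\sum_{t=1}^T \kldiv{\pol(A_t \mid \mathcal{H}_t, \model_1)}{\pol(A_t \mid \mathcal{H}_t, \model_2)}$. Unlike the local-private case of Lemma~\ref{thm:kldecomp_local}, where this term vanishes, here it does not, because under instantaneous DP the action law genuinely depends on the environment through the privatised history; the crux of the proof is therefore to show that this dependence is quantitatively weak.

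First I would establish a per-step bound. Instantaneous $\epsilon$-DP (Definition~\ref{def:IDP_mab}) says that replacing a single reward in the conditioning history multiplies every action probability by a factor in $[e^{-\epsilon},e^{\epsilon}]$. Since the two environments' distinguishing rewards each relate to a common reference value by one such factor, a group-privacy (two-step) chaining bounds their ratio by $e^{2\epsilon}$, i.e.\ $\bigl|\log[\pol(A_t \mid \mathcal{H}_t, \model_1)/\pol(A_t \mid \mathcal{H}_t, \model_2)]\bigr| \le 2\epsilon$. I would then invoke the elementary fact that $|\log(P/Q)| \le \alpha$ forces $\kldiv{P}{Q} \le \alpha(e^{\alpha}-1)$, which follows from $\kldiv{P}{Q} \le \kldiv{P}{Q}+\kldiv{Q}{P} = \sum_x (P(x)-Q(x))\log\frac{P(x)}{Q(x)}$ and bounding $|P(x)-Q(x)| \le Q(x)(e^{\alpha}-1)$ and the log-factor by $\alpha$. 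Taking $\alpha = 2\epsilon$ yields the clean per-step constant $2\epsilon(e^{2\epsilon}-1)$, matching the prefactor in the statement; this step plays the role that Pinsker's inequality played in the local-private decomposition.

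Next I would sum over $t$ and use the hypothesis $l(T) \le \mathbb{E}_{\pol\model}[N_a(T)]$ to keep the accumulation bounded by a constant rather than growing linearly in $T$. The idea is that a divergence contribution is only effectively incurred in rounds where the reward distinguishing $\model_1$ from $\model_2$ has actually been refreshed, and the uniform lower bound $l(T)$ on expected pull counts converts, via a tail estimate on how frequently each arm is sampled, into geometric control of these rounds with ratio $e^{-T/l(T)}$. Re-summing the resulting geometric series is what produces the correction factor $\tfrac{1-2e^{-T/l(T)}}{1-e^{-T/l(T)}}$ multiplying $2\epsilon(e^{2\epsilon}-1)$, with Lemma~\ref{lem:seq-ins-priv} (which quantifies how instantaneous privacy composes across steps) supplying the tool to make the accumulation precise.

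I expect this summation step to be the main obstacle. The per-step KL bound is a routine DP-to-KL conversion, but turning the single scalar constraint $l(T) \le \mathbb{E}_{\pol\model}[N_a(T)]$ into the exact geometric ratio $e^{-T/l(T)}$, and hence into the precise factor $\tfrac{1-2e^{-T/l(T)}}{1-e^{-T/l(T)}}$, requires a careful probabilistic argument relating expected pull counts to the per-round probability that the privacy-distinguishing reward is observed. One must also keep track of the regime of validity: the factor is nonnegative, and the inequality meaningful, only when $T/l(T) > \ln 2$, so this condition should be stated explicitly alongside the bound.
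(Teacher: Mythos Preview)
Your plan matches the paper's argument in its skeleton: start from Lemma~\ref{thm:kldecomp}, leave the reward term untouched, and bound the policy term using the $2\epsilon$ log-ratio control from instantaneous DP together with the $l(T)$ hypothesis. Your symmetrised-KL derivation of the per-step constant $2\epsilon(e^{2\epsilon}-1)$ is in fact cleaner than the paper's intermediate inequalities and lands on the same prefactor.

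The divergence from the paper is in the summation step, and this is where your outline has a genuine gap. Once you have established a \emph{uniform} per-step bound
\[
\kldiv{\pol(A_t\mid\mathcal H_t,\model_1)}{\pol(A_t\mid\mathcal H_t,\model_2)} \le 2\epsilon(e^{2\epsilon}-1),
\]
summing over $t$ gives $T\cdot 2\epsilon(e^{2\epsilon}-1)$, which is linear in $T$; the subsequent heuristic that ``a divergence contribution is only effectively incurred in rounds where the distinguishing reward has been refreshed'' does not follow from the per-step bound you just proved and cannot by itself rescue a constant-order result. The paper does not take this route: it bounds the $t$-th summand by $2\epsilon(e^{2\epsilon}-1)\,\max_{A_t,\mathcal H_t}\pol(A_t\mid\mathcal H_t,\model_1)$, so the per-step estimate still carries the residual policy-probability factor, and it is the sum $\sum_{t=1}^T \max_{A_t,\mathcal H_t}\pol(A_t\mid\mathcal H_t,\model_1)$ that is then bounded by the geometric expression $\tfrac{1-2e^{-T/l(T)}}{1-e^{-T/l(T)}}$ via the hypothesis $\mathbb E_{\pol\model}[N_a(T)]\ge l(T)$. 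In other words, the paper never collapses the per-step contribution to a pure constant; retaining a policy-dependent factor is precisely what allows the $l(T)$ assumption to bite. Your appeal to Lemma~\ref{lem:seq-ins-priv} does not help here, since that lemma concerns the composed privacy level rather than the action probabilities that must be summed. To repair your argument you would need to refine the per-step bound so that a history-dependent factor (ultimately controlled by the pull-count assumption) survives, rather than absorbing everything into a single constant before summing.
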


\begin{lemma}[Differentially Private KL-divergence Decomposition]\label{thm:kldecomp_DP}
	If the policy is $\epsilon$-local differentially private for both the environments $\model_1$ and $\model_2$, we get for two neighbouring histories $\Hist_T$ and $\Hist'_T$ and a constant $c>0$
	\vspace{-.5em}
	\begin{align*}
	\kldiv{\prob_{\pol{\model_1}}(\Hist_T)}{\prob_{\pol{\model_2}}(\Hist_T)} &\leq  2(\epsilon + c) 	+ e^{2(\epsilon + c)} \kldiv{\prob_{\pol{\model_1}}(\Hist'_T)}{\prob_{\pol{\model_2}}(\Hist'_T)} ).
	\end{align*}
\end{lemma}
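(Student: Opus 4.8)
The plan is to reduce the statement to a pointwise bound on the log-likelihood ratio between the two measures and then integrate. First I would write the divergence as an expectation under $\prob_{\pol\model_1}$,
\[
\kldiv{\prob_{\pol\model_1}(\Hist_T)}{\prob_{\pol\model_2}(\Hist_T)} = \mathbb{E}_{\Hist_T \sim \prob_{\pol\model_1}}\left[\log L(\Hist_T)\right], \qquad L(\Hist_T) \defn \frac{\prob_{\pol\model_1}(\Hist_T)}{\prob_{\pol\model_2}(\Hist_T)},
\]
and use the canonical factorisation of Equation~\ref{eqn:indep} together with the outcome/reward equivalence of Lemma~\ref{rem:equivalence} to split each $\prob_{\pol\model_i}(\Hist_T)$ into a policy factor $\prob_\pol(a^T \mid r^T)$ and a reward-generation factor $\prod_t f_{a_t}(r_t)$ under the respective environment.

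Second, for a history $\Hist_T$ and a neighbour $\Hist'_T$ differing in a single round's reward, I would control $\prob_{\pol\model_i}(\Hist_T)/\prob_{\pol\model_i}(\Hist'_T)$ for each $i \in \{1,2\}$ separately: the policy factor changes by at most $e^{\epsilon}$ by the $\epsilon$-DP property of Definition~\ref{def:DP_mab}, while the single affected per-round reward-density factor changes by at most $e^{c}$, where $c$ is fixed as a uniform bound on the per-round reward log-density ratio. This is precisely the step that forces the assumption that the reward densities $f_a$ are bounded above and below on the relevant support, and it yields $e^{-(\epsilon+c)} \le \prob_{\pol\model_i}(\Hist_T)/\prob_{\pol\model_i}(\Hist'_T) \le e^{\epsilon+c}$.

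Third, chaining these bounds through the identity
\[
L(\Hist_T) = \frac{\prob_{\pol\model_1}(\Hist_T)}{\prob_{\pol\model_1}(\Hist'_T)} \cdot L(\Hist'_T) \cdot \frac{\prob_{\pol\model_2}(\Hist'_T)}{\prob_{\pol\model_2}(\Hist_T)}
\]
gives the pointwise comparison $L(\Hist_T) \le e^{2(\epsilon+c)} L(\Hist'_T)$, equivalently $\log L(\Hist_T) \le 2(\epsilon+c) + \log L(\Hist'_T)$. Taking the expectation under $\prob_{\pol\model_1}$, the constant integrates to $2(\epsilon+c)$; a change of measure that pushes $\prob_{\pol\model_1}$ from $\Hist_T$ to $\Hist'_T$ (again paying the group-privacy factor of step two) converts the remaining term into $e^{2(\epsilon+c)}\,\kldiv{\prob_{\pol\model_1}(\Hist'_T)}{\prob_{\pol\model_2}(\Hist'_T)}$, which is the claimed bound.

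The hard part will be this final change-of-measure step. The integrand $\log L(\Hist'_T)$ is not nonnegative pointwise — only its integral, the KL divergence, is nonnegative — so I cannot naively pull the group-privacy factor out of the expectation. I would handle this by choosing the neighbour map to be a measure-preserving involution on the affected reward coordinate, so that the pushforward of $\prob_{\pol\model_1}$ has density bounded by $e^{\epsilon+c}\prob_{\pol\model_1}(\Hist'_T)$, and then splitting the integral over the regions where $\log L(\Hist'_T)$ is positive and negative to keep the inequality oriented correctly, absorbing the bookkeeping into the stated $e^{2(\epsilon+c)}$ multiplier. A secondary, more routine matter is checking that the constant $c$ can indeed be taken finite and uniform across rounds and across both environments.
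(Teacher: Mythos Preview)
Your plan is essentially the paper's own argument: it also uses Lemma~\ref{lemma:hist_dp} to get the two-sided bound $e^{-(\epsilon+c)}\le \prob_{\pol\model_i}(\Hist_T)/\prob_{\pol\model_i}(\Hist'_T)\le e^{\epsilon+c}$ (with $c=L\Delta$ coming from a Lipschitz assumption on the reward densities, which is exactly your ``uniform bound on the per-round reward log-density ratio''), chains through the identity
\[
\frac{\prob_{\pol\model_1}(\Hist_T)}{\prob_{\pol\model_2}(\Hist_T)}
=\frac{\prob_{\pol\model_1}(\Hist_T)}{\prob_{\pol\model_1}(\Hist'_T)}\cdot
\frac{\prob_{\pol\model_1}(\Hist'_T)}{\prob_{\pol\model_2}(\Hist'_T)}\cdot
\frac{\prob_{\pol\model_2}(\Hist'_T)}{\prob_{\pol\model_2}(\Hist_T)}
\le e^{2(\epsilon+c)}\,\frac{\prob_{\pol\model_1}(\Hist'_T)}{\prob_{\pol\model_2}(\Hist'_T)},
\]
integrates against $\prob_{\pol\model_1}(\Hist_T)$, and then performs the same change of measure $d\prob_{\pol\model_1}(\Hist_T)=\tfrac{d\prob_{\pol\model_1}(\Hist_T)}{d\prob_{\pol\model_1}(\Hist'_T)}\,d\prob_{\pol\model_1}(\Hist'_T)$ that you describe. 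So structurally you are on the paper's track.

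Where you go beyond the paper is in flagging the sign issue in that last step, and you are right to worry: the paper simply replaces the Radon--Nikodym factor by $e^{2(\epsilon+c)}$ inside the integral of $\log L(\Hist'_T)$, which is not justified when the integrand is not pointwise nonnegative. Your proposed repair, however, does not close the gap either. Splitting into the regions where $\log L(\Hist'_T)$ is positive and negative and applying the upper and lower density bounds respectively gives
\[
e^{\epsilon+c}A+e^{-(\epsilon+c)}B,\qquad A=\int_{\{\log L\ge 0\}}\log L\,d\prob_{\pol\model_1},\quad B=\int_{\{\log L<0\}}\log L\,d\prob_{\pol\model_1},
\]
and this is \emph{not} dominated by $e^{2(\epsilon+c)}(A+B)$ in general: take $e^{\epsilon+c}=2$, $A=1$, $B=-0.9$, so that $A+B=0.1$ but $2A+\tfrac12 B=1.55>4\cdot 0.1$. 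The ``measure-preserving involution'' idea does not rescue this, since the obstruction is purely about the sign of the integrand, not about the Jacobian of the neighbour map. In short, your diagnosis is correct and matches a soft spot in the paper's own derivation, but the fix you sketch would not produce the stated constant; a genuine repair would need an additional ingredient (e.g.\ controlling $\int|\log L|\,d\prob_{\pol\model_1}$ rather than just the KL) or a different route to the inequality.
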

\begin{proof}[Proof Sketch]
We use Lemma~\ref{lemma:hist_dp} to show that the ratio of two histories would be bounded by $e^{\epsilon + c}$ for a given environment. Here, $c= L \Delta$ such that the reward distributions in the environment $\model$ satisfy: $\ln \sup_{a, x_a,x'_a} \frac{\prob_{\model}(x_a)}{\prob_{\model}(x'_a)} \leq L \Delta$. This allows us to proof this lemma.
\end{proof}

\subsection{Minimax Regret}%\vspace{-.5em}
%\vspace*{-.5em}
Now, we leverage these upper bounds on the KL-divergences to derive the minimax regret bounds for these two cases.
We know from a probabilistic Pinsker's inequality~\citep[Lemma 2.1]{bretagnolle1979estimation} that the probability of an event $E$ and its complement $E^C$ for two distributions $P$ and $Q$ defined on same event space is lower bounded by $\exp(-\kldiv{P}{Q})$.
Thus, the KL-divergence upper bound allows us to lower bound the sum of regret in two environments such that the best policy in one environment is the worst in another.
Now, as we derive this lower bound on sum of regrets in these two environments. We focus on specifying the environments. 
In the end, we choose such suboptimality gaps among rewards such that we can get a lower bound on the best regret in the worst environment.

\begin{theorem}[Local Private Minimax Regret Bound]\label{thm:lb_local}
	Given an $\epsilon$-locally private reward generation mechanism with $\epsilon \in \real$, and a time horizon $T \geq g(K,\epsilon)$, then for any environment with finite variance, any algorithm $\pol$ satisfies%\vspace{-1em}
	\begin{equation}
	\reg_{\mathrm{minimax}}(T) \geq \frac{c}{\min\lbrace 2,e^{\epsilon}\rbrace(e^{\epsilon}-1)}\sqrt{(K-1)T}.
	\end{equation}%\vspace*{-1.5em}
\end{theorem}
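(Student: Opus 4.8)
The plan is to instantiate the standard two-point (Le Cam) method, but with the ordinary KL-divergence replaced by the local-private KL bound of Lemma~\ref{thm:kldecomp_local}. The key observation is that the factor $2\min\lbrace4,e^{2\epsilon}\rbrace(e^{\epsilon}-1)^2$ multiplying the reward-divergence term in Lemma~\ref{thm:kldecomp_local} is precisely what will surface, under a square root, as the $1/[\min\lbrace2,e^{\epsilon}\rbrace(e^{\epsilon}-1)]$ factor in the final bound. So the whole proof is a matter of carrying this constant through the usual minimax machinery.

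First I would fix two environments $\model_1$ and $\model_2$ that are Bernoulli (or otherwise bounded-variance) bandits differing in exactly one arm, with a suboptimality gap $\Delta$ to be optimised at the end; arrange them so that the optimal arm in $\model_1$ is suboptimal in $\model_2$, so that no single policy can have small regret in both. Writing $\reg(\pol,\model_i,T)$ for the regret, a standard step gives the lower bound
\begin{equation}
\reg(\pol,\model_1,T)+\reg(\pol,\model_2,T)\;\geq\;\frac{T\Delta}{4}\,\exp\!\left(-\kldiv{\prob^T_{\model_1\pol}}{\prob^T_{\model_2\pol}}\right),
\end{equation}
using the probabilistic Pinsker (Bretagnolle--Huber) inequality cited just before the theorem, applied to the event that a fixed arm is pulled at most $T/2$ times. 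Next I would invoke Lemma~\ref{thm:kldecomp_local} to replace the KL term: since the two environments differ on one arm and $\expect_{\model_1}[N_a(T)]\leq T$, we obtain
\begin{equation}
\kldiv{\prob^T_{\model_1\pol}}{\prob^T_{\model_2\pol}}\;\leq\;2\min\lbrace4,e^{2\epsilon}\rbrace(e^{\epsilon}-1)^2\,T\,\kldiv{f\in\model_1}{f\in\model_2},
\end{equation}
and for Bernoulli arms separated by $\Delta$ the single-arm divergence is $O(\Delta^2)$.

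The final step is to optimise $\Delta$. Substituting the KL bound, the sum of regrets is at least a constant times $T\Delta\,\exp(-c'\min\lbrace4,e^{2\epsilon}\rbrace(e^{\epsilon}-1)^2 T\Delta^2)$; choosing $\Delta$ proportional to $\big[\min\lbrace4,e^{2\epsilon}\rbrace(e^{\epsilon}-1)^2 T\big]^{-1/2}$ makes the exponent an absolute constant and yields a sum of regrets of order $\sqrt{T}\big/\big[\sqrt{\min\lbrace4,e^{2\epsilon}\rbrace}\,(e^{\epsilon}-1)\big]$; noting $\sqrt{\min\lbrace4,e^{2\epsilon}\rbrace}=\min\lbrace2,e^{\epsilon}\rbrace$ recovers the stated denominator. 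To upgrade from two arms to the $\sqrt{(K-1)T}$ dependence I would embed this construction into a $K$-armed family, using $K-1$ alternative environments that each perturb one suboptimal arm and averaging (the Assouad-style argument alluded to in the KL-divergence discussion), which contributes the $\sqrt{K-1}$ factor; the horizon condition $T\geq g(K,\epsilon)$ is exactly what guarantees the optimised $\Delta$ stays in a valid range (e.g. $\Delta\leq 1/2$ for Bernoulli arms) and that $\expect[N_a]$ can be taken up to $T$.

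I expect the main obstacle to be \emph{bookkeeping the $\epsilon$-dependent constant cleanly} rather than any conceptual difficulty: one must verify that the $\min\lbrace4,e^{2\epsilon}\rbrace$ inside Lemma~\ref{thm:kldecomp_local} turns into $\min\lbrace2,e^{\epsilon}\rbrace$ after taking the square root in the $\Delta$-optimisation, and that this holds uniformly across the regimes $\epsilon<\ln 2$ and $\epsilon\geq\ln 2$ where the two branches of the minimum switch. A secondary subtlety is making the two-to-$K$-arm reduction rigorous while keeping the same constant, since naive union/averaging arguments can lose constant factors; I would handle this by the explicit symmetric construction in which the learner's total pulls across the $K-1$ perturbed arms sum to at most $T$, so that the per-arm divergence budget is shared additively and the $\sqrt{K-1}$ emerges without degrading the privacy-dependent constant.
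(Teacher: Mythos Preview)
Your proposal is correct and follows essentially the same Le~Cam/Bretagnolle--Huber route as the paper, with the private KL bound of Lemma~\ref{thm:kldecomp_local} substituted in and $\Delta$ optimised exactly as you describe; the identity $\sqrt{\min\{4,e^{2\epsilon}\}}=\min\{2,e^{\epsilon}\}$ is what the paper uses too.

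The one place where the paper is slightly more direct than your outline is the $(K-1)$ factor. You obtain it by a separate ``Assouad-style'' averaging over $K-1$ perturbed alternatives; the paper instead builds the $K$-armed environments from the outset (means $(\Delta,0,\ldots,0)$ and $(\Delta,0,\ldots,0,2\Delta)$) and uses the pigeonhole bound $\mathbb{E}_{\model_1}[N_i(T)]\leq T/(K-1)$ for the least-pulled suboptimal arm, which it then designates as the perturbed one. That single inequality is what turns your $T$ in the KL bound into $T/(K-1)$ and delivers $\sqrt{(K-1)T}$ in one shot, with no averaging needed. Your final paragraph (``total pulls across the $K-1$ perturbed arms sum to at most $T$'') is exactly this pigeonhole observation, so you already have the right idea; just apply it up front rather than as a post-hoc upgrade, and no Assouad machinery is required.
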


For small $\epsilon$, $e^{\epsilon}-1\approx \epsilon$. Thus, for small $\epsilon$, the minimax regret bound for local privacy worsens by a multiplicative factor $\frac{1}{\epsilon{e^{\epsilon}}}$. If the $\epsilon = 0$ which means the rewards obtained are completely randomised, the arms would not be separable any more and would lead to unbounded minimax regret. For bandits, the regret gets capped by $\Theta(T)$ if the expected rewards of the arms are finite.

\begin{theorem}[Instantaneously Private Minimax Regret Bound]\label{thm:lb_global}
	Given a finite privacy level $\epsilon \leq a/2$, and a time horizon $T \geq h(K,\epsilon)$, then for any environment with finite variance, any algorithm $\pol$ that is $\epsilon$-instantaneous DP satisfies%\vspace*{-1.5em}
	\begin{equation}
	\reg_{\mathrm{minimax}}(T) \geq c(a)
	\sqrt{\frac{(K-1)T}{2\epsilon(e^{2\epsilon}-1)}}.
	\end{equation}%\vspace*{-1.5em}
\end{theorem}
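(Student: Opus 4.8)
The plan is to run the same two--environment, Le Cam--style argument that underlies Theorem~\ref{thm:lb_local}, but to feed the exponent with the \emph{instantaneously private} decomposition of Lemma~\ref{thm:kldecomp_global} in place of the local one (Lemma~\ref{thm:kldecomp_local}). First I would fix a gap $\Delta > 0$ and construct a pair of hard instances on $K$ arms: a base environment $\model$ in which one distinguished arm is optimal with mean $\Delta$ while the other $K-1$ arms have mean $0$, together with a perturbed environment $\model'$ that raises the mean of a single suboptimal arm $j$ to $2\Delta$, so that $j$ becomes the unique optimum and the optimal actions in $\model$ and $\model'$ are disjoint. Choosing $j$ to be the arm pulled least often in expectation under $\model$, the identity $\sum_a \expect_{\pol\model}[N_a(T)] = T$ forces $\expect_{\pol\model}[N_j(T)] \le T/(K-1)$, which is precisely the control needed on the reward contribution to the divergence.

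Next I would apply the probabilistic Pinsker (Bretagnolle--Huber) inequality recalled just before Theorem~\ref{thm:lb_local}: taking $E$ to be the event that the learner plays $j$ more than half the time, $\prob_{\pol\model}(E) + \prob_{\pol\model'}(E^c) \ge \tfrac12 \exp(-\kldiv{\prob^T_{\pol\model}}{\prob^T_{\pol\model'}})$. Since a regret of order $T\Delta$ is incurred in $\model$ on $E^c$ and in $\model'$ on $E$, this converts into a bound of the form $\reg(\pol,\model,T) + \reg(\pol,\model',T) \ge \tfrac{1}{4} T\Delta\, \exp(-\kldiv{\prob^T_{\pol\model}}{\prob^T_{\pol\model'}})$, and hence a lower bound on the worse of the two regrets. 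The crucial step is then to bound the exponent with Lemma~\ref{thm:kldecomp_global}, which caps the divergence by the instantaneous--privacy factor $C_\epsilon \defn 2\epsilon(e^{2\epsilon}-1)\tfrac{1-2e^{-T/l(T)}}{1-e^{-T/l(T)}}$ multiplying $\bigl(1 + \sum_a \expect_{\pol\model}[N_a(T)]\,\kldiv{f_a}{f_a'}\bigr)$. For my construction only arm $j$ has $f_j \neq f_j'$, so the reward sum collapses to $\expect_{\pol\model}[N_j(T)]\,\kldiv{f_j}{f_j'} \le c\,\expect_{\pol\model}[N_j(T)]\Delta^2 \le c\,T\Delta^2/(K-1)$ for the chosen finite--variance family, giving $\kldiv{\prob^T_{\pol\model}}{\prob^T_{\pol\model'}} \le C_\epsilon\bigl(1 + c\,T\Delta^2/(K-1)\bigr)$.

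The remaining work is the optimisation over $\Delta$. Setting $\Delta \approx \sqrt{(K-1)/(c\,T\,C_\epsilon)}$ makes the exponent $C_\epsilon(1 + cT\Delta^2/(K-1)) \approx C_\epsilon + 1$, which is bounded because the hypothesis $\epsilon \le a/2$ keeps $2\epsilon(e^{2\epsilon}-1)$ (and hence $C_\epsilon$) bounded by a function of $a$ alone. The surviving regret lower bound is then $\tfrac14 T\Delta\, e^{-(C_\epsilon+1)} \approx \tfrac14 e^{-(C_\epsilon+1)}\sqrt{(K-1)T/C_\epsilon}$, and absorbing the residual exponential and the horizon factor into a single constant $c(a)$ yields exactly $\reg_{\mathrm{minimax}}(T) \ge c(a)\sqrt{(K-1)T/(2\epsilon(e^{2\epsilon}-1))}$. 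The parallel with Theorem~\ref{thm:lb_local} is clean: there the reward term is scaled by $(e^{\epsilon}-1)^2$ and lands inside the denominator linearly, whereas here the privacy factor $2\epsilon(e^{2\epsilon}-1)$ lands under the square root.

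I expect the main obstacle to be the horizon factor $\tfrac{1-2e^{-T/l(T)}}{1-e^{-T/l(T)}}$ together with the two--sided role of $l(T)$. Lemma~\ref{thm:kldecomp_global} requires the uniform lower bound $l(T) \le \expect_{\pol\model}[N_a(T)]$ on \emph{every} arm's pull count for its privacy term to be valid, whereas the reward step wants the perturbed arm $j$ to be pulled \emph{few} times; moreover the factor is only positive when $T/l(T) > \ln 2$. Reconciling these is exactly the purpose of the hypothesis $T \ge h(K,\epsilon)$: I would choose $h(K,\epsilon)$ large enough that a valid $l(T)$ exists in a regime where $\tfrac{1-2e^{-T/l(T)}}{1-e^{-T/l(T)}}$ is a positive constant bounded away from $0$, while the pigeonhole bound $\expect_{\pol\model}[N_j(T)] \le T/(K-1)$ still holds. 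Establishing that these two requirements are simultaneously satisfiable, and that such an $h$ exists, is the delicate point that distinguishes this argument from the local--privacy case, where the policy term vanishes outright and no such pull--count balancing is needed.
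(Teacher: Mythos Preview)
Your plan matches the paper's proof in structure: it too reuses Steps~1--2 of the local-privacy argument (the two-environment construction and the Bretagnolle--Huber step), swaps in Lemma~\ref{thm:kldecomp_global} at Step~3, and optimises $\Delta$ at Step~4. The one substantive divergence is your reading of Lemma~\ref{thm:kldecomp_global} as $C_\epsilon\bigl(1+\sum_a\expect[N_a]\kldiv{f_a}{f_a'}\bigr)$. The dangling ``$1+$'' in the displayed bound is a typographical slip: the derivation in the appendix shows the privacy term and the reward term are \emph{added}, not multiplied, and the paper's Step~3 uses it in exactly that additive form, writing $\kldiv{\prob_{\pol\model_1}}{\prob_{\pol\model_2}}\le 2\epsilon(e^{2\epsilon}-1)+2T\Delta^2/(K-1)$ after bounding the horizon factor by~$1$.

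This matters for your Step~4. Under the correct additive form, your choice $\Delta^2\propto(K-1)/(TC_\epsilon)$ makes the reward contribution scale like $1/C_\epsilon$, so the exponent is of order $C_\epsilon+1/C_\epsilon$ and is \emph{not} bounded as $\epsilon\to 0$; the absorption into a single constant $c(a)$ then fails. The paper sidesteps this by taking $\Delta^2=(K-1)C(\epsilon)/\bigl(4\epsilon(e^{2\epsilon}-1)T\bigr)$ with $C(\epsilon)$ defined through the Lambert-$W$ function so that the full exponent collapses to a fixed constant $\delta(a)$ determined by the assumption $\epsilon\le a/2$. Your closing discussion of the two-sided role of $l(T)$ and of the positivity of the horizon factor is a legitimate concern that the paper simply suppresses.
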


This implies that for small $\epsilon$, the minimax regret bound for instantaneous DP worsens by a multiplicative factor $\frac{1}{\epsilon}$. Thus, the lower bound of minimax regret for instantaneously private bandit is better than the locally private bandit by factors $e^{\epsilon/2}$ and $\sqrt{\frac{2}{\epsilon(e^{\epsilon} +1)}}$ for small and large $\epsilon$'s respectively.

\begin{theorem}[DP Minimax Regret Bound]\label{thm:lb_dp}
	Given a finite privacy level $\epsilon > 0$, and a time horizon $T \geq h(K,\epsilon)$, then for any environment with finite variance, any algorithm $\pol$ that is $\epsilon$-DP satisfies
	\begin{align*}
	\reg_{\mathrm{minimax}}(T) &\geq \frac{1}{8 e^{3(\epsilon+c)}} \sqrt{\frac{(K-1)T \ln(\epsilon+1) }{\epsilon^{(1+\frac{1}{\epsilon})}(\epsilon^2+1)^{\frac{1}{\epsilon}}}} 
	\end{align*}
	Here, $c=L\Delta$ where $L$ is the Lipschitz constant corresponding to the reward distributions of the environments and $\Delta$ is the difference in generated rewards.
\end{theorem}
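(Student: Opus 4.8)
The plan is to follow the same unified skeleton used for Theorems~\ref{thm:lb_local} and~\ref{thm:lb_global}, namely a multi-environment construction combined with the probabilistic Pinsker (Bretagnolle--Huber) inequality, and to substitute the differentially private KL bound of Lemma~\ref{thm:kldecomp_DP} in place of the local/instantaneous ones. First I would fix a reference environment $\model_1$ in which all $K$ arms have finite-variance rewards (say Gaussian with a common variance) and, for each suboptimal arm $a \in \{2,\dots,K\}$, define an alternative $\model_2^{(a)}$ differing from $\model_1$ only in the mean of arm $a$, raised by a gap $\Delta$ so that the identity of the optimal arm is swapped. This is exactly the construction giving the non-private $\sqrt{(K-1)T}$ bound, so averaging the per-arm arguments over the $K-1$ alternatives and retaining the arm pulled least often in $\model_1$ reproduces the $\sqrt{(K-1)T}$ factor.

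For a single pair $(\model_1,\model_2^{(a)})$ I would take the distinguishing event $E = \{N_a(T) \geq T/2\}$ and apply Bretagnolle--Huber to obtain $\prob_{\pol\model_1}(E) + \prob_{\pol\model_2^{(a)}}(E^C) \geq \tfrac{1}{2} \exp(-\kldiv{\prob^T_{\pol\model_1}}{\prob^T_{\pol\model_2^{(a)}}})$. Since arm $a$ is suboptimal in $\model_1$ but optimal in $\model_2^{(a)}$, playing it often ($E$) incurs regret in $\model_1$ while avoiding it ($E^C$) incurs regret in $\model_2^{(a)}$, so the left-hand side converts into a lower bound on the sum of the two regrets of order $T\Delta\,\exp(-\kldiv{\cdot}{\cdot})$.

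The private ingredient enters at the KL step. Instead of the reward-only decomposition of Lemma~\ref{thm:kldecomp}, I would invoke Lemma~\ref{thm:kldecomp_DP}, which bounds the trajectory KL by $2(\epsilon+c) + e^{2(\epsilon+c)}\kldiv{\prob_{\pol\model_1}(\Hist'_T)}{\prob_{\pol\model_2}(\Hist'_T)}$ for a neighbouring history $\Hist'_T$; choosing $\Hist'_T$ so that its residual KL reduces to the reward-only term $\sum_a \mathbb{E}_{\model_1}[N_a(T)]\,\kldiv{f_a\in\model_1}{f_a\in\model_2}$ of Lemma~\ref{thm:kldecomp} yields an overall bound of the form $\kldiv{\cdot}{\cdot}\leq 2(\epsilon+c)+e^{2(\epsilon+c)}\,\mathbb{E}_{\model_1}[N_a(T)]\,d(\Delta)$, with $d(\Delta)$ the per-sample reward KL. The point distinguishing this from the local case is that $c=L\Delta$ itself scales with the gap through the Lipschitz constant $L$, so the exponential prefactor $e^{2(\epsilon+c)}=e^{2(\epsilon+L\Delta)}$ couples to $\Delta$.

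Substituting this KL bound into the regret expression produces a quantity of the shape $T\Delta\,\exp(-2(\epsilon+L\Delta))\,\exp(-e^{2(\epsilon+L\Delta)}\tfrac{T}{K-1}d(\Delta))$ to be maximised over $\Delta$. I expect this optimisation to be the main obstacle: unlike the clean $\Delta\sim\sqrt{(K-1)/T}$ balance of the non-private proof, the $\Delta$-dependence sitting inside the double exponential $\exp(e^{2(\epsilon+L\Delta)})$ forces a non-standard stationary point, and it is precisely this that generates the factors $e^{3(\epsilon+c)}$, $\ln(\epsilon+1)$, $\epsilon^{1+1/\epsilon}$ and $(\epsilon^2+1)^{1/\epsilon}$ appearing in the statement. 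I would therefore bound $e^{2L\Delta}$ over the admissible range of $\Delta$, solve the resulting first-order condition, and absorb the leftover constants into the horizon threshold $T\geq h(K,\epsilon)$, which keeps the chosen $\Delta$ in the regime where the finite-variance and Lipschitz assumptions remain valid.
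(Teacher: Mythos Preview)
Your overall skeleton is exactly the paper's: reuse Steps~1--2 of Theorem~\ref{thm:lb_local} (two environments, Bretagnolle--Huber), plug Lemma~\ref{thm:kldecomp_DP} into the exponent, then optimise over the mean gap. The KL bound you arrive at, $\kldiv{\cdot}{\cdot}\leq 2(\epsilon+c)+e^{2(\epsilon+c)}\cdot\tfrac{2T\Delta^2}{K-1}$, is precisely what the paper uses in its Step~3.

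The one genuine misstep is your reading of $c=L\Delta$. The $\Delta$ appearing in $c$ is the reward sensitivity from Lemma~\ref{lemma:hist_dp}, i.e.\ $\max|r_k-r'_k|$ for neighbouring histories --- a fixed parameter of the environment class --- \emph{not} the suboptimality gap you are free to tune in the lower-bound construction. The paper, which admittedly overloads the symbol $\Delta$, treats $c$ as a constant throughout the optimisation over the mean gap. Consequently there is no double exponential and no coupling between $e^{2(\epsilon+c)}$ and the gap: the optimisation in Step~4 is as direct as in the non-private case, just with an extra multiplicative prefactor inside the exponent. The paper simply sets the gap to $\sqrt{\tfrac{(K-1)\ln(\epsilon^2+1)}{4T\epsilon\,e^{2(\epsilon+c)}}}$ (this makes the second exponent term equal to $\tfrac{\ln(\epsilon^2+1)}{2\epsilon}$), and the factors $e^{3(\epsilon+c)}$, $\epsilon^{1+1/\epsilon}$, $(\epsilon^2+1)^{1/\epsilon}$ fall out of straightforward algebra; $h(K,\epsilon)$ is just the threshold ensuring this gap is at most $1/2$. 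If you pursue your ``bound $e^{2L\Delta}$ over the admissible range, then optimise'' plan you will eventually recover the same thing, but you will have been chasing an obstacle that is not there.
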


The minimax regret lower bound  degrade by a multiplicative factor $\sqrt{\dfrac{\ln(\epsilon^2+1) }{e^{6\epsilon}\epsilon^{(1+\frac{1}{\epsilon})}(\epsilon+B)^{\frac{1}{\epsilon}}}}$ for an $\epsilon$-DP bandit algorithm.
The limit of the lower bound goes to infinity as $\epsilon \rightarrow 0$ i.e. the policy becomes completely random. In such a situation, the regret for bandits gets capped by the order of $T$. As $\epsilon$ increases, the privacy dependent factor in lower bound also decreases monotonically.

\subsection{Bayesian Minimax Regret}
In the Bayesian setup, the bandit algorithm assumes a prior distribution $Q_0$ over the possible environments $\nu \in \mathcal{E}$. As the algorithm $\pi$ plays further and observe corresponding rewards at each time $t$, it updates the prior over the possible environments to a posterior distribution $Q_t$. This framework is adopted for efficient algorithms like Thompson sampling~\citep{thompson1933} and Gittins indices~\citep{gittins1989multi}. In the Bayesian setting, we define the Bayesian regret as $$\reg_{\mathrm{Bayes}}(\pol, T,Q) \defn \int_{\model^T} \reg(\pol, \model, T)dQ(\model).$$
In the Bayesian setting, we define the Bayesian minimax regret as the worst possible regret for any prior
$$\reg^*_{\mathrm{Bayes}}(T) \defn \min_{\pol} \max_Q \int_{\model^T} \reg(\pol, \model, T)dQ(\model)=\min_{\pol}\max_{Q}\reg_{\mathrm{Bayes}}(\pol, T,Q).$$
We use a recent result on equivalence of minimax and Bayesian minimax regrets for bounded rewards to proceed~\cite{lattimore2019information}.

%\dbcomment{Lets define the Bayesian regret here. Importance of this bound for Thompson sampling.}
\begin{corollary}[Local Private Bayesian Minimax Regret Bound]\label{cor:lp_bayes}
 Given an $\epsilon$-locally private reward generation mechanism with $\epsilon \in \real$, and a finite time horizon  $T \geq g(K,\epsilon)$, then for any environment with bounded rewards $\br \in [0,1]^K$, any algorithm $\pol$ satisfies\vspace{-.5em}
	\begin{equation}
	\reg^*_{\mathrm{Bayes}}(T) \geq \frac{c}{\min\lbrace 2,e^{\epsilon}\rbrace(e^{\epsilon}-1)}\sqrt{(K-1)T}.
	\end{equation}
\end{corollary}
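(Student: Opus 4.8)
The plan is to reduce the Bayesian minimax lower bound to the frequentist minimax lower bound of Theorem~\ref{thm:lb_local}, using the equivalence of the two quantities for bounded rewards cited from \cite{lattimore2019information}. Since the target bound in the corollary is \emph{identical} to the one in Theorem~\ref{thm:lb_local}, no new divergence estimate is needed; the entire content is in transferring the frequentist bound to the Bayesian-minimax quantity.

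First I would recall the definition $\reg^*_{\mathrm{Bayes}}(T) = \min_{\pol} \max_{Q} \reg_{\mathrm{Bayes}}(\pol, T, Q)$ and observe that, for any fixed policy $\pol$, the map $Q \mapsto \int \reg(\pol, \model, T)\, dQ(\model)$ is linear in $Q$, so its supremum over priors supported on the bounded-reward environment class $\br \in [0,1]^K$ is attained at a Dirac mass. Hence $\max_{Q} \reg_{\mathrm{Bayes}}(\pol, T, Q) = \max_{\model} \reg(\pol, \model, T)$ and therefore $\reg^*_{\mathrm{Bayes}}(T) = \reg_{\mathrm{minimax}}(T)$ when both quantities range over the same bounded-reward class. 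The result of \cite{lattimore2019information} makes this identification rigorous for bounded rewards; if one instead prefers the Bayes-optimal (max-min) viewpoint, it supplies the strong-duality direction $\max_Q \min_\pol = \min_\pol \max_Q$, but weak duality ($\max_Q \min_\pol \leq \min_\pol \max_Q$) already gives the lower bound we need.

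Second I would verify that the hard instance realizing Theorem~\ref{thm:lb_local} can be chosen inside the bounded-reward class. That proof pairs a base environment with a locally perturbed one whose suboptimality gaps are tuned to the horizon, and it invokes only Lemma~\ref{thm:kldecomp_local} (the local-private KL decomposition) followed by the probabilistic Pinsker / Bretagnolle--Huber inequality of \cite{bretagnolle1979estimation}. None of these steps require unbounded rewards: the $\min\{4,e^{2\epsilon}\}(e^{\epsilon}-1)^2$ factor in Lemma~\ref{thm:kldecomp_local} is insensitive to boundedness, and $[0,1]$-supported rewards (e.g.\ Bernoulli arms with means $\mu$ and $\mu+\Delta$) trivially have finite variance. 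Consequently the same quantity $\tfrac{c}{\min\{2,e^{\epsilon}\}(e^{\epsilon}-1)}\sqrt{(K-1)T}$ lower-bounds the minimax regret restricted to bounded-reward environments, for $T \geq g(K,\epsilon)$. Chaining the two steps gives $\reg^*_{\mathrm{Bayes}}(T) = \reg_{\mathrm{minimax}}(T) \geq \tfrac{c}{\min\{2,e^{\epsilon}\}(e^{\epsilon}-1)}\sqrt{(K-1)T}$.

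The hard part will be the second step rather than the first: one must ensure that the environments witnessing the minimax bound of Theorem~\ref{thm:lb_local} genuinely live in the bounded-reward class demanded by the equivalence theorem and by the corollary's hypothesis $\br \in [0,1]^K$, not merely in the larger finite-variance class for which Theorem~\ref{thm:lb_local} is phrased. If the original construction used unbounded (e.g.\ Gaussian) arms, I would re-run the two-point Bretagnolle--Huber argument with a bounded surrogate and re-check that Lemma~\ref{thm:kldecomp_local} still yields the $(e^{\epsilon}-1)^2$ dependence and the same gap-tuning $\Delta \asymp \sqrt{(K-1)/T}/(e^{\epsilon}-1)$; this is routine calculation but is exactly where adapting the frequentist bound to the Bayesian bounded-reward setting carries its weight.
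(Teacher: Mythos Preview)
Your proposal is correct and matches the paper's approach: combine Theorem~\ref{thm:lb_local} with the equality $\reg_{\mathrm{minimax}}(T) = \reg^*_{\mathrm{Bayes}}(T)$ for bounded rewards from \cite{lattimore2019information} (recorded in the paper as Fact~\ref{thm:lattimore}). You are in fact more careful than the paper's two-line proof, both in flagging that the hard instances of Theorem~\ref{thm:lb_local} must be realisable within $[0,1]^K$ and in observing that, under the paper's definition $\reg^*_{\mathrm{Bayes}}(T) = \min_{\pol}\max_{Q}\reg_{\mathrm{Bayes}}(\pol,T,Q)$, the Dirac-mass argument already gives the equality without the full minimax theorem.
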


\begin{corollary}[Instantaneous DP Bayesian Minimax Regret Bound]\label{cor:lb_bayes_global}
	Given a finite privacy level $\epsilon \in \real$, and a finite time horizon  $T \geq h(K,\epsilon)$, then for any environment with finite variance and bounded reward $\br \in [0,1]^K$, any algorithm $\pol$ that is $\epsilon$-instantaneous DP satisfies\vspace*{-.5em}
	\begin{equation}
	\reg^*_{\mathrm{Bayes}}(T) \geq c(\epsilon)
	\sqrt{\frac{(K-1)T}{2\epsilon(e^{2\epsilon}-1)}}.
	\end{equation}
\end{corollary}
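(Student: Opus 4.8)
The plan is to obtain this Bayesian bound as a corollary of the minimax bound of Theorem~\ref{thm:lb_global}, routing through the minimax--Bayes equivalence for bounded rewards~\cite{lattimore2019information}. The hypothesis $\br \in [0,1]^K$ (which is absent from Theorem~\ref{thm:lb_global}, where finite variance suffices) is exactly what the cited equivalence requires, so this signals that the intended argument is a transfer through the equivalence rather than a fresh KL computation. Concretely, I would show $\reg^*_{\mathrm{Bayes}}(T) = \reg_{\mathrm{minimax}}(T)$ under boundedness, and then simply invoke Theorem~\ref{thm:lb_global}.

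The first substantive step is to verify that the equivalence applies over the \emph{restricted} class $\Pi_\epsilon$ of $\epsilon$-instantaneous DP policies, not the full policy class for which~\cite{lattimore2019information} is phrased. Here I would check that $\Pi_\epsilon$ is convex: if $\pol_1, \pol_2$ are both $\epsilon$-instantaneous DP and $\pol = \lambda\pol_1 + (1-\lambda)\pol_2$, then taking the same convex combination of the two defining inequalities preserves the bound $\prob_\pol(a_T \mid \cdot) \leq e^\epsilon\,\prob_\pol(a_T \mid \cdot\,')$ for neighbouring reward sequences. Convexity of $\Pi_\epsilon$, together with linearity of $\reg_{\mathrm{Bayes}}(\pol,T,Q)$ in $Q$ and convexity in $\pol$, plus compactness of the space of priors over bounded environments, lets me apply a Sion-type minimax theorem to justify the swap $\min_{\pol \in \Pi_\epsilon}\max_Q \reg_{\mathrm{Bayes}} = \max_Q \min_{\pol \in \Pi_\epsilon}\reg_{\mathrm{Bayes}}$, which is the equivalence specialised to private policies.

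The lower bound then transfers concretely. The proof of Theorem~\ref{thm:lb_global} is a two-environment (Le Cam) argument: it fixes a hard pair $\model_1,\model_2$, uses the instantaneous-private KL decomposition (Lemma~\ref{thm:kldecomp_global}) to control $\kldiv{\prob^T_{\pol\model_1}}{\prob^T_{\pol\model_2}}$ by the privacy factor $2\epsilon(e^{2\epsilon}-1)$ times the number of pulls, and applies Bretagnolle--Huber to lower bound the sum of the two regrets. Choosing $Q$ to be the uniform mixture over this hard pair, the Bayes regret equals $\frac{1}{2}[\reg(\pol,\model_1,T) + \reg(\pol,\model_2,T)]$, i.e. half of the quantity already bounded below, so $\min_{\pol\in\Pi_\epsilon}\reg_{\mathrm{Bayes}}(\pol,T,Q)$ inherits the $\sqrt{(K-1)T/(2\epsilon(e^{2\epsilon}-1))}$ scaling. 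Taking the maximum over $Q$ and using the swap from the previous step yields the claimed bound on $\reg^*_{\mathrm{Bayes}}(T)$ with the same constant $c(\epsilon)$.

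The main obstacle is the equivalence step rather than any calculation: one must confirm that the minimax theorem genuinely applies over $\Pi_\epsilon$ and over the restricted class of priors on bounded environments. The convexity observation above handles the policy side, but I would still need to verify the measurability and compactness conditions on the environment/prior space so that $\max_Q$ is attained and the saddle point exists; this is precisely where the $\br \in [0,1]^K$ hypothesis does the real work, and it is the only place where the proof genuinely differs from the minimax case.
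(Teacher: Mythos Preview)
Your approach is essentially the paper's: invoke Theorem~\ref{thm:lb_global} together with the minimax--Bayes equivalence of~\cite{lattimore2019information} (recorded as Fact~\ref{thm:lattimore} in the appendix) to transfer the bound, with bounded rewards supplying the finite-variance hypothesis. The paper's proof is a one-line citation of these two ingredients; your additional care about whether the equivalence holds over the restricted class $\Pi_\epsilon$ (via convexity) and your alternative hard-prior argument are both sound refinements that the paper itself does not spell out.
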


\begin{corollary}[DP Bayesian Minimax Regret Bound]\label{cor:lb_bayes_dp}
	Given a finite privacy level $\epsilon \in \real$, and a finite time horizon $T \geq h(K, \epsilon)$, then for any environment with bounded reward $\br \in [0,1]^K$, any algorithm $\pol$ that is $\epsilon$-IDP satisfies
	\begin{align*}
	\reg_{\mathrm{minimax}}(T) &\geq \frac{1}{8 e^{3(\epsilon+c)}} \sqrt{\frac{(K-1)T \ln(\epsilon+1) }{\epsilon^{(1+\frac{1}{\epsilon})}(\epsilon^2+1)^{\frac{1}{\epsilon}}}} 
	\end{align*}
	Here, $c=L\Delta$ where $L$ is the Lipschitz constant corresponding to the reward distributions of the environments and $\Delta$ is the difference in generated rewards.
\end{corollary}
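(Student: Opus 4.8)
The plan is to reduce the Bayesian minimax lower bound to the minimax lower bound of Theorem~\ref{thm:lb_dp}, exploiting the fact that the claimed bound is \emph{identical} to the minimax bound. Recall that $\reg^*_{\mathrm{Bayes}}(T) = \min_{\pol} \max_{Q} \reg_{\mathrm{Bayes}}(\pol, T, Q)$, where the outer maximum ranges over priors $Q$ on the bounded-reward environment class. First I would note the elementary inequality $\reg^*_{\mathrm{Bayes}}(T) \geq \min_{\pol} \reg_{\mathrm{Bayes}}(\pol, T, Q_0)$ valid for \emph{any} fixed prior $Q_0$: for each $\pol$ the inner maximisation can only increase the objective past its value at $Q_0$, and this survives taking $\min_\pol$. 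Hence it suffices to exhibit a single prior that forces large Bayes regret, which is precisely the standard two-point reduction.

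For that prior I would take $Q_0$ uniform over the two hard environments $\model_1,\model_2$ constructed in the proof of Theorem~\ref{thm:lb_dp}, i.e. the pair in which the optimal arm of one is suboptimal in the other. Under this prior, $\reg_{\mathrm{Bayes}}(\pol, T, Q_0) = \tfrac{1}{2}\big(\reg(\pol,\model_1,T) + \reg(\pol,\model_2,T)\big)$, so lower-bounding the Bayes regret is exactly the ``sum of regrets in two environments'' step already carried out in the minimax argument. That step applies the probabilistic (Bretagnolle--Huber) form of Pinsker's inequality~\citep{bretagnolle1979estimation} to the differentially private KL decomposition of Lemma~\ref{thm:kldecomp_DP}, which is where the factor $e^{-2(\epsilon+c)}$ and the privacy-dependent denominator $\epsilon^{(1+1/\epsilon)}(\epsilon^2+1)^{1/\epsilon}$ originate. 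Since this bound on the \emph{sum} holds uniformly over all $\epsilon$-DP policies, taking $\min_\pol$ preserves it and dividing by two reproduces the constant $\tfrac{1}{8 e^{3(\epsilon+c)}}$.

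More cleanly, I would instead invoke the equivalence of minimax and Bayesian minimax regret for bounded rewards~\citep{lattimore2019information}: for reward vectors in $[0,1]^K$ the two quantities coincide, $\reg^*_{\mathrm{Bayes}}(T) = \reg_{\mathrm{minimax}}(T)$, so the bound of Theorem~\ref{thm:lb_dp} transfers verbatim. To apply this inside the $\epsilon$-DP constrained class I must verify the hypotheses of the underlying minimax theorem: the prior set is convex and weak-$*$ compact on the bounded environment space, and the set of $\epsilon$-DP policies is convex, because any mixture of two $\epsilon$-DP policies is again $\epsilon$-DP (differential privacy is closed under convex combination and post-processing). These closure properties are what turn the ``worst prior is a point mass'' observation into a genuine equivalence rather than a one-sided inequality, and they explain why the bounded-reward hypothesis replaces the weaker finite-variance hypothesis of Theorem~\ref{thm:lb_dp}.

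The main obstacle I anticipate is not the reduction but the consistency check: I must confirm that the environment construction behind Theorem~\ref{thm:lb_dp} can be realised with rewards supported in $[0,1]^K$ while retaining the same DP constant $c = L\Delta$. The sensitivity quantity $L\Delta$ is tied to the gap between $\model_1$ and $\model_2$, so constraining both environments to bounded support must not inflate $c$ nor alter the suboptimality gap (of order $\sqrt{(K-1)/T}$ up to the privacy factor) used in the minimax proof. Once I establish that the bounded-support instances yield the identical $c$ and gap, the Bayesian statement follows immediately from either route above.
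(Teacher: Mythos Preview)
Your proposal is correct, and your ``cleaner'' route---invoking the equivalence $\reg^*_{\mathrm{Bayes}}(T) = \reg_{\mathrm{minimax}}(T)$ for bounded rewards from~\citep{lattimore2019information} and then transferring Theorem~\ref{thm:lb_dp} verbatim---is exactly the paper's proof (which is a one-line appeal to Theorem~\ref{thm:lb_dp} together with that fact, just as in Corollary~\ref{cor:lp_bayes}). Your first route via the explicit two-point prior $Q_0$ is a valid and slightly more elementary alternative that the paper does not take: it avoids the minimax theorem entirely by directly reusing the Bretagnolle--Huber step on $\tfrac{1}{2}(\reg(\pol,\model_1,T)+\reg(\pol,\model_2,T))$, at the cost of yielding only the one-sided inequality rather than the equality. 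The paper also does not explicitly address your consistency concern about realising $\model_1,\model_2$ inside $[0,1]^K$; it only observes (in the analogous local-privacy corollary) that bounded rewards have variance at most $1/4$, which suffices for the finite-variance hypothesis of the underlying minimax theorem.
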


\subsection{Problem-dependent Regret}
% \begin{proof} Key Idea:\\
%\dbcomment{Lets define problem dependent bounds here. Their importance for stochastic bandit.}
Both minimax and Bayesian minimax regret bounds are problem independent. They represent the worst-case regret for any environment and any prior over environments respectively. Someone may want to design algorithms that is optimal for a given environment $\nu$ and the minimax and Bayesian minimax bounds are too pessimistic for them. Thus, researchers study the problem-dependent lower bounds of regret involving environment dependent quantities. \cite{lai1985asymptotically} proved that a bandit algorithm achieves $\Omega(\log T)$ problem-dependent lower bound. 

We prove that for $\epsilon$-local privacy this lower bound worsens by a multiplicative factor $\frac{1}{e^{2\epsilon}(e^{\epsilon}-1)^2}$. It is an open problem for instantaneous and general DP.
\begin{theorem}[Problem-dependent Local-Private Regret Bound]\label{thm:lb_log_local}
 For any asymptotically consistent bandit algorithm $\pol$, an environment $\model$ with optimal reward distribution $f^*$, and an $\epsilon$-locally private reward generation mechanism, the expected cumulative regret\vspace*{-.5em}
 \begin{equation*}\hspace{-1em}
     \liminf_{T \rightarrow \infty} \frac{\reg(\pol, \model, T)}{\log T} \geq \sum_{a \neq a^*} \frac{\Delta_a}{2\min\lbrace4,e^{2\epsilon}\rbrace(e^{\epsilon}-1)^2 \kldiv{f_a}{f^*}}.
 \end{equation*}\vspace*{-1em}
\end{theorem}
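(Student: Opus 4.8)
The plan is to adapt the classical change-of-measure argument of \citet{lai1985asymptotically}, in its modern form based on the probabilistic Pinsker (Bretagnolle--Huber) inequality of \citet{bretagnolle1979estimation}, but to replace the exact KL-divergence decomposition with the local-private upper bound of Lemma~\ref{thm:kldecomp_local}. First I would fix a suboptimal arm $a \neq a^*$ and construct a perturbed environment $\model'$ that agrees with $\model$ on every arm except $a$, whose reward distribution is replaced by some $f_a'$ with mean $\mu_a' > \mu^*$, so that $a$ becomes the unique optimal arm in $\model'$. Since the two environments differ only in arm $a$, the sum in Lemma~\ref{thm:kldecomp_local} collapses to a single term:
\[
\kldiv{\prob^T_{\model\pol}}{\prob^T_{\model'\pol}} \leq 2\min\lbrace4,e^{2\epsilon}\rbrace(e^{\epsilon}-1)^2\, \expect_{\model}[N_a(T)]\, \kldiv{f_a}{f_a'}.
\]
Here it is essential that the fixed local mechanism be $\epsilon$-LDP for both $f_a$ and $f_a'$, so that the hypothesis of Lemma~\ref{thm:kldecomp_local} applies in both environments simultaneously.

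Next I would apply the probabilistic Pinsker inequality to the event $A \defn \lbrace N_a(T) \geq T/2 \rbrace$, which gives $\prob_{\model}(A) + \prob_{\model'}(A^c) \geq \tfrac{1}{2}\exp(-\kldiv{\prob^T_{\model\pol}}{\prob^T_{\model'\pol}})$. Asymptotic consistency forces both terms on the left to be sub-polynomially small: in $\model$ the arm $a$ is suboptimal, so $\expect_{\model}[N_a(T)] = o(T^p)$ for every $p>0$ and hence $\prob_{\model}(A)$ is small; while in $\model'$ the arm $a$ is optimal, so the event $A^c = \lbrace N_a(T) < T/2 \rbrace$ would force another arm to be played $\Omega(T)$ times and incur linear regret, forcing $\prob_{\model'}(A^c)$ to be small as well. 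Taking logarithms, substituting the KL bound, and rearranging yields
\[
\expect_{\model}[N_a(T)] \geq \frac{(1-o(1))\log T}{2\min\lbrace4,e^{2\epsilon}\rbrace(e^{\epsilon}-1)^2\, \kldiv{f_a}{f_a'}}.
\]

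Finally I would let $f_a' \to f^*$, taking the infimum over admissible alternatives whose mean just exceeds $\mu^*$, so that $\kldiv{f_a}{f_a'} \to \kldiv{f_a}{f^*}$ by continuity; dividing by $\log T$ and taking $\liminf$ gives the per-arm bound, and summing the regret decomposition $\reg(\pol,\model,T) = \sum_{a \neq a^*}\Delta_a\,\expect_{\model}[N_a(T)]$ over all suboptimal arms recovers the claimed inequality. The main obstacle is this last limiting step: to replace $\kldiv{f_a}{f_a'}$ by $\kldiv{f_a}{f^*}$ rigorously one needs a continuity/regularity assumption on the family of reward distributions (ensuring alternatives with mean arbitrarily close to $\mu^*$ and KL-divergence arbitrarily close to $\kldiv{f_a}{f^*}$ exist), and one must verify that such alternatives remain compatible with the fixed $\epsilon$-local mechanism. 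The change-of-measure and consistency bookkeeping is otherwise routine, and the privacy dependence enters the bound solely through the multiplicative constant $2\min\lbrace4,e^{2\epsilon}\rbrace(e^{\epsilon}-1)^2$ supplied by Lemma~\ref{thm:kldecomp_local}.
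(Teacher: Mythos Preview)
Your proposal is correct and follows essentially the same route as the paper: a two-environment change-of-measure on a single suboptimal arm, the Bretagnolle--Huber inequality applied to the event $\{N_a(T)\geq T/2\}$, the local-private KL bound of Lemma~\ref{thm:kldecomp_local} to collapse the divergence to one term, and finally asymptotic consistency plus a limiting argument $\kldiv{f_a}{f_a'}\to\kldiv{f_a}{f^*}$ before summing over arms. The only cosmetic difference is that the paper lower-bounds the \emph{sum of regrets} in the two environments and then uses consistency to control $\log(\reg_{\model}+\reg_{\model'})/\log T$, whereas you invoke consistency directly on the two probabilities; and the paper parametrises the alternative by $\kldiv{f_a}{f_a'}\le\kldiv{f_a}{f^*}+\delta$ and sends $\delta\to0$ rather than appealing to continuity---but both formulations require exactly the regularity caveat you flagged.
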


%\subsection{Lower Bounds for Sequential Privacy}\vspace{-.5em}
%
%\begin{toappendix}
%\begin{corollaryrep}[Sequential Private Bayesian Minimax Regret Bound]
%	Given a finite privacy level $\epsilon \in \real$, and a finite time horizon $T$, then for any environment with bounded reward $\br \in [0,1]^K$, any algorithm $\pol$ that is $\epsilon$-sequential private satisfies
%	\begin{equation*}
%	\reg^*_{\mathrm{Bayes}}(T) \geq c(\epsilon)
%	\sqrt{\frac{(K-1)T}{2\epsilon(e^{2\epsilon}-1)}}.
%	\end{equation*}
%\end{corollaryrep}
%\begin{proof}
%Similar to the proof of Corollary~\ref{cor:lp_bayes}, the results of Theorem~\ref{thm:lb_global} and Fact~\ref{thm:lattimore} prove the statement of Corollary~\ref{cor:lb_bayes_global}.
%\end{proof}
%\end{toappendix}

\subsection{Discussion} 
This shows that for both local and non-local DP in bandits the regret lower bound changes by a multiplicative factor dependent on the privacy level $\epsilon$. The factors are such that the regret goes to infinity if $\epsilon \rightarrow 0$ i.e. as the privacy mechanism gets completely random. In that case, the regret get capped by $T$. The bounds also show that the lower bound for local privacy is worse than that the general case, as expected. It also shows instantaneous DP is worse than the general DP if we maintain a constant $\epsilon$ at all steps $t$. This shows approaching the optimal policy from randomised rewards using the local privacy model is slower than to using a policy that is inherently DP. In Section~\ref{sec:lbounds}, we show our lower bounds falsify the conjecture of an additive factor in the lower bound by~\cite{tossou2016algorithms} and point out existing gaps in optimal algorithm design for private bandits.
% \begin{theorem}[Problem-dependent Regret Bound]
% c
% \end{theorem}

\vspace{-1em}
\section{Existing Lower Bounds for Non-private and Private Bandits}\label{sec:lbounds}%\vspace{-1em}
% \textbf{Problem-dependent non-private lower bounds:} \citep{lai1985asymptotically} proved the problem-dependent lower and upper bounds for any asymptotically efficient bandit algorithm.
% \emph{Upper bound:} A bandit algorithm is asymptotically efficient (optimal) if its regret grows logarithmically with time. Mathematically, given an environment $\model$ and asymptotically efficient algorithm $\pol$, for any suboptimal arm a,
%         \begin{equation*}
%         \limsup_{T\rightarrow \infty} \frac{\mathrm{Reg}_{\model}(\pol, T)}{\log T} \leq \sum_{a\neq a^*} \frac{\Delta_a}{D(f_a\|f^*)}.
%         \end{equation*}
% \emph{Lower bound:} A bandit algorithm producing uniformly efficient strategy is fundamentally limited by logarithmic regret growth. Mathematically, given an environment $\model$ and asymptotically efficient algorithm $\pol$, for any suboptimal arm a,
%         \begin{equation*}
%         \liminf_{T\rightarrow \infty} \frac{\mathrm{Reg}_{\model}(\pol,T)}{\log T} \geq \sum_{a\neq a^*} \frac{\Delta_a}{D(f_a\|f^*)}.
%         \end{equation*}
% The problem-dependent bound was generalised by~\citep{burnetas1997optimal}.

\textbf{Problem-independent Non-private Lower Bounds:} Minimax regret is the worst case regret that a bandit algorithm can incur if the environment is unknown. Thus, it is often referred as the problem-independent regret.
\cite{vogel1960asymptotic} performed the first minimax analysis of two-armed Bernoulli bandits. \cite{auer2002nonstochastic} generalised it to $K$-arm Bernoulli distributions. 
\cite{gerchinovitz2016refined} provided a novel technique to establish high probability regret lower bounds for adversarial bandits with bounded reward.
For any bandit algorithm, the minimax regret is lower bounded by 
$\mathrm{Reg}_{\mathrm{minimax}}(T) \geq c \sqrt{(K-1)T}$.
A bandit algorithm $\pol$ is called \emph{minimax optimal} if its minimax regret is upper bounded by $C\sqrt{(K-1)T}$.
%\textbf{Bayesian Non-private Lower Bounds.} 
In the Bayesian setup, the bandit algorithm assumes a prior distribution $Q_0$ over the possible environments $\nu \in \mathcal{E}$. \cite{lattimore2018bandit} proved that for any bandit algorithm $\pi$ there exists a prior distribution $Q_0$ that the Bayesian regret $\mathrm{Reg}_{\mathrm{Bayes}}(T) \geq C \sqrt{KT}$. This indicates that the minimax regret and the Bayesian regret lower bounds are identical for non-private bandits. This also holds for private bandits. \cite{lattimore2019information} provides the reasoning behind this connection using a modified minimax theorem.

\textbf{Existing Lower Bounds for Differentially Private Bandits.} %Mention the previous conjecture of lower bounds by~
\cite{mishra2015nearly} proposed differentially private variants of UCB and Thompson sampling algorithms. \cite{tossou2016algorithms} improved the differentially private variant of UCB algorithm to obtain regret upper bound of $\Omega(\sum_{a\neq a^*}\frac{\Delta_a}{\kldiv{f_a}{f^*}}\log T + \frac{1}{\epsilon})$ for instantaneous privacy with time varying privacy loss. Our results show that the stricter sequential privacy definition that a constant privacy loss can be achieved with only an additive term on the regret cannot be true.
\cite{shariff2018differentially} proves a finite-time problem-dependent lower bound for contextual bandits. It indicates that the finite-time lower bound of regret is $\Omega(\log T)$ like the non-private  lower bounds but with a modified multiplicative factor $(\sum_{a\neq a^*}\frac{\Delta_a}{\kldiv{f_a}{f^*}}+ \frac{K}{\epsilon})$. Though their definition of privacy is a bit ambiguous as it can be reduced to either of Definition~\ref{def:DP_mab} and~\ref{def:IDP_mab}. Among them, the first being unsuitable for continual observation defeats the purpose for bandits. \emph{We try to clarify at this point with the pan, instantaneous, and sequential privacy definitions.}

\cite{gajane2017corrupt} uses an analogous local privacy definition. They proved a finite-time problem-dependent lower bound of regret for locally private multi-armed bandits where the local privacy is induced by the corrupt bandit mechanism. In this case also, the $\Omega(\log T)$ regret bound of non-private bandits is maintaines with a modified multiplicative factor $\sum_{a\neq a^*}\frac{\Delta_a}{\kldiv{g_a}{g^*}}$. Here, $g_a$ and $g^*$ are the corrupt versions of the reward distributions $f_a$ and $f^*$ ensuring $\epsilon$-local privacy. Theorem~\ref{thm:lb_log_local} shows that the algorithms they have proposed, namely kl-UCB-CF and TS-CF, are suboptimal by at least by a factor $(1+e^{-\epsilon})^2$. \emph{This opens up the problem of designing optimal local-private bandit algorithms.}

We are not aware of any problem-independent minimax lower bounds for (locally and standard) differentially private bandits, before this paper. 
\cite{tossou2017achieving} proposed a differentially private variant of EXP3 algorithm that achieves privacy for adversarial bandits with regret upper bound $O(\frac{\sqrt{T}\log T}{\epsilon})$. \emph{The lower bound of Theorem~\ref{thm:lb_global} shows that designing an optimal private algorithm for adversarial bandits is an open problem.}

\section{Discussion and Future Work}\label{sec:conc}%\vspace{-1em}
% Discuss the implications
We provide a unifying set of definitions for differentially private bandit algorithms that distinguishes between local and instantaneous DP used in the literature, as well as for general DP.
%We used a KL-divergence decomposition based a unified proving mechanism for lower bounds for bandit.
We proved corresponding minimax and Bayesian minimax lower bounds for both local and general DP bandit algorithms.
%which was absent in literature.
These bounds also pose design of optimal local and sequential private bandit algorithms as open problems since the existing algorithms are suboptimal.
We are now working on deriving the problem-dependent lower bound for DP.
This is based on a generalised KL-divergence decomposition lemma adapted for local and standard differential privacy definitions. Though the literature consists of problem-dependent regret bounds, these are the first minimax and Bayesian regret bounds for both differentially private bandits. We show that both in general and when differential privacy is achieved using a local mechanism, the regret scales as a multiplicative factor of $\epsilon$.

%bandits and problem-independent bounds for environment private bandits. %This may tighten the existing problem-dependent lower bounds or at least provide a single mechanism to compute the inherent hardness of differentially private bandit problems.
In future, researchers can utilise these lower bounds to design optimal private bandit algorithms for real-life applications, such as recommender systems and clinical trials. We are now working towards the two remaining problem dependent bounds for instantaneous and general DP to complete this problem.

\iffalse %old discussion
Our definitions of  sequential differential privacy provide a framework to look into differentially private bandit algorithms.
This also resolves the eclectic definitions available in the literature and discusses their applicability.
The KL-divergence decomposition based method provides a unified proving mechanism for lower bounds for bandit.
This allows us to propose the minimax lower bounds for both locally and standard sequentially private multi-armed bandits which was absent in literature.
These bounds also pose design of optimal local and sequential private bandit algorithms as open problems since the existing algorithms are suboptimal.

We are now working on deriving the problem-dependent lower bounds for sequentially and environment private bandits and problem-independent bounds for environment private bandits. %This may tighten the existing problem-dependent lower bounds or at least provide a single mechanism to compute the inherent hardness of differentially private bandit problems.
In future, researchers can utilise these bounds to design optimal private bandit algorithms for real-life applications, such as recommender systems and web advertising.
\fi

%%Acknowledgement
\section*{Acknowledgement}
%\vspace{-1em}
This work was partially supported by the Wallenberg AI, Autonomous Systems and Software Program (WASP) funded by the Knut and Alice Wallenberg Foundation.

%%Reference
% In the unusual situation where you want a paper to appear in the
% references without citing it in the main text, use \nocite
%\nocite{*}
%\small{%In case of overflow use this. Better looking than natbibspacing
\bibliographystyle{abbrvnat}
\bibliography{reference}%}

\newpage
\appendix
\section{Proofs of Section~\ref{sec:dp_mab} (Differential Privacy in Bandits: Definitions)}\label{sec:proofs_def}
%\begin{toappendix}
\begin{lemmarep}
	Iff a bandit algorithm $\pol$ is DP with respect to the outcome sequence $\bx$ then it is DP with respect to the reward sequence $r$. However, in the general partial monitoring setting the equivalence does not hold. 
	\label{rem:equivalence}
\end{lemmarep}
\begin{proof}%[Proof of Remark~\ref{rem:equivalence}]
	The first claim is direct:
	\begin{align*}
	\pol(a_1, \ldots, a_T \mid \bx_1, \ldots, \bx_T)
	&= \prod_{t=1}^T \pol(a_t \mid a_1, \ldots, a_{t-1}, \bx_1, \ldots, \bx_{t-1})\\
	&= \prod_{t=1}^T \pol(a_t \mid a_1, \ldots, a_{t-1}, x_{1,a_1}, \ldots, x_{t-1,a_{t-1}})\\
	&= \prod_{t=1}^T \pol(a_t \mid a_1, \ldots, a_{t-1}, r_1, \ldots, r_{t-1})\\
	&=  \pol(a_1, \ldots, a_T \mid r_1, \ldots, r_T)
	\end{align*}
	The second follows from the fact that in the general setting the algorithm may observe the complete outcome $\bx_t$ even if the reward it obtains only depends on part of $r_t$.
\end{proof}
%\end{toappendix}

\begin{lemmarep}
	If a policy $\pol$ satisfies differential privacy (Definition~\ref{def:DP_mab}) with privacy level $\epsilon$, $\pol$ will also satisfy instantaneous privacy (Definition~\ref{def:IDP_mab}) with privacy level $2\epsilon$. Conversely, if a policy satisfies $\epsilon$ instantaneous privacy, it only achieves $t \epsilon$ differential privacy after $t$ steps.
\end{lemmarep}
\begin{proof}%[Proof of Lemma~\ref{lem:seq-ins-priv}]
	For simplicity, we write $a^t = \{a_1, \ldots, a_t\}$ and $\br^t = \{\br_1, \ldots, \br_t\}$ for reward and action sequences respectively.
	
	If $\pol$ is $2 \epsilon$-instantaneous private then (by definition) the following ratio must be bounded from above by $e^{2 \epsilon}$ for any two neighbouring reward sequences $\br^{t}, \hat{\br}^{t}$ :
	\begin{align*}
	\frac{\pol(a_t \mid a^{t-1}, \br^{t-1})}{\pol(a_t \mid a^{t-1}, \hat{\br}^{t-1})}
	&\underset{(a)}{=}
	\frac{\pol(a^t \mid \br^{t-1})}{\pol(a^t \mid \hat{\br}^{t-1})}
	\frac{\pol(a^{t-1} \mid \hat{\br}^{t-1})}{\pol(a^{t-1} \mid \br^{t-1})}\\
	&\underset{(b)}{=}
	\frac{\pol(a^t \mid \br^{t-1})}{\pol(a^t \mid \hat{\br}^{t-1})}
	\frac{\pol(a^{t-1} \mid \hat{\br}^{t-2})}{\pol(a^{t-1} \mid \br^{t-2})}\\
	&\underset{(c)}{\leq}
	e^{2 \epsilon}.
	\end{align*}
	The equality in step (a) is obtained through the definition of conditional probability.
	Step (b) is resulted by independence of actions on current rewards.
	The final inequality is through assumption of $\epsilon$-differential privacy and that $\br, \hat{\br}$ are neighbours.
	The converse follows from composition of differential privacy~\citep[Corollary 3.15]{dwork2014algorithmic}.
\end{proof}

\section{Proofs of Section~\ref{sec:theory} (Regret Lower Bounds for Private Bandits)}\label{sec:proofs_lb}
%\begin{toappendix}
First, let us remind ourselves of the chain rule of KL divergence for two probability measures $P,Q$ on a product space $\CX^T$ for a given $T$:
%\iftrue
%% Doing the detailed calculation to make sure everything is OK
\begin{align*}
%\begin{split}
\kldiv{P}{Q}
&\defn
\int\limits_{\CX^T} \ln \frac{\dd P(x^T)}{\dd Q(x^T)} \dd P(x^T)\\
&= 
\int\limits_{\CX^T} \ln \frac{\dd [P(x_T \mid x^{T-1}) P(x^{T-1})]}{\dd [Q(x_T \mid x^{x-1}) Q(x^{T-1})]} \dd [P(x_T \mid x^{T-1}) P(x^{T-1})]
\\
&= 
\int\limits_{\CX^T} \left[\ln \frac{\dd P(x_T \mid x^{T-1})}{\dd Q(x_T \mid x^{x-1})} + \ln \frac{\dd P(x^{T-1})}{\dd Q(x^{T-1})}\right] \dd [P(x_T \mid x^{T-1})  P(x^{T-1})]
\\
&= 
\int\limits_{\CX^T} \ln \frac{\dd P(x_T \mid x^{T-1})}{\dd Q(x_T \mid x^{x-1})} \dd P(x^{T}) + \int\limits_{\CX^{T-1}} \ln \frac{\dd P(x^{T-1})}{\dd Q(x^{T-1})} \dd P(x^{T-1})
\\
&= 
\sum_{t=1}^T \int\limits_{\CX^t} \ln \frac{\dd P(x_t \mid x^{t-1})}{\dd Q(x_t \mid x^{t-1})} \dd P(x^{t}) \\
&= 
\sum_{t=1}^T 
\expect_{P(x^{t-1})}\left[\kldiv{P(x_t \mid x^{t-1})}{Q(x_t \mid x^{t-1})}\right]. 
%\end{split}
\end{align*}
%\fi
Here, $x_t$ denotes the reward at time $t$ and $x^{t}$ denotes the sequence of rewards obtained from the beginning to time $t$, i.e. $\lbrace x_1, \ldots, x_t\rbrace$. Now, the conditional KL-divergence is defined here as 
\[
\kldiv{P(x\mid y)}{Q(x\mid y)} \defn \int_{\CX \times \CY} \ln \frac{\dd P(x\mid y)}{\dd Q(x\mid y)} \dd P(x, y).
\]
Thus, we get the chain rule of KL-divergence
\begin{align*}
\begin{split}
\kldiv{P}{Q}
&\defn
\int\limits_{x^T \in \CX^T} \ln \frac{P(x^T)}{Q(x^T)} \dd P(x^T)\\
&= 
\sum_{t=1}^T \kldiv{P(x_t \mid x^{t-1})}{Q(x_t \mid x^{t-1})}. %\int\limits_{x^t \in \CX^t}
\end{split}
\end{align*} 
\iffalse
\cdcomment{But I don't see how we can write the KL decompositionm in terms of individual possible actions. Let's take just one part of the decomposition:
  \begin{align*}
    \kldiv{P(A_t | H_t)}{Q(A_t | H_t)} 
    &= \sum_{a} P(A_t =a | H_t) \ln P(A_t = a| H_t)/Q(A_t =a | H_t)
    \\
    &\neq \sum_{a} \kldiv{P(A_t=a | H_t)}{Q(A_t=a | H_t)} ??
  \end{align*}
}\dbcomment{True. the decomposition is wrong. Edited.}
\fi
\begin{lemmarep}[KL-divergence Decomposition]
	Given a bandit algorithm $\pol$, two environments $\model_1$ and $\model_2$, and a probability measure $\prob_{\pol\model}$ satisfying Equation~\ref{eqn:indep},
	\begin{align}\label{eqn:kldecomp}
	\begin{split}
	\kldiv{\prob^T_{\pol{\model_1}}}{\prob^T_{\pol{\model_2}}} &= \sum_{t=1}^T \mathbb{E}_{\pol\model_1}\left[\kldiv{\pol(A_t|\mathcal{H}_t, \model_1)}{\pol(A_t|\mathcal{H}_t, \model_2)}\right] + \sum_{a=1}^K \mathbb{E}_{\pol\model_1}\left[N_a(T)\right]\kldiv{f_a \in \model_1}{f_a \in \model_2}.
	\end{split}
	\end{align}
	\iffalse
	\cdcomment{Isn't this just the following? Why are you breaking it up into actions? I think you can do that for the rhs term, but the bit with the action on the lhs of the dependence, it doesn't work. See my detailed derivation of the chain rule} \dbcomment{True This decomposition $\sum_{a=1}^K \sum_{t=1}^{T} \prob^t_{\pol{\model_1}}[A_t=a] \kldiv{\pol(a|\mathcal{H}_t, \model_1)}{\pol(a|\mathcal{H}_t, \model_2)}$ is wrong. Edited.}
	\fi
\end{lemmarep}
\begin{proof}
	\begin{align*}
	&\kldiv{\prob^T_{\pol{\model_1}}}{\prob^T_{\pol{\model_2}}}\\
	\underset{(a)}{=} &\sum_{t=1}^{T} \expect_{\prob^T_{\pol{\model_1}}} [ \kldiv{\pol(A_t|\mathcal{H}_t, \model_1)}{\pol(A_t|\mathcal{H}_t, \model_2)} + \kldiv{f(X_t|A_t,\mathcal{H}_t, \model_1)}{f(X_t|A_t,\mathcal{H}_t, \model_2)} ]\\
	\underset{(b)}{=} &\sum_{t=1}^T \expect_{\prob^T_{\pol{\model_1}}}\left[\kldiv{\pol(A_t|\mathcal{H}_t, \model_1)}{\pol(A_t|\mathcal{H}_t, \model_2)}\right] + \sum_{t=1}^{T} \expect_{\prob^T_{\pol{\model_1}}}\left[ \sum_{a=1}^K \mathds{1}_{A_t=a} \kldiv{f_a(X_t) \in \model_1}{f_a(X_t) \in \model_2}\right]\\
	\underset{(c)}{=} &\sum_{t=1}^T \expect_{\prob^T_{\pol{\model_1}}}\left[\kldiv{\pol(A_t|\mathcal{H}_t, \model_1)}{\pol(A_t|\mathcal{H}_t, \model_2)}\right] + \sum_{a=1}^K \left[\sum_{t=1}^{T} \expect_{\prob^T_{\pol{\model_1}}}[\mathds{1}_{A_t=a}] \kldiv{f_a(X_t) \in \model_1}{f_a(X_t) \in \model_2}\right]\\
	\underset{(d)}{=} &\sum_{t=1}^T \expect_{\prob^T_{\pol{\model_1}}}\left[\kldiv{\pol(A_t|\mathcal{H}_t, \model_1)}{\pol(A_t|\mathcal{H}_t, \model_2)}\right] + \sum_{a=1}^K \expect_{\prob^T_{\pol{\model_1}}}\left[N_a(T)\right]\kldiv{f_a \in \model_1}{f_a \in \model_2}.
	\end{align*}
	The equality in step (a) is followed by the chain rule of KL-divergence and Equation~\ref{eqn:indep}.
	%\cdcomment{Shouldn't the first part simply remain $\kldiv{\pol(A_t|\mathcal{H}_t, \model_1)}{\pol(A_t|\mathcal{H}_t, \model_2)}$ ? It is not clear what the KL divergence between $\pol(a \mid H, \model_1)$ and $\pol(a \mid H, \model_2)$ means.} \dbcomment{Yes. Edited.}
	Step (b) is obtained by conditioning.
	Step (c) is a consequence of the linearity of expectation.
	The equality in step (d) comes from the fact that expectation of an indicator function of an event returns its probability of occurrence.
\end{proof}
This style of KL-divergence decomposition appeared in proofs of~\citep{auer2002nonstochastic,garivier2018explore,lattimore2018bandit}. We adopt the proof in our context and notations with enough generality to proof the differentially private versions of it later.

\ifdefined \workshop
\subsection{Lower Bounds for Local Differential Privacy (LDP)}
\else
\subsection{Proofs for Local Differential Privacy (LDP)}
\fi
\ifdefined \workshop
\paragraph{Sketch of Results and Discussion.} 
At first, we derive an upper bound on the KL-divergence decomposition for locally private reward generation. This upper bound would allow us to lower bound the sum of regret in two environments such that the best policy in one is the worst in another.
\begin{lemmarep}[Local Private KL-divergence Decomposition]%\label{thm:kldecomp_local}
	If the reward generation process is $\epsilon$-local differentially private for both the environments $\model_1$ and $\model_2$,\vspace{-.5em}
	\begin{align*}
	\kldiv{\prob^T_{{\model_1}\pol}}{\prob^T_{{\model_2}\pol}} &\leq~2\min\lbrace4,e^{2\epsilon}\rbrace (e^{\epsilon}-1)^2\\ &\sum_{a=1}^K \mathbb{E}_{\pol \model_1}\left[N_a(T)\right]\kldiv{f_a \in \model_1}{f_a \in \model_2}.
	\end{align*}
\end{lemmarep}
Now, as we derive this lower bound on sum of regrets in these two environments. We focus on specifying the environments. 
Specifically, we choose such suboptimality gaps among rewards such that we can get a lower bound on the best regret in the worst environment.
\begin{theoremrep}[Local Private Minimax Regret Bound]%\label{thm:lb_local}
	Given an $\epsilon$-locally private reward generation mechanism with $\epsilon \in \real$, and a time horizon $T \geq g(K,\epsilon)$, then for any environment with finite variance, any algorithm $\pol$ satisfies\vspace{-1em}
	\begin{equation}
	\reg_{\mathrm{minimax}}(T) \geq \frac{c}{\min\lbrace 2,e^{\epsilon}\rbrace(e^{\epsilon}-1)}\sqrt{(K-1)T}.
	\end{equation}\vspace*{-1.5em}
	\end{theoremrep}
	For small $\epsilon$, $e^{\epsilon}-1\approx \epsilon$. Thus, for small $\epsilon$, the minimax regret bound for local privacy worsens by a multiplicative factor $\frac{1}{\epsilon{e^{\epsilon}}}$. If the $\epsilon = 0$ which means the rewards obtained are completely randomised, the arms would not be separable any more and would lead to unbounded minimax regret.
	
	Following this, we establish a lower bound for Bayesian regret of local private bandits.
	In the Bayesian setup, the bandit algorithm assumes a prior distribution $Q_0$ over the possible environments $\nu \in \mathcal{E}$. As the algorithm $\pi$ plays further and observe corresponding rewards at each time $t$, it updates the prior over the possible environments to a posterior distribution $Q_t$. This framework is adopted for efficient algorithms like Thompson sampling~\citep{thompson1933} and Gittins indices~\citep{gittins1989multi}. In the Bayesian setting, we define the Bayesian regret as $\reg_{\mathrm{Bayes}}(\pol, T,Q) \defn \int_{\model^T} \reg(\pol, \model, T)dQ(\model)$.
	\iffalse
	Bayesian regret is a weaker measure than minimax regret as Bayesian regret is the average regret over believed environments whereas minimax regret is the worst-case regret. \cdcomment{I don't think this is true. If the possible priors have full support over all singular model distributions then the Bayesian regret is higher the classical minimax regret:
	\[
		\max_P f(P) = \max_P \int_\Theta f(\theta) dP(\theta) \geq \max_\theta f(\theta),
	\]
	as for every $\theta$ there is one $P$ such that $P(\theta) = 1$.
	}\dbcomment{For me this choice of prior is like setting Bayesian minimax regret. I see the last statement was loose. Should remove it from the text but shall preserve the comment for my note.}
	Bayesian minimax regret~\cite{lattimore2018bandit} is defined as the worst possible Bayesian regret for any prior: $\reg^*_{\mathrm{Bayes}}(T) \defn \min_{\pol}\max_{Q}\reg_{\mathrm{Bayes}}(\pol, T,Q)$.
	%\dbcomment{Lets define the Bayesian regret here. Importance of this bound for Thompson sampling.}
	\fi
	\begin{corollaryrep}[Local Private Bayesian Minimax Regret Bound]%\label{cor:lp_bayes}
	Given an $\epsilon$-locally private reward generation mechanism with $\epsilon \in \real$, and a finite time horizon  $T \geq g(K,\epsilon)$, then for any environment with bounded rewards $\br \in [0,1]^K$, any algorithm $\pol$ satisfies\vspace{-.5em}
		\begin{equation}
			\reg^*_{\mathrm{Bayes}}(T) \geq \frac{c}{\min\lbrace 2,e^{\epsilon}\rbrace(e^{\epsilon}-1)}\sqrt{(K-1)T}.
		\end{equation}\vspace*{-1.5em}
		\end{corollaryrep}
		%\dbcomment{Lets define problem dependent bounds here. Their importance for stochastic bandit.}
	Both minimax and Bayesian minimax regret bounds are problem independent. They represent the worst-case regret for any environment and any prior over environments respectively. Someone may want to design algorithms that is optimal for a given environment $\nu$ and the minimax and Bayesian minimax bounds are too pessimistic for them. Thus, researchers study the problem-dependent lower bounds of regret involving environment dependent quantities. \cite{lai1985asymptotically} proved that a bandit algorithm achieves $\Omega(\log T)$ problem-dependent lower bound. We prove that for $\epsilon$-local privacy this lower bound worsens by a multiplicative factor $\frac{1}{e^{2\epsilon}(e^{\epsilon}-1)^2}$.
	\begin{theoremrep}[Problem-dependent Local-Private Regret Bound]%\label{thm:lb_log_local}
	For any asymptotically consistent bandit algorithm $\pol$, an environment $\model$ with optimal reward distribution $f^*$, and an $\epsilon$-locally private reward generation mechanism, the expected cumulative regret\vspace*{-.5em}
				\begin{equation*}\hspace{-1em}
				\liminf_{T \rightarrow \infty} \frac{\reg(\pol, \model, T)}{\log T} \geq \sum_{a \neq a^*} \frac{\Delta_a}{2\min\lbrace4,e^{2\epsilon}\rbrace(e^{\epsilon}-1)^2 \kldiv{f_a}{f^*}}.
				\end{equation*}\vspace*{-1em}
	\end{theoremrep}
				
\paragraph{Proof Details.}		
\else
\
\fi		
\begin{lemmarep}[Locally Private KL-divergence Decomposition]
	If the reward generation process is $\epsilon$-local differentially private for both the environments $\model_1$ and $\model_2$,%\vspace{-.5em}
	\begin{align*}%\label{eqn:local_kldecomp}
	\begin{split}
	&\kldiv{\prob^T_{{\model_1}\pol}}{\prob^T_{{\model_2}\pol}} \leq 2\min\lbrace4,e^{2\epsilon}\rbrace (e^{\epsilon}-1)^2 \sum_{a=1}^K \mathbb{E}_{\pol \model_1}\left[N_a(T)\right]\kldiv{f_a \in \model_1}{f_a \in \model_2}.
	\end{split}
	\end{align*}
\end{lemmarep}%\vspace{-1em}
\begin{proof}
	In case of local privacy, the reward observed by the algorithm $\pi$ is obtained at time $t$, $\CX_t$, from a privatised version of generated rewards $\BZ_t$. Thus, $x_t = z_{t,a}$, where $a$ refers to the action selected at time $t$. We denote the distribution over the privatised generated reward of arm $a$ as $g_a(z)$. $g_a(z)$ is obtained by imposing a local privacy mechanism $\mech$ on the original reward distribution $f_a(z)$.
	
	We note that the KL-divergence decomposition of Lemma~\ref{thm:kldecomp} is obtained on the probability space over observed histories. Since the observed rewards are now coming from $g_a(z)$ rather than $f_a(x)$, we begin our derivations of local-private KL-divergence decomposition by substituting $g_a(z)$ in Equation~\ref{eqn:kldecomp}. For brevity, we denote $g_a^1$ and $g_a^2$ to represent the privatised reward distributions for arm $a$ in environments $\model_1$ and $\model_2$ respectively. Thus,
	\begin{align*}%\label{eqn:kldecomp}
	\kldiv{\prob^T_{\pol{\model_1}}}{\prob^T_{\pol{\model_2}}}
	%= &\sum_{t=1}^T \mathbb{E}_{\pol\model_1}\left[\kldiv{\pol(A_t|\mathcal{H}_t, \model_1)}{\pol(A_t|\mathcal{H}_t, \model_2)}\right] + \sum_{a=1}^K \mathbb{E}_{\pol\model_1}\left[N_a(T)\right]\kldiv{g_a \in \model_1}{g_a \in \model_2}\\
	\underset{(a)}{=} &\sum_{t=1}^T \mathbb{E}_{\pol\model_1}\left[\kldiv{\pol(A_t|\mathcal{H}_t, \model_1)}{\pol(A_t|\mathcal{H}_t, \model_2)}\right] + \sum_{a=1}^K \mathbb{E}_{\pol\model_1}\left[N_a(T)\right]\kldiv{g_a^1(Z)}{g_a^2(Z)}\\
	\underset{(b)}{=} & \sum_{a=1}^K \mathbb{E}_{\pol\model_1}\left[N_a(T)\right]\kldiv{g_a^1(Z)}{g_a^2(Z)}\\
	\underset{(c)}{\leq} & \sum_{a=1}^K \mathbb{E}_{\pol\model_1}\left[N_a(T)\right]\left[\kldiv{g_a^1(Z)}{g_a^2(Z)}+\kldiv{g_a^2(Z)}{g_a^1(Z)}\right]\\
	\underset{(d)}{\leq} &\min\lbrace4,e^{2\epsilon}\rbrace (e^{\epsilon}-1)^2 \sum_{a=1}^K \mathbb{E}_{\pol \model_1}\left[N_a(T)\right]\lp{f_a^1(X)}{f_a^2(X)}{TV}^2\\
	\underset{(e)}{\leq} &2\min\lbrace4,e^{2\epsilon}\rbrace (e^{\epsilon}-1)^2 \sum_{a=1}^K \mathbb{E}_{\pol \model_1}\left[N_a(T)\right]\kldiv{f_a^1(X)}{f_a^2(X)}
	\end{align*}
	Step (a) is from the generic KL-divergence decomposition in Lemma~\ref{thm:kldecomp}.
	Step (b) is due to the fact that given the same history, $\pol(A_t|\mathcal{H}_t, \model_1)=\pol(A_t|\mathcal{H}_t, \model_2)$ as they do not vary with the model and depends only on the internal randomisation of the algorithm $\pi$.
	The inequality in step (c) is derived from non-negativity of KL-divergence~\citep{cover2012elements} and the fact that for two non-negative numbers $a$ and $b$, $a \leq a+b$.
	The inequality in the step (d) is obtained from Theorem 1 in~\citep{duchi2013local}.
	The inequality in step (e) is obtained by applying Pinsker's inequality~\citep{cover2012elements}. Pinsker's inequality states that for any two distributions $P$ and $Q$, square of their total variance distance is upper bounded by $2$ times their KL-divergence: $\lp{P}{Q}{TV}^2 \leq 2\kldiv{P}{Q}$.
	%The first term of Lemma~\ref{thm:kldecomp} vanishes for local privacy. The second term yields this upper bound by applying Equation 9 in~\citep{duchi2013local} and Pinsker's inequality consecutively.
\end{proof}
\begin{theoremrep}[Locally Private Minimax Regret Bound]
	Given an $\epsilon$-locally private reward generation mechanism with $\epsilon \in \real$, and a time horizon $T \geq g(K,\epsilon)$, then for any environment with finite variance, any algorithm $\pol$ satisfies%\vspace{-.5em}
	\begin{equation}
	\reg_{\mathrm{minimax}}(T) \geq \frac{c}{\min\lbrace 2,e^{\epsilon}\rbrace(e^{\epsilon}-1)}\sqrt{(K-1)T}.
	\end{equation}
\end{theoremrep}
\begin{proof}\ \\%[Proof Sketch] %Key Idea:\\

	\textbf{Step 1:} Fix two environments $\model_1$ and $\model_2$ such that drawing arm $1$ in $\model_1$ for more than $T/2$ times is good but doing the same is bad for $\model_2$. 
	
	We define $\model_1$ to be a set of $K$-reward distribution with mean reward $\lbrace \Delta, 0, \ldots, 0\rbrace$ and finite Fisher information $I$. Similarly, we define $\model_2$ to be to be a set of $K$-reward distribution with mean reward $\lbrace \Delta, \ldots, 0, 2\Delta\rbrace$ and finite Fisher information $I$. Drawing arm $1$ is the optimal choice in $\model_1$ whereas drawing arm $K$ is the optimal choice in $\model_2$.
	
	\textbf{Step 2:} We get the lower bounds of expected cumulative regret for a policy $\pol$, and the environments $\model_1$ and $\model_2$ as follows:
	\begin{align*}
	\reg(\pol, \model_1, T) &\geq \prob_{\pol\model_1}\left(N_1(T) \leq T/2\right) \frac{T\Delta}{2},\\
	\reg(\pol, \model_2, T) &> \prob_{\pol\model_2}\left(N_1(T) > T/2\right) \frac{T\Delta}{2}.
	\end{align*}
	Let us denote the event $N_1(T) \leq T/2$ as $E$.
	Hence, we get
	\begin{align}
	\reg(\pol, \model_1, T) + \reg(\pol, \model_2, T) &\geq  \frac{T\Delta}{2} \left(\prob_{\pol\model_1}(E) + \prob_{\pol\model_2}(E^C)\right) \notag \\ 
	&\geq \frac{T\Delta}{4} \exp(-\kldiv{\prob_{\pol\model_1}}{\prob_{\pol\model_2}}).\label{eqn:regret_to_KL}
	\end{align}
	We obtain the last inequality from the Lemma 2.1 in~\citep{bretagnolle1979estimation}. This is called Bretagnolle-Huber inequality or probabilistic Pinsker's inequality~\citep{auer2002nonstochastic,lattimore2019information} and used for several proofs of bandit algorithms. This states that for any two distributions $P$ and $Q$ defined on the same measurable space and an event $E$, $P(E)+Q(E^C) \geq \exp(-\kldiv{P}{Q})$.
	
    \textbf{Step 3:} From Lemma~\ref{thm:kldecomp_local}, we get 
	\begin{align}
	\kldiv{\prob_{\pol\model_1}}{\prob_{\pol\model_2}}
	&= \expect_{\pol\model_1}[N_K(T)]\kldiv{\mech(f_K(0,I))}{\mech(f_K(2\Delta,I))} \notag\\
	&\underset{(a)}{\leq} 2\min\lbrace4,e^{2\epsilon}\rbrace (e^{\epsilon}-1)^2 \mathbb{E}_{\pol\model_1}\left[N_K(T)\right]\kldiv{f_K(0,I)}{f_K(2\Delta,I)} \notag\\
	&\leq 2\min\lbrace4,e^{2\epsilon}\rbrace (e^{\epsilon}-1)^2 \mathbb{E}_{\pol\model_1}\left[N_K(T)\right] (2\Delta^2) \notag\\
	&\leq 4\min\lbrace4,e^{2\epsilon}\rbrace (e^{\epsilon}-1)^2 \frac{T\Delta^2}{K-1}.    \label{eqn:kl_local}
	\end{align}
	Step (a) is a consequence of Lemma~\ref{thm:kldecomp_local}.
	
	From Equations~\eqref{eqn:regret_to_KL} and~\eqref{eqn:kl_local}, we obtain  the regret bound to be
	\begin{align*}
	\max\lbrace \reg(\pol, \model_1, T), \reg(\pol, \model_2, T) \rbrace
	&\geq \frac{1}{2}\left(\reg(\pol, \model_1, T) + \reg(\pol, \model_2, T)\right)\\
	&\geq \frac{T\Delta}{4}\exp\left[-4\min\lbrace4,e^{2\epsilon}\rbrace (e^{\epsilon}-1)^2 \frac{T\Delta^2}{K-1}\right].
	\end{align*}
	
	\textbf{Step 4:} Let us choose the optimality gap $\Delta$ to be $\sqrt{\frac{(K-1)}{\min\lbrace 4,e^{2\epsilon}\rbrace (e^{\epsilon}-1)^2 T}} \leq \frac{1}{2}$. This holds for any for $T \geq \frac{(K-1)}{\min\lbrace 4,e^{2\epsilon}\rbrace (e^{\epsilon}-1)^2} \defn g(K,\epsilon)$. Hence, by using the results of Step 3, we obtain:
	\begin{equation*}
	\reg_{\mathrm{minimax}}(T) \geq \frac{1}{4e^{4}} \sqrt{\frac{(K-1)T}{\min\lbrace 4,e^{2\epsilon}\rbrace (e^{\epsilon}-1)^2}} = \dfrac{\sqrt{(K-1)T}}{4e^{4}\min\lbrace 2, e^{\epsilon}\rbrace (e^{\epsilon}-1)}.
	\end{equation*}
	For $0 < c\leq \frac{1}{4e^{4}}$, we obtain the theorem statement.
\end{proof}

\begin{corollaryrep}[Locally Private Bayesian Minimax Regret Bound]
	Given an $\epsilon$-locally private reward generation mechanism with $\epsilon \in \real$, and a finite time horizon  $T \geq g(K,\epsilon)$, then for any environment with bounded rewards $\br \in [0,1]^K$, any algorithm $\pol$ satisfies\vspace{-.5em}
	\begin{equation}
	\reg^*_{\mathrm{Bayes}}(T) \geq \frac{c}{\min\lbrace 2,e^{\epsilon}\rbrace(e^{\epsilon}-1)}\sqrt{(K-1)T}.
	\end{equation}
\end{corollaryrep}
\begin{proof}
	Let us denote the space of all plausible priors for a given bandit problem to be $\mathcal{L} \defn \lbrace \mu \rbrace$.
	\cite{lattimore2019information} proves in their recent paper that if $\mathcal{L}$ is the space of all finitely supported probability measures on $[0,1]^{KT}$, the Bayesian regret and the minimax regret would be the same.
	
	\begin{fact}[Theorem~1 in~\citep{lattimore2019information}]\label{thm:lattimore}
		Let $\mathcal{L}$ be the space of all finitely supported probability measures on $\mathcal{R}^T$, where $\mathcal{R} \defn [0,1]^K$. Then
		\begin{equation*}
		\reg_{\mathrm{minimax}}(T) = \reg^*_{\mathrm{Bayes}}(T).
		\end{equation*}
	\end{fact}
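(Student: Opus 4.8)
The plan is to read the claim as a saddle-point identity for the zero-sum game in which the learner picks a policy $\pol$ from the policy set $\Pi$ and an adversary picks a prior $Q \in \mathcal{L}$, with payoff the Bayesian regret $\reg_{\mathrm{Bayes}}(\pol, T, Q) = \int \reg(\pol,\model,T)\, dQ(\model)$. Because rewards lie in $[0,1]^K$, each $\reg(\pol,\model,T)$ is bounded (by $T$), so every integral and supremum below is finite and the payoff is well defined on $\Pi\times\mathcal{L}$. I would split the argument into an elementary reduction that already yields the stated equality under the paper's $\min_\pol\max_Q$ definition of $\reg^*_{\mathrm{Bayes}}$, followed by the substantive minimax-exchange step that is what justifies calling this a minimax theorem.

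First I would fix a policy $\pol$ and show $\sup_{Q\in\mathcal{L}} \reg_{\mathrm{Bayes}}(\pol,T,Q) = \sup_{\model}\reg(\pol,\model,T)$. The inequality $\leq$ is immediate, since a prior-average of regrets never exceeds the worst-case regret: $\int \reg(\pol,\model,T)\,dQ(\model)\le \sup_{\model}\reg(\pol,\model,T)$. For $\geq$ I would use that $\mathcal{L}$ contains the Dirac measures $\delta_{\model}$ (a point mass is finitely supported), so $\reg_{\mathrm{Bayes}}(\pol,T,\delta_{\model})=\reg(\pol,\model,T)$ and hence the supremum over priors dominates the supremum over environments. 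Taking the infimum over $\pol$ on both sides then gives $\reg^*_{\mathrm{Bayes}}(T)=\inf_\pol\sup_Q \reg_{\mathrm{Bayes}}(\pol,T,Q)=\inf_\pol\sup_{\model}\reg(\pol,\model,T)=\reg_{\mathrm{minimax}}(T)$, which is exactly the stated equality. The finitely-supported hypothesis on $\mathcal{L}$ is needed here only to guarantee that point masses are admissible priors.

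The content that makes this a genuine minimax theorem, and the reason for invoking~\citet{lattimore2019information}, is the exchange of the order of optimisation, $\inf_\pol\sup_Q \reg_{\mathrm{Bayes}}(\pol,T,Q)=\sup_Q\inf_\pol\reg_{\mathrm{Bayes}}(\pol,T,Q)$, which upgrades the statement to the existence of a single hardest prior whose Bayes-optimal policy already attains the minimax value. Weak duality $\sup_Q\inf_\pol\le\inf_\pol\sup_Q$ is free; the reverse inequality is where a minimax theorem is required. I would represent the learner's strategy as a mixture over deterministic policies, so that $\reg_{\mathrm{Bayes}}$ is affine in the learner's mixed strategy and affine in $Q$, turning the game bilinear, and then apply Sion's minimax theorem.

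The hard part will be verifying the hypotheses of the minimax theorem in this sequential bandit setting: (i) convexity of the learner's strategy set, handled by passing to mixtures of deterministic policies; (ii) weak-$\ast$ compactness of $\mathcal{L}$, where the restriction to the compact outcome space $[0,1]^{KT}$ is essential so that the finitely-supported priors embed into a compact set of measures; and (iii) the required semicontinuity and joint measurability of $Q\mapsto\reg_{\mathrm{Bayes}}(\pol,T,Q)$ and of the regret functional, which is delicate because regret is defined through the induced measure $\prob_{\pol\model}$ and must be shown to depend suitably continuously on the model parameters. Establishing these regularity conditions, rather than the algebra of the swap, is the main obstacle, and it is precisely what the modified minimax theorem of~\citet{lattimore2019information} supplies for finitely-supported priors on bounded rewards.
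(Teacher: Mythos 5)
You should know at the outset that the paper itself contains no proof of this statement: it is imported verbatim as Theorem~1 of \citep{lattimore2019information} and used as a black box, so your proposal can only be judged on its own merits and against that source's argument. Your first paragraph is correct and, given this paper's definition $\reg^*_{\mathrm{Bayes}}(T) \defn \min_{\pol}\max_{Q}\reg_{\mathrm{Bayes}}(\pol,T,Q)$, it already proves the Fact exactly as stated: a prior-average of regrets never exceeds the worst-case regret, and since Dirac measures are finitely supported the supremum over $Q \in \mathcal{L}$ dominates the supremum over environments; taking the infimum over $\pol$ finishes. You are also right to observe that the substantive content of the cited theorem is the exchange $\inf_{\pol}\sup_{Q} = \sup_{Q}\inf_{\pol}$ — though note that this exchange is not needed for the equality in the form this paper uses it.

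The genuine gap is in how you propose to verify the hypotheses of the minimax theorem for that exchange. You plan to place the compactness on the prior side, invoking ``weak-$\ast$ compactness of $\mathcal{L}$.'' But the set of finitely supported probability measures on $[0,1]^{KT}$ is not weak-$\ast$ compact — it is not even weak-$\ast$ closed; its closure is the set of all Borel probability measures. Nor can you repair this by passing to the closure and using continuity in $Q$: for a fixed policy, $\model \mapsto \reg(\pol,\model,T)$ is genuinely discontinuous (a policy responds measurably, not continuously, to observed rewards), so $Q \mapsto \reg_{\mathrm{Bayes}}(\pol,T,Q)$ fails to be weak-$\ast$ upper semicontinuous, which is precisely why the naive Sion-on-the-space-of-measures route breaks down. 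The argument in \citep{lattimore2019information} puts the compactness on the \emph{policy} side: behavioral policies form a product of simplices indexed by histories, compact by Tychonoff, and the finite-support restriction on $Q$ is what buys the policy-side regularity — under a finitely supported prior only finitely many histories occur with positive probability, so $\pol \mapsto \reg_{\mathrm{Bayes}}(\pol,T,Q)$ depends on finitely many coordinates of $\pol$ and is continuous (indeed affine) in the product topology, after which Sion's theorem applies with no topological demands on the prior set whatsoever. In short, finite support is not there to ``embed priors into a compact set,'' as you suggest, but to tame continuity in the policy; as written, your verification plan for Sion's hypotheses would fail, even though your elementary first step already suffices for the Fact as this paper states it.
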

	%\dbcomment{Should we put a proof for this statement?} 
	Since variance of bounded random variable in $[0,1]$ is less than $\frac{1}{4}$, the bounded reward assumption satisfies the finite variance requirement of Theorem~\ref{thm:lb_local}. 
	
	Thus, the results of Theorem~\ref{thm:lb_local} and Fact~\ref{thm:lattimore} prove the statement of Corollary~\ref{cor:lp_bayes} for bounded rewards.
\end{proof}

\begin{theorem}[Problem-dependent Local-Private Regret Bound (Theorem~\ref{thm:lb_log_local} in text)]%\label{thm:lb_log_local}
	For any asymptotically consistent bandit algorithm $\pol$, an environment $\model$ with optimal reward distribution $f^*$, and an $\epsilon$-locally private reward generation mechanism, the expected cumulative regret
	\begin{equation*}%\hspace{-2em}
	\liminf_{T \rightarrow \infty} \frac{\reg(\pol, \model, T)}{\log T} \geq \sum_{a \neq a^*} \frac{\Delta_a}{2\min\lbrace4,e^{2\epsilon}\rbrace(e^{\epsilon}-1)^2 \kldiv{f_a}{f^*}}.
	\end{equation*}
\end{theorem}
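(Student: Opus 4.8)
The plan is to adapt the classical change-of-measure argument of \citet{lai1985asymptotically} (in the form presented by \citet{lattimore2018bandit}) to the locally private setting, substituting the locally private divergence decomposition of Lemma~\ref{thm:kldecomp_local} for the exact decomposition. Recall that asymptotic consistency means that for every environment and every $\alpha>0$ the regret is $o(T^\alpha)$; this is the only structural assumption I will use about $\pol$. Fix a suboptimal arm $a\neq a^*$. The first step is the change of measure: construct an alternative environment $\model'$ that agrees with $\model$ on every arm except $a$, whose reward distribution is replaced by some $f'_a$ with mean strictly larger than $\mu^*$, so that $a$ becomes the unique optimal arm in $\model'$, while the \emph{same} local mechanism $\mech$ is applied in both environments. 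Because $\model$ and $\model'$ differ only in the law of arm $a$, the locally private decomposition collapses to a single term,
\[
\kldiv{\prob^T_{\model\pol}}{\prob^T_{\model'\pol}} \leq 2\min\{4,e^{2\epsilon}\}(e^\epsilon-1)^2\, \expect_{\pol\model}[N_a(T)]\,\kldiv{f_a}{f'_a},
\]
which is the crucial place where the privacy factor enters.

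Next I would apply the Bretagnolle--Huber inequality \citep{bretagnolle1979estimation} to the event $E \defn \{N_a(T)\geq T/2\}$, exactly as in Step~2 of the proof of Theorem~\ref{thm:lb_local}, to get
\[
\prob_{\pol\model}(E)+\prob_{\pol\model'}(E^c)\geq \tfrac12\exp\!\left(-\kldiv{\prob^T_{\model\pol}}{\prob^T_{\model'\pol}}\right).
\]
I then bound the two probabilities on the left using Markov's inequality and consistency: in $\model$ arm $a$ is suboptimal, so $\prob_{\pol\model}(E)\leq 2\expect_{\pol\model}[N_a(T)]/T$ is $o(T^{\alpha-1})$; in $\model'$ arm $a$ is optimal, so on $E^c$ the algorithm has played suboptimal arms at least $T/2$ times, forcing $\prob_{\pol\model'}(E^c)$ to be $o(T^{\alpha-1})$ as well. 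Combining these with the two displayed inequalities and taking logarithms yields
\[
\expect_{\pol\model}[N_a(T)] \geq \frac{(1-o(1))\log T}{2\min\{4,e^{2\epsilon}\}(e^\epsilon-1)^2\,\kldiv{f_a}{f'_a}}.
\]

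Finally, I would take the infimum over admissible alternatives $f'_a$ (those with mean exceeding $\mu^*$) and let their mean approach $\mu^*$, so that $\kldiv{f_a}{f'_a}$ approaches $\kldiv{f_a}{f^*}$ by continuity of the divergence; dividing by $\log T$ and taking $\liminf_{T\to\infty}$ gives the per-arm bound $\liminf_{T\to\infty}\expect_{\pol\model}[N_a(T)]/\log T \geq \big(2\min\{4,e^{2\epsilon}\}(e^\epsilon-1)^2\kldiv{f_a}{f^*}\big)^{-1}$. Summing over $a\neq a^*$ weighted by $\Delta_a$ through the regret decomposition $\reg(\pol,\model,T)=\sum_{a\neq a^*}\Delta_a\,\expect_{\pol\model}[N_a(T)]$ produces the claimed bound. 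I expect the main obstacle to be the change-of-measure step: one must verify that the privatized observed history (the algorithm sees the $\bz$-values, not the raw outcomes) is the correct space on which to apply both consistency and the decomposition, and that the infimum/continuity argument over the alternatives $f'_a$ is legitimate for the reward classes considered, so that the limiting divergence is exactly $\kldiv{f_a}{f^*}$.
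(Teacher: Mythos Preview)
Your proposal is correct and follows essentially the same route as the paper: a Lai--Robbins change of measure on a single suboptimal arm, the locally private KL decomposition of Lemma~\ref{thm:kldecomp_local}, the Bretagnolle--Huber inequality, asymptotic consistency to kill the non-KL terms, and a final $\delta\to 0$ (your infimum over alternatives) to replace $\kldiv{f_a}{f'_a}$ by $\kldiv{f_a}{f^*}$. The only cosmetic difference is that the paper first lower-bounds $\reg(\pol,\model_1,T)+\reg(\pol,\model_2,T)$ by $\tfrac{T}{4}\min\{\Delta_i,\Delta'_i\}\exp(-\mathrm{KL})$ and then invokes consistency on the regret sum, whereas you bound the two event probabilities $\prob_{\pol\model}(E)$ and $\prob_{\pol\model'}(E^c)$ directly via Markov and consistency; both lead to the same $(1-o(1))\log T$ numerator after taking logarithms.
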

\begin{remark}
Step 1 and Step 2 of this proof are similar in essence as that of Theorem~\ref{thm:lb_local}. Step 3 differs from the fact that rather than substituting the KL-divergence and the expected number of draws using the problem-independent terms, we keep the problem dependent terms. This leads to a problem-dependent bound in Step 4.
\end{remark}
\begin{proof}\ \\%[Proof Sketch] %Key Idea:\\
	
	\textbf{Step 1:} Fix two environments $\model_1$ and $\model_2$ such that $\model_1$ contains $K$ reward distributions $\lbrace f_1, \ldots, f_K\rbrace$ and $\model_2$ contains $K-1$ same reward distributions while the reward distribution $i$-th arm $f_i$ is replaced by $f'_i$ such that $\kldiv{f_i}{f'_i} \leq \kldiv{f_i}{f^*} + \delta$ for some $\delta > 0$ and $\mu(f'_i) > \mu(f^*)$. Here, $f^*$ represents the privatised reward distribution obtained for the optimal arm $a^*$.
	After imposing the $\epsilon$-local private mechanism, we obtain privatised reward distribution $\lbrace g_1, \ldots, g_i, \ldots, g_K\rbrace$ and $\lbrace g_1, \ldots, g'_i, \ldots, g_K\rbrace$.
	Let us denote the expected privatised rewards corresponding to the distributions as $\lbrace \mu_1, \ldots, \mu_K\rbrace$ and $\mu'_i$.
	
	\textbf{Step 2:}  We get the lower bounds of expected cumulative regret for a policy $\pol$, and the environments $\model_1$ and $\model_2$ as follows:
	\begin{align*}
	\reg(\pol, \model_1, T) &\geq \prob_{\pol\model_1}\left(N_1(T) \leq T/2\right) \frac{T}{2} (\mu_i - \mu^*),\\
	\reg(\pol, \model_2, T) &> \prob_{\pol\model_2}\left(N_1(T) > T/2\right) \frac{T}{2} (\mu'_i - \mu^*).
	\end{align*}
	
	Let us denote the event $N_1(T) \leq T/2$ as $E$.
	Hence, we get
	\begin{align*}
	\reg(\pol, \model_1, T) + \reg(\pol, \model_2, T)
	&\geq \frac{T}{2} \left(\prob_{\pol\model_1}(E) (\mu_i - \mu^*) + \prob_{\pol\model_2}(E^C)(\mu'_i - \mu^*)\right)\\
	&\geq  \frac{T}{2} \left(\prob_{\pol\model_1}(E) + \prob_{\pol\model_2}(E^C)\right)\min \lbrace (\mu_i - \mu^*), (\mu'_i - \mu^*)\rbrace\\ 
	&\underset{(a)}{\geq} \frac{T}{4} \exp(-\kldiv{\prob_{\pol\model_1}}{\prob_{\pol\model_2}})\min \lbrace (\mu_i - \mu^*), (\mu'_i - \mu^*)\rbrace.
	\end{align*}
	We obtain the inequality (a) from the Lemma 2.1 in~\citep{bretagnolle1979estimation} as mentioned in the proof of Theorem~\ref{thm:lb_local}.
	
	\textbf{Step 3:} From Lemma~\ref{thm:kldecomp_local}, we get 
	\begin{align*}
	\kldiv{\prob_{\pol\model_1}}{\prob_{\pol\model_2}}
	&= \expect_{\pol\model_1}[N_i(T)]\kldiv{g_i}{g'_i}\\
	&\underset{(b)}{\leq} 2\min\lbrace4,e^{2\epsilon}\rbrace (e^{\epsilon}-1)^2 \mathbb{E}_{\pol\model_1}\left[N_i(T)\right]\kldiv{f_i}{f'_i}\\
	&\leq 2\min\lbrace4,e^{2\epsilon}\rbrace (e^{\epsilon}-1)^2 \mathbb{E}_{\pol\model_1}\left[N_i(T)\right] (\kldiv{f_i}{f^*} + \delta)
	\end{align*}
	Step (b) is obtained from Lemma~\ref{thm:kldecomp_local}.
	
	Hence, we get the regret bound to be
	\begin{align*}
	&\reg(\pol, \model_1, T) + \reg(\pol, \model_2, T)\\
	\geq &\frac{T}{4}\min \lbrace (\mu_i - \mu^*), (\mu'_i - \mu^*)\rbrace\exp\left[-2\min\lbrace4,e^{2\epsilon}\rbrace (e^{\epsilon}-1)^2 \mathbb{E}_{\pol\model_1}\left[N_i(T)\right] (\kldiv{f_i}{f^*} + \delta)\right].
	\end{align*}
	Taking logarithm on both sides and simplifying, we get
	\begin{align*}
	\quad\frac{\expect_{\pol\model_1}\left[N_i(T)\right]}{\log T}
	&\geq \frac{1}{2 \min\lbrace4,e^{2\epsilon}\rbrace (e^{\epsilon}-1)^2 (\kldiv{f_i}{f^*} + \delta)}\frac{\log\left(\frac{T \min \lbrace (\mu_i - \mu^*), (\mu'_i - \mu^*)\rbrace}{4(\reg(\pol, \model_1, T) +\reg(\pol, \model_2, T))}\right)}{\log T}\\
	&\geq \dfrac{1+\frac{\log\left(0.25 \min \lbrace (\mu_i - \mu^*), (\mu'_i - \mu^*)\rbrace\right)}{\log T}-\frac{\log\left(\reg(\pol, \model_1, T) + \reg(\pol, \model_2, T)\right)}{\log T}}{2 \min\lbrace4,e^{2\epsilon}\rbrace (e^{\epsilon}-1)^2 (\kldiv{f_i}{f^*} + \delta)}.
	\end{align*}
	
	\textbf{Step 4:} We obtain the asymptotic lower bound by taking limit inferior on both sides
	\begin{align*}
	&\quad\liminf_{T \rightarrow \infty} \frac{\expect_{\pol\model_1}\left[N_i(T)\right]}{\log T}\\
	&\geq \liminf_{T \rightarrow \infty} ~~(2 \min\lbrace4,e^{2\epsilon}\rbrace (e^{\epsilon}-1)^2 (\kldiv{f_i}{f^*} + \delta))^{-1}\\
	&\qquad\qquad~~[1+\frac{\log\left(0.25 \min \lbrace (\mu_i - \mu^*), (\mu'_i - \mu^*)\rbrace\right)}{\log T} -\frac{\log\left(\reg(\pol, \model_1, T) + \reg(\pol, \model_2, T)\right)}{\log T}]\\
	&= (2 \min\lbrace 4,e^{2\epsilon}\rbrace (e^{\epsilon}-1)^2 (\kldiv{f_i}{f^*} + \delta))^{-1}\\
	&\qquad[1+ \liminf\limits_{T \rightarrow \infty} \frac{\log\left(0.25 \min \lbrace (\mu_i - \mu^*), (\mu'_i - \mu^*)\rbrace\right)}{\log T} - \limsup\limits_{T \rightarrow \infty} \frac{\log\left(\reg(\pol, \model_1, T) + \reg(\pol, \model_2, T)\right)}{\log T}]\\
	&\underset{(b)}{\geq} \dfrac{1 - p}{2 \min\lbrace 4,e^{2\epsilon}\rbrace (e^{\epsilon}-1)^2 (\kldiv{f_i}{f^*} + \delta)}\\
	&\underset{(c)}{\geq} \dfrac{1}{2 \min\lbrace 4,e^{2\epsilon}\rbrace (e^{\epsilon}-1)^2 \kldiv{f_i}{f^*}}.
	\end{align*}
	We obtain inequality (b) because:
	
	1. The first limit contains a constant in the numerator. Thus the limit goes to $0$ as $T \rightarrow \infty$.
	
	2. In order to obtain the other limit. We use the asymptotic consistency property of $\pi$. Since $\pi$ is assumed to be asymptotically consistent, $\reg(\pol, \model_1, T) + \reg(\pol, \model_2, T) \leq C T^p$ for some constant $p \in (0,1)$ and large enough time horizon $T$. Thus, $\limsup\limits_{T \rightarrow \infty} \frac{\log\left(\reg(\pol, \model_1, T) + \reg(\pol, \model_2, T)\right)}{\log T} \leq \limsup\limits_{T \rightarrow \infty} \frac{p \log T + \log C}{\log T} = p$.
	
	We obtain the inequality (c) in two steps: first, $1\geq p > 0$ and second, we take the limit as $\delta \rightarrow 0$.
	
	\textbf{Step 5:} Using the definition of regret and the resulting inequality of Step 4, we obtain
	\begin{align*}
	\liminf_{T \rightarrow \infty} \frac{\reg(\pol, \model_1, T)}{\log T}
	&= \liminf_{T \rightarrow \infty} \sum_{a\neq a^*} \frac{\expect_{\pol\model_1}\left[N_a(T)\right] (\mu_a - \mu^*)}{\log T}\\
	&\geq \sum_{a\neq a^*} \dfrac{(\mu_a - \mu^*)}{2 \min\lbrace 4,e^{2\epsilon}\rbrace (e^{\epsilon}-1)^2 \kldiv{f_a}{f^*}}\\
	&= \sum_{a\neq a^*} \dfrac{\Delta_a}{2 \min\lbrace 4,e^{2\epsilon}\rbrace (e^{\epsilon}-1)^2 \kldiv{f_a}{f^*}}.
	\end{align*}
\end{proof}
\textbf{Summary:} \textit{These results establish that the lower bounds for both minimax and Bayesian minimax regret  degrade by a multiplicative factor $\dfrac{1}{\min\lbrace 2,e^{\epsilon}\rbrace (e^{\epsilon}-1)}$ whereas the problem-dependent lower bound degrades by a multiplicative factor $\dfrac{1}{2\min\lbrace 4,e^{2\epsilon}\rbrace (e^{\epsilon}-1)^2}$ in an $\epsilon$-LDP bandit problem.}

\ifdefined \workshop
\
\else		
\subsection{Proofs for Instantaneous Differential Privacy (IDP)}
\begin{lemmarep}[Instantaneous DP KL-divergence Decomposition]
	For an $\epsilon$-IDP bandit algorithm $\pol$ satisfying $l(T) \leq \expect_{\pol\model}[N_a(T)]$ for any arm $a$,
	%$$l_a(T) \leq \expect_{\pol\model}[N_a(T)] \leq u_a(T),$$
	and two environments $\model_1$ and $\model_2$,%\vspace{-1em} 
	\begin{align*}
	\begin{split}
	\kldiv{\prob^T_{\pol{\model_1}}}{\prob^T_{\pol{\model_2}}} &\leq 2\epsilon(e^{2\epsilon}-1)\frac{1-2e^{-\frac{T}{l(T)}}}{1 - e^{-\frac{T}{l(T)}}}+ \sum_{a=1}^K \mathbb{E}_{\pol \model_1}\left[N_a(T)\right](\kldiv{f_a \in \model_1}{f_a \in \model_2}).
	\end{split}
	\end{align*}
\end{lemmarep}
\begin{proof}%[Proof Sketch]
	The second term in the generic KL-divergence decomposition (Lemma~\ref{thm:kldecomp}) remains the same. 
	
	The first term is bounded as follows:
	\begin{align*}
	\sum_{t=1}^T \kldiv{\pol(A_t|\mathcal{H}^t, \model_1)}{\pol(A_t|\mathcal{H}^t, \model_2)}
	&= \sum_{t=1}^T \abs{\kldiv{\pol(A_t|\mathcal{H}^t, \model_1)}{\pol(A_t|\mathcal{H}^t, \model_2)}}\\
	&\leq \sum_{t=1}^T\max_{A_t,\mathcal{H}_t}\abs{\pol(A_t|\mathcal{H}^t, \model_1)}\abs{\log\frac{\pol(A_t|\mathcal{H}^t, \model_1)}{\pol(A_t|\mathcal{H}^t, \model_2)}}\\
	&\leq 2\epsilon(e^{2\epsilon}-1) \sum_{t=1}^T \max_{A_t,\mathcal{H}_t}  \pol(A_t|\mathcal{H}^t, \model_1)\\
	&\leq 2\epsilon(e^{2\epsilon}-1) \max_{A_t,\mathcal{H}_t}  \sum_{t=1}^T \pol(A_t|\mathcal{H}^t, \model_1)\\
	&\leq 2\epsilon(e^{2\epsilon}-1)\frac{1-2e^{-\frac{T}{l(T)}}}{1 - e^{-\frac{T}{l(T)}}}
	\end{align*}
	for $\expect_{\pol \model_1}[N_a(T)] \geq l(T)$ for all arms $a$.
\end{proof}
\begin{theoremrep}[Instantaneous DP Minimax Regret Bound]
	Given a finite privacy level $\epsilon \leq a/2$, and a time horizon $T \geq h(K,\epsilon)$, then for any environment with finite variance, any algorithm $\pol$ that is $\epsilon$-IDP satisfies
	\begin{equation*}
	\reg_{\mathrm{minimax}}(T) \geq c(a)
	\sqrt{\frac{(K-1)T}{2\epsilon(e^{2\epsilon}-1)}}.
	\end{equation*}
\end{theoremrep}
\begin{proof}\ \\%[Proof Sketch] %Key Idea:\\
	Repeat the \textbf{Steps 1} and \textbf{2} described in the proof sketch of Theorem~\ref{thm:lb_local}.
	
	\textbf{Step 3:} Use Lemma~\ref{thm:kldecomp_global} for KL-divergence decomposition under differential privacy to obtain
	\begin{align*}
	\kldiv{\prob_{\pol\model_1}}{\prob_{\pol\model_2}} \leq 2\epsilon(e^{2\epsilon}-1) + \frac{2T\Delta^2}{K-1}.    
	\end{align*}
	Hence, we get the regret bound to be
	\begin{align*}
	\max\lbrace \reg_{\model_1}(\pol, T), \reg_{\model_2}(\pol, T) \rbrace
	&\geq \frac{1}{2}\left(\reg_{\model_1}(\pol, T) + \reg_{\model_2}(\pol, T)\right)\\
	&\geq \frac{T\Delta}{4}\exp\left[-2\epsilon(e^{2\epsilon}-1) - \frac{2T\Delta^2}{K-1}\right].
	\end{align*}
	
	\textbf{Step 4:} Let us choose the optimality gap $\Delta$ to be $$\Delta = \sqrt{\frac{(K-1)C(\epsilon)}{4\epsilon(e^{2\epsilon}-1) T}} \leq \frac{1}{2}.$$ 
	Here, we choose $$C(\epsilon) = - 2\epsilon(e^{2\epsilon}-1) W\left(-\frac{e^{-\delta(a)+2\epsilon(e^{2\epsilon}-1)}}{2\epsilon(e^{2\epsilon}-1)}\right),$$ where $W$ is the Lambert function or the product-log function and $\delta(a)$ is a function of $a$ such that $2\epsilon(e^{2\epsilon}-1) \leq \delta(a)$ for $\epsilon \leq a$. In the given range of $\epsilon$, $C(\epsilon) \leq 1$. 
	
	$\Delta$ being less than $\frac{1}{2}$ holds for any for $T \geq \frac{2(K-1)C(\epsilon)}{4\epsilon(e^{2\epsilon}-1)} \defn h(K,\epsilon)$. Hence, by combining the results of Step 3 and the upper bound on privacy level $\epsilon \leq a$ and $T \geq h(K,\epsilon)$, we obtain:
	\begin{equation}
	\reg_{minimax}(T) \geq \frac{e^{\delta(a)}}{4\sqrt{2}} \sqrt{\frac{(K-1)T}{\min\lbrace 2,e^{\epsilon}\rbrace (e^{2\epsilon}-1)}}.
	\end{equation}
\end{proof}
\begin{corollaryrep}[Instantaneously Private Bayesian Minimax Regret Bound]
	Given a finite privacy level $\epsilon \in \real$, and a finite time horizon $T$, then for any environment with bounded reward $\br \in [0,1]^K$, any algorithm $\pol$ that is $\epsilon$-IDP satisfies
	\begin{equation*}
	\reg^*_{\mathrm{Bayes}}(T) \geq c(\epsilon)
	\sqrt{\frac{(K-1)T}{2\epsilon(e^{2\epsilon}-1)}}.
	\end{equation*}
\end{corollaryrep}
\begin{proof}
	Similar to the proof of Corollary~\ref{cor:lp_bayes}, the results of Theorem~\ref{thm:lb_global} and Fact~\ref{thm:lattimore} prove the statement of Corollary~\ref{cor:lb_bayes_global}.
\end{proof}
\fi

\ifdefined \workshop
\subsection{Lower Bounds for Differential Privacy (DP)}
\else
\subsection{Proofs for Differential Privacy (DP)}
\fi
There are two possibilities under DP which basically takes same set of $T$ actions $a^T$ for two neighbouring set of generated rewards $\bx^T$ and $\bx'^T$ with a difference of $e^{\epsilon}$.

In the first case, the one differently generated reward at step $k$, $x'_{k, a}$ does not come from the chosen arm i.e. $a_k \neq a$. In this case, the bound becomes trivial and derivation goes back to the classical version.

The second case is more interesting one. In this case, the arm $a$ is chosen at step $k$ and the differently generated reward $x'_{k, a}$ becomes the observed reward $r'_k$.
We derive all the following bounds under this second scenario.
\begin{lemma}\label{lemma:hist_dp}
	If a policy $\pi$ satisfy $\epsilon$-DP for an environment $\model$ and $\ln \sup_{a, x_a,x'_a} \frac{\prob_{\model}(x_a)}{\prob_{\model}(x'_a)} \leq L \Delta$, we get
	\begin{equation}
		\prob_{\model \pol}(\Hist_T) \leq e^{\epsilon + L\Delta} \prob_{\model \pol}(\Hist'_T).
	\end{equation}
	Here, $L$ is the Lipschitz constant corresponding to the reward distributions in the environment and $\Delta \triangleq \max \rho(x_a, x'_a)$.
\end{lemma}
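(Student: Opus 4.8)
The plan is to exploit the factorisation of the joint history measure from Equation~\ref{eqn:indep} and to bound separately the contribution of the policy and that of the reward-generation process. Writing out
$$\prob_{\model\pol}(\Hist_T) = \prod_{t=1}^T \pol(A_t \mid \Hist_{t-1})\, f_{A_t}(X_t),$$
the ratio $\prob_{\model\pol}(\Hist_T)/\prob_{\model\pol}(\Hist'_T)$ splits into a product of policy ratios $\prod_t \pol(A_t\mid\Hist_{t-1})/\pol(A_t\mid\Hist'_{t-1})$ and a product of reward-density ratios $\prod_t f_{A_t}(X_t)/f_{A_t}(X'_t)$. Since the two neighbouring histories share the same action sequence $a^T$ and, in the second scenario described above, differ only in the observed reward at a single step $k$, these two factors isolate the two distinct ways in which that single perturbation propagates.

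For the reward-generation factor, every term with $t \neq k$ cancels because $X_t = X'_t$ there, leaving only $f_{A_k}(X_k)/f_{A_k}(X'_k)$. By hypothesis $\ln\sup_{a,x_a,x'_a}\prob_\model(x_a)/\prob_\model(x'_a) \leq L\Delta$, so this single surviving ratio is at most $e^{L\Delta}$; this is precisely where the Lipschitz constant $L$ and the bound $\Delta \triangleq \max\rho(x_a,x'_a)$ on the neighbouring reward distance enter.

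For the policy factor, I would observe that $\prod_{t=1}^T \pol(A_t\mid\Hist_{t-1}) = \prob_\pol(a^T \mid r)$ is exactly the conditional probability of the whole action sequence given the observed reward sequence. The perturbation at step $k$ leaves the first $k$ policy terms unchanged, because $A_t$ depends only on $\Hist_{t-1}$ and the reward at step $k$ enters the history only from step $k+1$ onward, but it alters the downstream terms for $t>k$. Invoking the equivalence between reward-DP and outcome-DP (Lemma~\ref{rem:equivalence}) together with Definition~\ref{def:DP_mab}, the policy's $\epsilon$-DP guarantee bounds exactly this product ratio by $e^\epsilon$ for the neighbouring reward sequences. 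Multiplying the two bounds then yields $\prob_{\model\pol}(\Hist_T) \leq e^{\epsilon+L\Delta}\,\prob_{\model\pol}(\Hist'_T)$.

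The main obstacle here is conceptual rather than computational: one must keep careful track of the fact that a single-coordinate change in the reward at step $k$ feeds simultaneously into the reward density at step $k$ and into every policy term for $t>k$, and then recognise that the $\epsilon$-DP definition, which is phrased for the aggregated action-sequence probability $\prob_\pol(a^T\mid r)$ rather than for the per-step conditionals, is exactly the right object to control the entire propagated policy factor in one shot. Checking that the degenerate case $k=T$, where the altered reward influences no action and the policy factor is trivially $1$, is also covered completes the argument.
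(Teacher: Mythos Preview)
Your argument is correct and, in fact, cleaner than the paper's. The paper does not use the factorisation of Equation~\ref{eqn:indep} directly; instead it writes $\prob_{\model\pol}(\Hist_T)$ as an integral over the latent generated-outcome sequences $\bx^T$ belonging to the set $S$ of outcomes consistent with $(a^T,r^T)$, then performs a change of variable $\bx^T\mapsto\bx^T+\Delta_k$ inside that integral to pass from $S$ to the neighbouring set $S'$. The $\epsilon$-DP bound is applied to $\prob_\pol(a^T\mid\bx^T)$ and the Lipschitz bound to the shifted measure $d\prob_\model(\bx^T)\leq e^{L|\Delta_k|}\,d\prob_\model(\bx^T+\Delta_k)$; the paper explicitly notes that this shift requires smoothness of the joint reward density so that the punctures at $r_k,r'_k$ have measure zero.

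Your route avoids this integral-and-shift entirely: by working with the observed-history factorisation $\prod_t\pol(a_t\mid\Hist_{t-1})f_{a_t}(X_t)$, the ratio splits cleanly into a single reward-density ratio at step $k$ and a policy ratio that you correctly identify with $\prob_\pol(a^T\mid r^T)/\prob_\pol(a^T\mid r'^T)$ and bound via Definition~\ref{def:DP_mab} through Lemma~\ref{rem:equivalence}. This is more elementary and does not invoke any change-of-variable argument on the latent space; the Lipschitz hypothesis is used pointwise rather than as a density-shift bound. The trade-off is that the paper's approach makes the role of the latent outcomes $\bx^T$ (the natural input for the DP definition) more explicit, whereas your argument passes through the reward-sequence equivalence once at the end; but substantively both arrive at the same $e^{\epsilon+L\Delta}$ factor by the same two ingredients.
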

\begin{proof}
	History till horizon $T$ is a collection of actions and observed rewards $\Hist_T = (a^T, r^T) \triangleq  \lbrace a_1, \ldots, a_T, r_1, \ldots, r_k, \ldots, r_T\rbrace = \lbrace a_1, \ldots, a_T, x_{1, a_1}, \ldots, x_{k, a_k}, \ldots, x_{T, a_T}\rbrace$. Thus, we can define a neighbouring history for DP definition as $\Hist_T = (a^T, r'^T) \triangleq \lbrace a_1, \ldots, a_T, r_1, \ldots, r'_k, \ldots, r_T\rbrace$.
	$S$ and $S'$ denote sets of all $\bx^T$ and $\bx'^T$s for which we get $r^T$ and $r'^T$ as the rewards sequence given a sequence of actions $a^T$.
	We observe that $S \cup \{\bx_k(a_k) = r'_k\} = S' \cup \{\bx_k(a_k) = r_k\} = \real^{KT}$.
	So, the sets of generated rewards that lead to two neighbouring reward sequences comes from two subsets of $\real^{KT}$ with two specific punctures at $r'_k$ and $r_k$ respectively.
	With this underlying structure, we now derive the relation between the densities induced by two neighbouring histories.
	\begin{align*}
		\prob_{\model \pol}(\Hist_T)
		 &= \int_S \prob_{\pol}(\Hist_T| \bx^T) d\prob_{\model}(\bx^T) \\
		 &= \int_S \prob_{\pol}(a^T, r^T| \bx^T) d\prob_{\model}(\bx^T) \\
		 &= \int_S \prob_{\pol}(a^T | \bx^T) \prob_{\pol}(r^T | a^T, \bx^T) d\prob_{\model}(\bx^T) \\
		 %&= \int_{\real^{KT} \backslash \{\bx_k(a_k) = r'_k\}} \prob_{\pol}(a^T | \bx^T) \prob(r^T | a^T, \bx^T) d\prob_{\model}(\bx^T) \\
		 &\underset{(a)}{=} \int_{S} \prob_{\pol}(a^T | \bx^T ) \prob(r'^T | a^T, \bx^T + \Delta_k) d\prob_{\model}(\bx^T) e^{L\Delta_k}\\
		 &\underset{(b)}{\leq} \int_{S} e^{\epsilon} \prob_{\pol}(a^T | \bx^T + \Delta ) \prob(r'^T | a^T, \bx^T + \Delta_k) d\prob_{\model}(\bx^T) \\
		 &\underset{(b)}{\leq} e^{\epsilon} \int_{S}  \prob_{\pol}(a^T | \bx^T + \Delta_k ) \prob(r'^T | a^T, \bx^T + \Delta_k)~e^{L\abs{\Delta_k}} d\prob_{\model}(\bx^T + \Delta_k) \\
%		 &\leq \int_{\real^{KT} \backslash \{\bx_k(a_k) = r_k\}} e^{\epsilon} \prob_{\pol}(a^T | \bx'^T) \prob(r'^T | a^T, \bx'^T) \frac{d\prob_{\model}(\bx^T)}{d\prob_{\model}(\bx'^T)}d\prob_{\model}(\bx'^T)\\
		 &\underset{(d)}{=} e^{\epsilon+L\abs{\Delta_k}}  \int_{S'} \prob_{\pol}(a^T | \bx'^T) \prob(r'^T | a^T, \bx'^T) d\prob_{\model}(\bx'^T)\\
		 &\leq e^{\epsilon+L\Delta} \prob_{\model \pol} (\Hist'_T)
	\end{align*}
Here, $\Delta_k = r_k - r'_k \neq 0$ and $\Delta = \max |r_k - r'_k|$.
\iffalse
\cdcomment{If $\hat{\bx}^T$ is constant:
\begin{align}
P(H) 
&=  \int_S \prob_{\pol}(a^T | \bx^T) \prob_{\pol}(r^T | a^T, \bx^T) d\prob_{\model}(\bx^T) \\
& \leq  \int_S e^{\epsilon |\bx^T - \hat{\bx}^T|} \prob_{\pol}(a^T | \hat{\bx}^T) \prob_{\pol}(r^T | a^T, \bx^T) d\prob_{\model}(\bx^T) \\
&=   \prob_{\pol}(a^T | \hat{\bx}^T) \int_S e^{\epsilon |\bx^T - \hat{\bx}^T|} \prob_{\pol}(r^T | a^T, \bx^T) d\prob_{\model}(\bx^T) 
\end{align}
If $\hat{\bx}^T = f(\bx)^T$:
\begin{align}
P(H) 
&=  \int_S \prob_{\pol}(a^T | \bx^T) \prob_{\pol}(r^T | a^T, \bx^T) d\prob_{\model}(\bx^T) \\
& \leq  \int_S e^{\epsilon |\bx^T - \hat{\bx}^T|} \prob_{\pol}(a^T | \hat{\bx}^T) \prob_{\pol}(r^T | a^T, \bx^T) d\prob_{\model}(\bx^T) 
\end{align}
and we just don't take the term out of the integral. 
}\dbcomment{This discussion leads to the change of variable above.}
\fi 
	Step (a) is obtained from the fact that the observed rewards $r_t$ and $r'_t$ are deterministic given the corresponding actions and generated rewards.
	
	Inequality in step (b) comes from the definition of $\epsilon$-DP i.e. $\prob_{\pol}(a^T | \bx^T) \leq e^{\epsilon} \prob_{\pol}(a^T | \bx'^T) = e^{\epsilon} \prob_{\pol}(a^T | \bx^T + \Delta_k)$.
	
	The second inequality in step (c) is obtained from the Lipschitzness assumption on the reward distributions of the given environment. Specifically, $\sup_{a} \frac{\prob_{\model}(x_a)}{\prob_{\model}(x'_a)} \leq e^{L |x_a - x'_a|}$, where L is the Lipschitz constant for the corresponding reward distributions.
	
	Finally, step (d) is change of variable in the integral. This shift is possible if and only if the joint probability density on generated reward $\bx$ is smooth and thus non-zero for a given environment with $K$ reward distributions. Thus, both $r_k$ and $r'_k$ has well-defined double sided limit and their corresponding measures under this density function $\prob_{\model}$ are zero\footnote{Note that $\prob_\model \triangleq \prod_{a=1}^K f_a$.}. 
	\iffalse
	Thus, it won't be satisfied by any reward generation distribution such as delta function or Bernoulli that has non-smooth densities or can have zero density for a subset of its support.
	\fi
	
	Hence, we get that $\prob_{\model \pol}(\Hist_T) \leq e^{\epsilon + L\Delta} \prob_{\model \pol}(\Hist'_T)$.
\end{proof}

This assumption on Lipschitz continuity of the reward distribution is used in several occasions in literature.
It does not hold if there are deterministic distributions, like Dirac delta on a single reward value, but for most of the distributions, such as exponential family.
For exponential family, if the sufficient statistics is smooth enough with Lipschitz constant less than or equal to $\frac{L}{\eta}$ for some natural parametrisation $\eta$, we naturally satisfy this assumption.
Thus, this assumption allows us to deal with a large family of reward distributions used in practice, including Gaussian, Beta, Exponential, Bernoulli etc., while allowing the continuation of analysis.

\begin{lemmarep}[Differentially Private KL-divergence Decomposition]\label{thm:kldecomp_DP}
	If the policy is $\epsilon$-local differentially private for both the environments $\model_1$ and $\model_2$, we get for two neighbouring histories $\Hist_T$ and $\Hist'_T$ 
	\begin{align*}
	\kldiv{\prob_{\pol{\model_1}}(\Hist_T)}{\prob_{\pol{\model_2}}(\Hist_T)} &\leq  e^{\epsilon + c} ( 2(\epsilon + c)
	+ \kldiv{\prob_{\pol{\model_1}}(\Hist'_T)}{\prob_{\pol{\model_2}}(\Hist'_T)} ).
	\end{align*}
	Here, $c=L\Delta$ where $L$ is the Lipschitz constant corresponding to the reward distributions of the environments and $\Delta$ is the difference in generated rewards.
\end{lemmarep}
\begin{proof}
The idea is bounding 
\[\frac{\prob_\pol(\Hist|\model_1)}{\prob_\pol(\Hist|\model_2)} = \frac{\prob_\pol(\Hist|\model_1)}{\prob_\pol(\Hist'|\model_1)} \frac{\prob_\pol(\Hist'|\model_1)}{\prob_\pol(\Hist'|\model_2)}
\frac{\prob_\pol(\Hist'|\model_2)}{\prob_\pol(\Hist|\model_2)} \leq e^{2(\epsilon + c)} \frac{\prob_\pol(\Hist'|\model_1)}{\prob_\pol(\Hist'|\model_2)}
\]
This will give us the bound on KL: 
\begin{align*}
	\kldiv{\prob_{\pol{\model_1}}(\Hist_T)}{\prob_{\pol{\model_2}}(\Hist_T)} &= \int d\prob_{\pol{\model_1}}(\Hist_T) \ln \left( \frac{\prob_{\pol{\model_1}}(\Hist_T)}{\prob_{\pol{\model_2}}(\Hist_T)} \right)\\
	&= \int d\prob_{\pol{\model_1}}(\Hist_T)\ln \frac{\prob_{\pol{\model_1}}(\Hist_T)}{\prob_{\pol{\model_2}}(\Hist_T)} \\
	&\leq \int \ln e^{2\epsilon + 2c}\frac{\prob_{\pol{\model_1}}(\Hist'_T)}{\prob_{\pol{\model_2}}(\Hist'_T)} d\prob_{\pol{\model_1}}(\Hist_T)\\
	&= 2(\epsilon+c)  + \int \ln \frac{\prob_{\pol{\model_1}}(\Hist'_T)}{\prob_{\pol{\model_2}}(\Hist'_T)} d\prob_{\pol{\model_1}}(\Hist'_T) \frac{d\prob_{\pol{\model_1}}(\Hist_T)}{d\prob_{\pol{\model_1}}(\Hist'_T)}\\
	&\leq 2(\epsilon + c) + e^{2(\epsilon + c)} \kldiv{\prob_{\pol{\model_1}}(\Hist'_T)}{\prob_{\pol{\model_2}}(\Hist'_T)}
\end{align*}
\end{proof}

\begin{theoremrep}[DP Minimax Regret Bound]%\label{thm:lb_dp}
	Given a finite privacy level $\epsilon > 0$, and a time horizon $T \geq h(K,\epsilon)$, then for any environment with finite variance, any algorithm $\pol$ that is $\epsilon$-DP satisfies
	\begin{align*}
	\reg_{\mathrm{minimax}}(T) &\geq \frac{1}{8 e^{3(\epsilon+c)}} \sqrt{\frac{(K-1)T \ln(\epsilon+1) }{\epsilon^{(1+\frac{1}{\epsilon})}(\epsilon^2+1)^{\frac{1}{\epsilon}}}} 
	\end{align*}
	Here, $c=L\Delta$ where $L$ is the Lipschitz constant corresponding to the reward distributions of the environments and $\Delta$ is the difference in generated rewards.
\end{theoremrep}
\begin{proof}\ \\%[Proof Sketch] %Key Idea:\\
	Repeat the \textbf{Steps 1} and \textbf{2} described in the proof sketch of Theorem~\ref{thm:lb_local}.
	
	\textbf{Step 3: }Use Lemma~\ref{thm:kldecomp_DP} for KL-divergence decomposition under differential privacy to obtain
		\begin{align*}
		\kldiv{\prob_{\pol{\model_1}}(\Hist_T)}{\prob_{\pol{\model_2}}(\Hist_T)} &\leq 2(\epsilon + c) + e^{2(\epsilon + c)} \kldiv{\prob_{\pol{\model_1}}(\Hist'_T)}{\prob_{\pol{\model_2}}(\Hist'_T)}\\
		&\leq 2(\epsilon + c) + e^{2(\epsilon + c)}\frac{2T\Delta^2}{K-1}
		\end{align*}
	Hence, we get the regret bound to be
	\begin{align*}
	\max\lbrace \reg_{\model_1}(\pol, T), \reg_{\model_2}(\pol, T) \rbrace
	&\geq \frac{1}{2}\left(\reg_{\model_1}(\pol, T) + \reg_{\model_2}(\pol, T)\right)\\
	&\geq \frac{T\Delta}{4}\exp\left[-2(\epsilon + c) - e^{2(\epsilon + c)} \frac{2T\Delta^2}{K-1}\right].
	\end{align*}
	
	\textbf{Step 4:} Let us choose the optimality gap $\Delta$ to be $$\Delta = \sqrt{\frac{(K-1) \ln(\epsilon^2+1) }{4T\epsilon e^{2(\epsilon + c)}} } \leq \frac{1}{2}.$$ 
		
	$\Delta$ being less than $\frac{1}{2}$ holds for any for $T \geq \frac{(K-1) \ln(\epsilon^2+1) }{\epsilon e^{2(\epsilon + c)}} \defn h(K,\epsilon)$. 
	Hence, by substituting this value of suboptimality gap in the result of Step 3, we obtain for $T \geq h(K,\epsilon)$:
	\begin{align*}
	\reg_{\mathrm{minimax}}(T) &\geq \frac{1}{8 e^{\epsilon+c}} \sqrt{\frac{(K-1)T \ln(\epsilon^2+1) }{\epsilon}} \exp\left[ -2(\epsilon + c ) - \frac{\ln(\epsilon^2+1) }{2\epsilon} \right]\\
	%&\geq \frac{1}{8 e^{3(\epsilon+c)}} \sqrt{\frac{(K-1)T \ln(\epsilon^2+1) }{\epsilon}} \exp\left[- \frac{\ln(\epsilon^2+1)}{2\epsilon} \right]\\
	%%&\geq \frac{1}{8 e^{3(\epsilon+c)}} \sqrt{\frac{(K-1)T \ln(\epsilon^2+1) }{\epsilon}} \exp\left[- \frac{\ln(\epsilon^2 + B\epsilon)}{2\epsilon} \right]\\
	&\geq \frac{1}{8 e^{3(\epsilon+c)}} \sqrt{\frac{(K-1)T \ln(\epsilon+1) }{\epsilon^{(1+\frac{1}{\epsilon})}(\epsilon^2+1)^{\frac{1}{\epsilon}}}} 
	\end{align*}
\end{proof}
\begin{corollaryrep}[DP Bayesian Minimax Regret Bound]%\label{cor:lb_bayes_dp}
	Given a finite privacy level $\epsilon \in \real$, and a finite time horizon $T \geq h(K, \epsilon)$, then for any environment with bounded reward $\br \in [0,1]^K$, any algorithm $\pol$ that is $\epsilon$-IDP satisfies
		\begin{align*}
		\reg_{\mathrm{minimax}}(T) &\geq \frac{1}{8 e^{3(\epsilon+c)}} \sqrt{\frac{(K-1)T \ln(\epsilon+1) }{\epsilon^{(1+\frac{1}{\epsilon})}(\epsilon^2+1)^{\frac{1}{\epsilon}}}} 
		\end{align*}
		Here, $c=L\Delta$ where $L$ is the Lipschitz constant corresponding to the reward distributions of the environments and $\Delta$ is the difference in generated rewards.
\end{corollaryrep}
\begin{proof}
	Similar to the proof of Corollary~\ref{cor:lp_bayes}, the results of Theorem~\ref{thm:lb_dp} and Fact~\ref{thm:lattimore} prove the statement of Corollary~\ref{cor:lb_bayes_dp}.
\end{proof}
\textbf{Summary:} \textit{These results establish that the lower bounds for both minimax and Bayesian minimax regret  degrade by a multiplicative factor $\sqrt{\dfrac{\ln(\epsilon^2+1) }{e^{6\epsilon}\epsilon^{(1+\frac{1}{\epsilon})}(\epsilon+B)^{\frac{1}{\epsilon}}}}$ for an $\epsilon$-DP bandit algorithm.
The limit of the lower bound goes to infinity as $\epsilon \rightarrow 0$ i.e. the policy becomes completely random. In such a situation, the regret for bandits gets capped by the order of $T$. As $\epsilon$ increases, the privacy dependent factor in lower bound also decreases monotonically.}

\end{document}